\documentclass[11pt]{article}

\usepackage[preprint]{acl}
\usepackage{times}
\usepackage{latexsym}
\usepackage{amsmath,amssymb,amsfonts}
\usepackage{bbm}
\usepackage{algorithm}
\usepackage{algpseudocode}
\usepackage{amsmath,amssymb,amsthm}
\usepackage{graphicx}
\usepackage{subcaption}
\newtheorem{theorem}{Theorem}
\newtheorem{lemma}{Lemma}
\DeclareMathOperator*{\argmax}{arg\,max}

\newtheorem{remark}{Remark}
\usepackage{times}
\usepackage{latexsym}
\usepackage{booktabs}
\usepackage{multirow}
\usepackage{makecell}
\usepackage{tabularx}
\usepackage{threeparttable}
\usepackage{adjustbox}
\usepackage{booktabs}
\usepackage[table]{xcolor}
\usepackage{booktabs}
\usepackage{tabularx}
\newcolumntype{C}[1]{>{\centering\arraybackslash}p{#1}}
\newcolumntype{C}[1]{>{\centering\arraybackslash}p{#1}}

\usepackage{xcolor}
\newcolumntype{C}[1]{>{\centering\arraybackslash}p{#1}}
\newcommand{\cmark}{\textcolor{teal!80!green}{\ding{51}}}

\usepackage{pifont}
\usepackage{array}
\usepackage{mathtools}

\newcommand{\MethodTag}[2]{%
  {\setlength{\fboxsep}{2pt}\colorbox{#1}{\strut #2}}%
}
\newcommand{\Only}{\MethodTag{gray!12}{LLM-only}}
\newcommand{\RAG}{\MethodTag{blue!10}{LLM-RAG}}
\newcommand{\UCB}{\MethodTag{orange!15}{UCB-Cascaded}}
\newcommand{\Balance}{\MethodTag{red!12}{BalanceRAG}}
\usepackage{microtype}
\usepackage{amsmath}
\usepackage{amssymb}
\usepackage{amsthm}
\usepackage{algorithm}
\usepackage{bm}
\usepackage[most]{tcolorbox}
\usepackage{xcolor}
\usepackage{graphicx}

\usepackage[most]{tcolorbox}
\usepackage{xcolor}
\usepackage{graphicx}
\usepackage{pifont}

\definecolor{ExampleBorder}{RGB}{80,80,80}
\definecolor{OnlyGreen}{RGB}{226,245,232}
\definecolor{RAGBlue}{RGB}{225,238,255}
\definecolor{BothOrange}{RGB}{255,240,220}
\definecolor{CheckGreen}{RGB}{30,140,70}
\usepackage{enumitem}
\renewcommand{\cmark}{\textcolor{CheckGreen}{\ding{51}}}

\tcbset{
  qualitativebox/.style={
    enhanced,
    width=\linewidth,
    colback=white,
    colframe=ExampleBorder,
    boxrule=0.8pt,
    arc=2pt,
    left=7pt,
    right=7pt,
    top=5pt,
    bottom=5pt,
    before skip=5pt,
    after skip=5pt,
  },
  onlyhelpbox/.style={
    qualitativebox,
    colback=OnlyGreen!35,
  },
  raghelpbox/.style={
    qualitativebox,
    colback=RAGBlue!35,
  },
  bothcorrectbox/.style={
    qualitativebox,
    colback=BothOrange!35,
  }
}

\newcommand{\exampletitle}[1]{%
  \vspace{-2pt}
  \noindent{\small\textbf{#1}}
  \vspace{2pt}
}

\newcommand{\examplesep}{%
  \vspace{2pt}
  \noindent\textcolor{gray}{\dotfill}
  \vspace{2pt}
}
\usepackage{booktabs}
\usepackage{multirow}
\usepackage{graphicx}
\usepackage[table]{xcolor}
\usepackage{colortbl}
\definecolor{HeaderGray}{HTML}{F2F2F2}
\definecolor{GroupBlue}{HTML}{EAF2FB}
\definecolor{GainBlue}{HTML}{DCEBFA}
\providecommand{\Only}{\textsc{LLM-only}}
\providecommand{\RAG}{\textsc{LLM-RAG}}
\providecommand{\UCB}{\textsc{UCB-Cascaded}}
\providecommand{\Balance}{\textsc{BalanceRAG}}

\definecolor{BalanceShade}{HTML}{F3F6FB}
\newcommand{\balcell}[1]{\cellcolor{BalanceShade}{#1}}
\usepackage[T1]{fontenc}

\usepackage[utf8]{inputenc}

\usepackage{microtype}

\usepackage{inconsolata}

\usepackage{graphicx}

\title{BalanceRAG: Joint Risk Calibration for Cascaded Retrieval-Augmented Generation}

\author{
 \textbf{Zijun Jia\textsuperscript{1}},
 \textbf{Yuanchang Ye\textsuperscript{3}},
 \textbf{Sen Jia\textsuperscript{2}},
 \textbf{Yiyao Qian\textsuperscript{4}},
 \textbf{Haoning Wang\textsuperscript{1}},\\
 \textbf{Baojie Chen\textsuperscript{1}},
 \textbf{Diyin Tang\textsuperscript{1}},
 \textbf{Jinsong Yu\textsuperscript{1,*}},
 \textbf{Zhiyuan Wang\textsuperscript{4,*}}
\\
\\
 \textsuperscript{1}Beihang University,\\
  \textsuperscript{2}Shenzhen Institute of Advanced Technology,
\\
 \textsuperscript{3}Zhejiang University of Finance $\&$ Economics,\\
 \textsuperscript{4}University of Electronic Science and Technology of China\\
 \small{
   \textbf{Correspondence:} 
   \texttt{yujs@buaa.edu.cn, yhzywang@gmail.com}
 }
}

\begin{document}
\maketitle

\begin{abstract}
Large language models (LLMs) can enhance factuality via retrieval-augmented generation (RAG), but applying RAG to every query is unnecessary when the model-only answer is reliable. 
This motivates cascaded RAG: each query is first handled by an LLM-only branch, escalated to a RAG fallback only if the primary branch is uncertain, and abstained from when neither branch is sufficiently trustworthy. 
However, calibrating such cascades stage by stage may be conservative, since the final utility depends on joint uncertainty thresholding of LLM-only and RAG. 
In this work, we develop \textbf{BalanceRAG} to certify threshold pairs at a target risk level. 
Given uncertainty scores from the two branches, BalanceRAG frames each threshold pair as an operating point on a two-dimensional lattice and identifies safe operating points using sequential graphical testing. 
This enables risk-adaptive threshold calibration, controlling the system-level error rate among accepted points, while retaining more examples. 
Furthermore, BalanceRAG extends to multi-risk calibration, allowing retrieval usage to be bounded together with the selection-conditioned risk. 
Experiments on three open-domain question answering (QA) benchmarks across multiple LLM backbones demonstrate that BalanceRAG meets prescribed risk levels, preserves higher coverage and more accepted correct examples, and reduces unnecessary retrieval calls compared with always-on RAG.
\end{abstract}

\section{Introduction}
\label{sec:Introduction}

Large language models (LLMs) have been widely deployed into real-world open-ended question answering (QA) scenarios~\citep{pandit2025medhallu,duan-etal-2025-guidellm}, but they still often produce fluent yet factually incorrect responses~\citep{wang2025word,pmlr-v235-huang24x}. 
Retrieval-augmented generation (RAG) can mitigate this issue by grounding model output in external evidence, improving factuality on knowledge-intensive tasks~\citep{lewis2020retrieval,karpukhin2020dense,asai2023self,jiang2023active}. 
However, always-on retrieval can be unnecessary and costly, since retrieval and long-context generation increase latency, memory use, and token consumption relative to direct LLM inference~\citep{wang2024searching,li2024retrieval}. 

\begin{figure}
\vspace{2mm}
    \centering
    \includegraphics[width=1\linewidth]{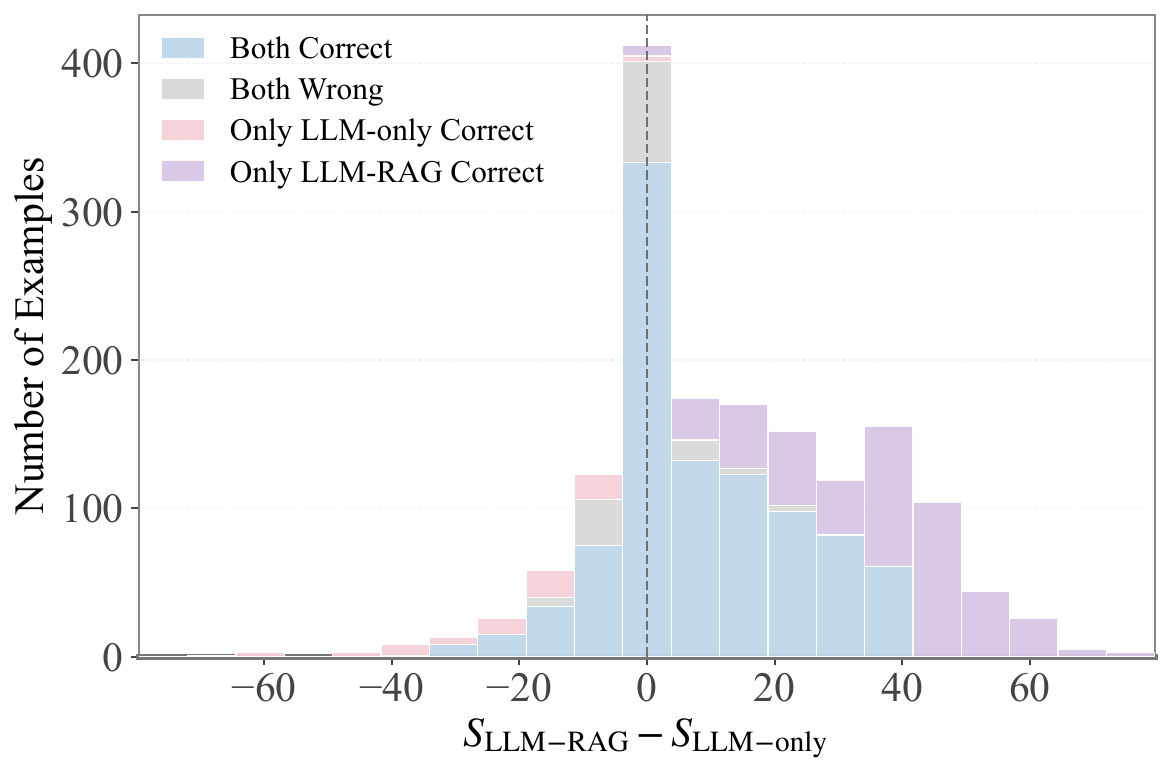}
    \vspace{-8mm}
    \caption{Distribution of the per-example score differences between RAG and LLM-only. \(S_{\mathrm{LLM\text{-}RAG}}\) and \(S_{\mathrm{LLM\text{-}only}}\) are the similarity scores between each path's prediction and the ground-truth answer. The x-axis reports \(S_{\mathrm{LLM\text{-}RAG}} - S_{\mathrm{LLM\text{-}only}}\), with positive values favoring RAG and negative values favoring LLM-only, while the y-axis reports the number of examples. Colors distinguish whether both branches are correct, both are wrong, or only one branch is correct.}
    \label{fig:intro}
    \vspace{-5mm}
\end{figure}

\begin{figure*}[t]
    \centering
    \includegraphics[width=1\linewidth]{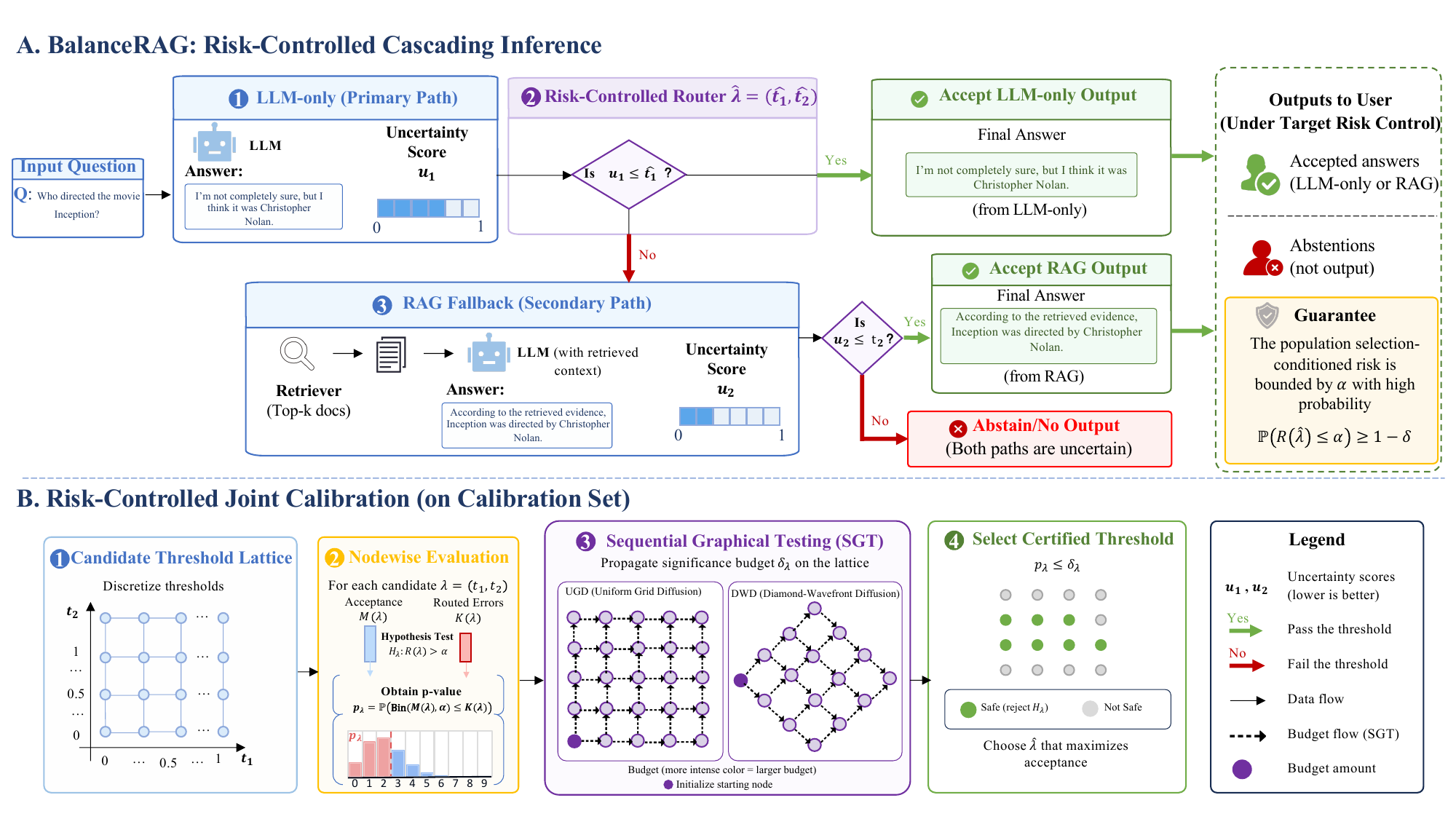}
    \vspace{-8mm}
    \caption{Overview of BalanceRAG: risk-controlled cascading inference (A) and joint threshold calibration via Sequential Graphical Testing (B). $\alpha$ is the user-specified risk level (upper bound).}
    \label{fig:overview}
    \vspace{-5mm}
\end{figure*}



Figure~\ref{fig:intro} shows that LLM-only and RAG have similar correctness scores for many queries, suggesting that retrieval mainly adds overhead when the model-only answer is already reliable. 
At the same time, a non-negligible fraction of examples are answered correctly only with retrieval. 
Thus, retrieval is neither uniformly necessary nor safely removable~\citep{mallen2023not}. 
A desirable system should keep easy examples on the LLM-only path, escalate uncertain queries to RAG, and abstain when neither path is reliable enough. 
Prior adaptive RAG frameworks follow this principle by routing queries with uncertainty or complexity signals~\citep{wang2025skewroute,jeong2024adaptive}, but they are mostly heuristic policies optimized for average accuracy without finite-sample guarantees at a user-specified level. 
This leaves a deployment-critical question: \emph{after adaptively combining LLM-only and RAG outputs, how can we control the error rate of the final accepted answers?}


To address this gap, we propose \textbf{BalanceRAG}, a training-free framework for risk-controlled cascaded RAG. 
As shown in Figure~\ref{fig:overview}, given uncertainty scores from the LLM-only and RAG paths, BalanceRAG first attempts the LLM-only answer, invokes the RAG fallback only when the LLM-only is uncertain, and abstains if the fallback is also unreliable. 
The router is governed by two uncertainty thresholds, and BalanceRAG calibrates their joint operating point rather than tuning them stage by stage. 
Stage-wise calibration can be valid in cascaded systems~\citep{jung2024trust}, but it may be overly conservative as the reliability and utility of the final routing cascade depend on the system-level threshold choices for both branches.

Specifically, BalanceRAG formulates joint calibration as a multiple testing problem over a two-dimensional threshold lattice, where each threshold pair defines a candidate operating point, for which we test the null hypothesis that the system-level selection-conditioned risk exceeds $\alpha$. 
Rejecting this null certifies the corresponding pair as safe. 
Although a Bonferroni correction provides valid error control~\citep{bauer1991multiple}, it is overly conservative for this structured lattice: neighboring threshold pairs typically induce highly correlated routing decisions, and safe operating points often form contiguous regions. 
To exploit this structure, BalanceRAG performs Sequential Graphical Testing (SGT)~\citep{https://doi.org/10.1002/sim.3495}, which sequentially propagates significance budget across the lattice and reallocates unused budget from certified nodes to their successors. 
The final router then selects, among all certified threshold pairs, the one with the largest acceptance rate, while preserving high-probability control of the system-level selection-conditioned error rate~\citep{wang2025coin,wang2025lec}. 
Furthermore, we extend BalanceRAG to multi-risk calibration, enabling simultaneous control of answer error and retrieval usage.

Our contributions are threefold. First, to the best of our knowledge, BalanceRAG is the first framework to study finite-sample risk control for cascaded LLM/RAG routing, shifting adaptive retrieval from heuristic confidence estimation to statistically certified reliability. 
Second, we formulate joint threshold calibration as multiple testing over a two-dimensional lattice and adopt SGT to certify high-utility operating points. 
Third, we establish high-probability control of the system-level selection-conditioned error rate and extend the framework to multi-risk calibration.

\section{Related Work}

\noindent \textbf{\textit{Adaptive Retrieval.}}
Adaptive RAG aims to avoid unnecessary retrieval by routing queries based on empirical signals. Previous studies explore when RAG is needed by comparing parametric and non-parametric memory~\citep{mallen2023not}, or route queries using complexity estimation and learned policies~\citep{jeong2024adaptive,tang2025mbaragbanditapproachadaptive}. 
Other methods trigger retrieval during generation based on token-level confidence, self-reflection, or information needs~\citep{jiang2023active,asai2023self,su2024dragindynamicretrievalaugmented}, while recent approaches further exploit retrieval-score statistics and agentic search~\citep{wang2025skewroute,li2025searcho1agenticsearchenhancedlarge}. These methods reduce retrieval usage while preserving average answer quality, but they are mainly heuristic routing policies without statistical guarantees. BalanceRAG is orthogonal to these signals: they can serve as uncertainty estimates, while our calibration layer certifies the final LLM-RAG cascade at a prescribed selection-conditioned risk level.

\noindent \textbf{\textit{Risk Control in Selective Prediction.}}
Conformal prediction~\citep{wang-etal-2024-conu,wang-etal-2025-sconu,wang2025sample,tan2025conformal,jia2025coverage} and risk control~\citep{angelopoulos2025conformalriskcontrol,li2026set} provide finite-sample guarantees for task-specific risk targets. 
Recent work extends risk control to selected foundation-model outputs~\citep{gui2024conformalalignmentknowingtrust}, AI-assigned labeling~\citep{huang2026modelagnosticselectivelabelingprovable}, cascaded LLM judges~\citep{jung2024trust}, and selective QA or routing systems~\citep{wang2025coin,wang2025lec}. These works provide the statistical foundation for reliable selection, but they do not investigate joint risk calibration for cascaded LLM-only/RAG routing. 
BalanceRAG targets this setting by certifying threshold pairs on a two-dimensional routing lattice, reducing conservativeness compared with stage-wise or Bonferroni-style calibration while preserving finite-sample risk control.
\section{Methodology}
\label{sec:method}

\subsection{Problem Formulation}
\label{subsec:problem_formulation}

Given a prompt \(x\), BalanceRAG considers two inference branches: an LLM-only branch and an RAG fallback branch. Branch \(b\in\{1,2\}\) returns an answer \(\hat y_b(x)\) together with an uncertainty score \(u_b(x)\), where smaller values indicate higher confidence. Given a threshold pair \(\lambda=(t_1,t_2)\), the router first accepts the LLM-only answer if \(u_1(x)\le t_1\). Otherwise, it invokes the RAG fallback and accepts \(\hat y_2(x)\) if \(u_2(x)\le t_2\). If both checks fail, the system abstains by returning \(\varnothing\). 
Thus, the final output \(\hat y_\lambda(x)\in\{\hat y_1(x),\hat y_2(x),\varnothing\}\) is determined by the two thresholds jointly.

Let \(A(y^\star,\hat{y})\in\{0,1\}\) indicate whether a generated answer \(\hat y\) is admissible with respect to the ground truth \(y^\star\). We define the acceptance indicator as
\vspace{-0.2cm}
\begin{equation}
\label{eq:selection_indicator}
S_\lambda(x)=\mathbbm{1}[\hat{y}_\lambda(x)\neq \emptyset].
\end{equation}
The accepted-error indicator is
\vspace{-0.2cm}
\begin{equation}
\label{eq:error_indicator}
Z_\lambda(x,y^\star)
=
S_\lambda(x)\left(1-A(y^\star,\hat{y}_\lambda(x))\right).
\end{equation}
Here, \(Z_\lambda(x,y^\star)=1\) means that the router outputs an incorrect answer rather than abstaining. The population risk of interest is the selection-conditioned error rate:
\vspace{-0.2cm}
\begin{equation}
\label{eq:risk_def}
R(\lambda)=\mathbb{P}\left(Z_\lambda(X,Y^\star)=1\mid S_\lambda(X)=1\right).
\end{equation}
Given a target risk level \(\alpha\) and error level \(\delta\), our goal is to select a threshold pair \(\hat{\lambda}\) with high acceptance while satisfying
\vspace{-0.2cm}
\begin{equation}
\label{eq:goal}
\mathbb{P}\!\left(R(\hat{\lambda})\le\alpha\right)\ge 1-\delta .
\end{equation}

\subsection{Joint Calibration with SGT}
\label{subsec:joint_threshold_calibration}

BalanceRAG calibrates the composed LLM-only/RAG router directly. Let \(\Lambda=\{\lambda_j\}_{j=1}^{N}\) denote the candidate threshold pairs on a two-dimensional lattice. For each candidate \(\lambda_j\), we test
\vspace{-0.2cm}
\begin{equation}
\label{eq:hypothesis}
H_j:\; R(\lambda_j)>\alpha .
\end{equation}
Rejecting \(H_j\) certifies \(\lambda_j\) as safe with respect to the target risk level. Since the final threshold pair will be selected adaptively from all certified candidates, we require family-wise error rate~(FWER)~\citep{holm1979simple} control at level \(\delta\). Letting \(\widehat{\Lambda}\) denote the certified set, FWER control ensures
\vspace{-0.2cm}
\begin{equation}
\label{eq:fwer_guarantee}
\mathbb{P}\!\left(
\sup_{\lambda\in\widehat{\Lambda}}R(\lambda)\le\alpha
\right)\ge 1-\delta .
\end{equation}
Therefore, any later choice from \(\widehat{\Lambda}\), including the acceptance-maximizing one, remains valid. A Bonferroni correction also provides this guarantee, but it is often overly conservative on dense threshold grids. BalanceRAG instead uses SGT~\citep{https://doi.org/10.1002/sim.3495}, which preserves FWER control while exploiting the lattice structure to recycle significance budget to more useful operating points.

\noindent \textbf{Step 1: \(p\)-value computation.}
On the calibration set
\(\mathcal{D}_{\mathrm{cal}}=\{(x_\ell,y^\star_\ell)\}_{\ell=1}^n\),
for each threshold pair
\(\lambda_{i,j}=(t_1^{(i)},t_2^{(j)})\), we compute the accepted count and accepted-error count:
\vspace{-0.2cm}
\begin{equation}
\label{eq:mk_def}
M_{i,j}=\sum_{\ell=1}^n S_{\lambda_{i,j}}(x_\ell),\
K_{i,j}=\sum_{\ell=1}^n Z_{\lambda_{i,j}}(x_\ell,y^\star_\ell).
\end{equation}
For a fixed candidate \(\lambda_{i,j}\), conditioned on \(M_{i,j}\),
\vspace{-0.2cm}
\begin{equation}
\label{eq:binomial_model}
K_{i,j}\mid M_{i,j}\sim
\mathrm{Bin}\!\left(M_{i,j},R(\lambda_{i,j})\right).
\end{equation}
We therefore use the one-sided exact binomial \(p\)-value
\vspace{-0.2cm}
\begin{equation}
\label{eq:pvalue}
p_{i,j}
=
\mathbb{P}\!\left(
\mathrm{Bin}(M_{i,j},\alpha)\le K_{i,j}
\right),
\end{equation}
which is super-uniform under \(H_{\lambda_{i,j}}\).

\noindent \textbf{Step 2: graph prior and budget diffusion.}
The candidate thresholds form a natural partial order: increasing either threshold makes the router more permissive and usually increases acceptance. We therefore direct edges from each node to its immediate relaxed neighbors:
\vspace{-0.2cm}
\begin{equation}
\label{eq:neighbors}
\mathcal{N}(i,j)\subseteq\{(i+1,j),(i,j+1)\}.
\end{equation}
As the safe region varies across datasets and models, we split off a small portion of the calibration set as an initialization split. On this split, we choose the highest-acceptance node among those satisfying \(p\le\delta\); if no such node exists, we choose the node with the smallest \(p\)-value. The selected node receives the initial budget \(\delta\), while all formal SGT \(p\)-values are computed on the remaining calibration data.

\begin{figure}
    \centering
    \includegraphics[width=1\linewidth]{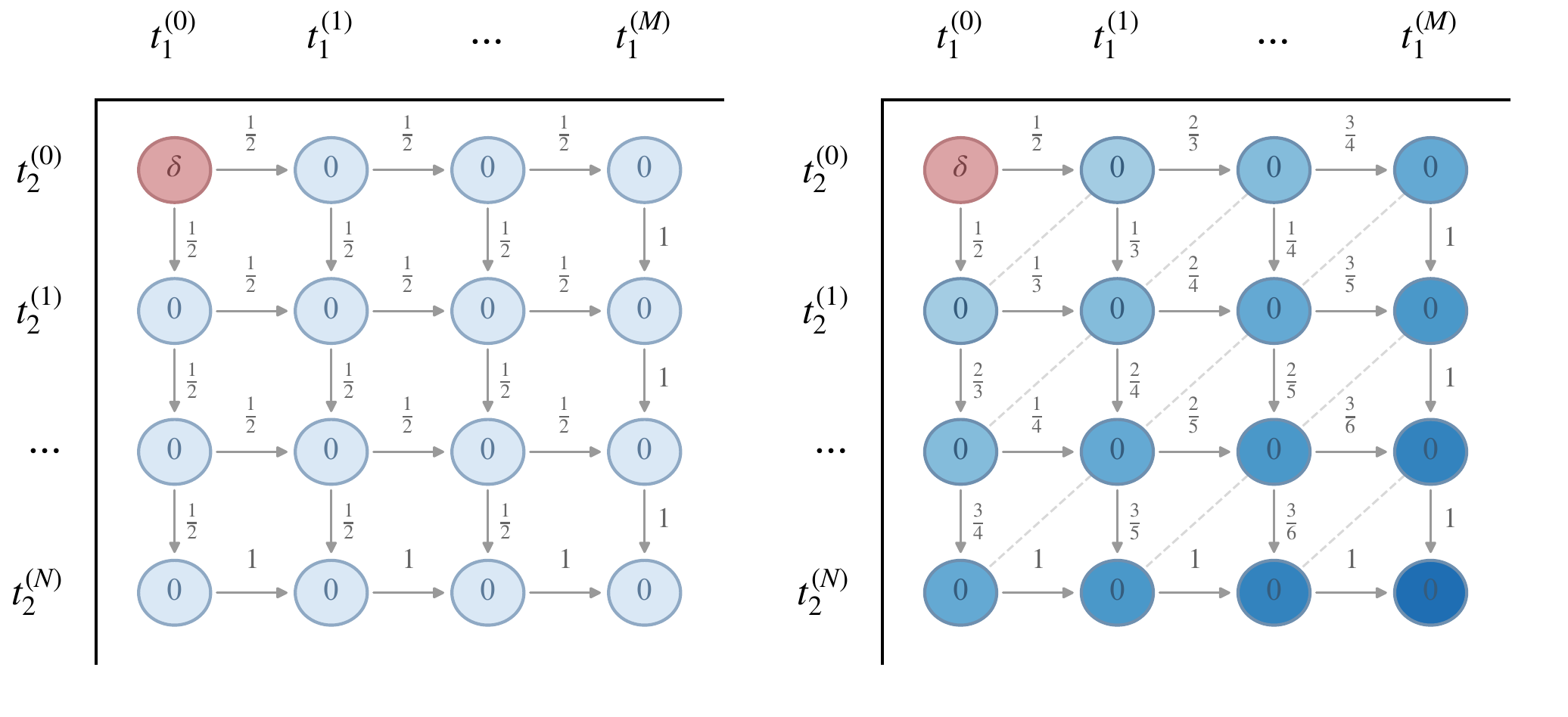}
    \caption{Budget diffusion on the threshold lattice. UGD splits budget evenly across successors, while DWD gives equal accumulated budget to nodes on the same diagonal.}
    \label{fig:budget_diffusion}
    \vspace{-5mm}
\end{figure}

As shown in Figure~\ref{fig:budget_diffusion}, we consider two diffusion rules over successors \(u\in\mathcal{N}(i,j)\). Uniform Grid Diffusion (UGD) assigns
\vspace{-0.2cm}
\begin{equation}
\label{eq:diffusion_weights}
g^{\mathrm{UGD}}_{(i,j)\to u}
=
\frac{1}{|\mathcal{N}(i,j)|}.
\end{equation}
Diamond-Wavefront Diffusion (DWD) assigns
\vspace{-0.2cm}
\begin{equation}
\label{eq:diamond_weights}
\begin{split}
g^{\mathrm{DWD}}_{(i,j)\to u}
&=
\frac{(i+1)\mathbbm{1}\!\left[u=(i+1,j)\right]}{i+j+2} \\
&\quad+
\frac{(j+1)\mathbbm{1}\!\left[u=(i,j+1)\right]}{i+j+2}.
\end{split}
\end{equation}
Under DWD, each node on diagonal \(i+j=d\) receives accumulated budget
\vspace{-0.2cm}
\begin{equation}
\label{eq:shell_equal_budget}
\delta^{\mathrm{DWD}}_{i,j}=\frac{\delta}{d+1},
\end{equation}
so threshold pairs with the same total relaxation receive the same testing opportunity. Appendix~\ref{app:proof_shell} proves this property.

\noindent \textbf{Step 3: SGT budget propagation.}
SGT maintains a local testing budget \(\delta_{i,j}\) at each active node and certifies the node whenever
\vspace{-0.2cm}
\begin{equation}
\label{eq:sig_condition}
p_{i,j}\le\delta_{i,j}.
\end{equation}
Once node \((i,j)\) is certified, its budget is transferred to uncertified successors:
\begin{equation}
\label{eq:budget_update}
\delta_u\leftarrow \delta_u+\delta_{i,j}g_{(i,j)\to u}.
\end{equation}
Because the graph is acyclic, SGT can apply the corresponding rewiring update
\vspace{-0.2cm}
\begin{equation}
\label{eq:sgt_dag_update}
g_{k,u}\leftarrow g_{k,u}+g_{k,(i,j)}g_{(i,j),u},
\end{equation}
where \(k\) is a predecessor and \(u\) is a successor. This budget recycling preserves validity while giving later, more permissive candidates greater power to be certified.

\noindent \textbf{Step 4: final threshold selection.}
After SGT terminates, BalanceRAG selects the certified threshold pair with the largest accepted count:
\vspace{-0.2cm}
\begin{equation}
\label{eq:select_final}
\hat{\lambda}
=
\argmax_{\lambda\in\widehat{\Lambda}}M(\lambda).
\end{equation}
Since this selection is restricted to the certified set, it inherits the simultaneous risk guarantee in Eq.~\eqref{eq:fwer_guarantee}.

\begin{theorem}[Risk control of calibrated routing]
\label{thm:main_guarantee}
Under i.i.d.\ calibration data, for any target risk \(\alpha\) and error level \(\delta\), any threshold pair \(\hat{\lambda}\) selected from the SGT-certified set satisfies
\vspace{-0.2cm}
\begin{equation}
\mathbb{P}\left(R(\hat{\lambda})\le\alpha\right)\ge 1-\delta.
\end{equation}
\end{theorem}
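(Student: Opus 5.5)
The plan is to reduce the statement to FWER control of the SGT procedure and then obtain that FWER control from two facts: each $p$-value is valid, and the graphical procedure is a closed-testing shortcut. Write $\mathcal{H}_0=\{j: R(\lambda_j)>\alpha\}$ for the set of true nulls. If no true null is rejected, then $\widehat{\Lambda}\cap\{\lambda_j: j\in\mathcal{H}_0\}=\emptyset$, so every certified $\lambda$, and in particular any data-dependent $\hat\lambda\in\widehat{\Lambda}$ such as the maximizer in Eq.~\eqref{eq:select_final}, has $R(\hat\lambda)\le\alpha$. If $\widehat{\Lambda}$ is empty, we read the statement under the convention that the router abstains everywhere, which makes it trivially true. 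Hence $\mathbb{P}(R(\hat\lambda)>\alpha)\le\mathbb{P}(\text{some } H_j,\ j\in\mathcal{H}_0,\text{ rejected})$, and it suffices to show that this FWER is at most $\delta$. This is exactly the implication from Eq.~\eqref{eq:fwer_guarantee} to the theorem.

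\emph{Step 1 (valid $p$-values).} First I would handle the initialization split. Because it is disjoint from the remaining calibration data and the data are i.i.d., the initial allocation $\delta_{i,j}^{(0)}$ and the graph weights are independent of the $p$-values used in testing. I would therefore condition on the split and treat the allocation as fixed. Next, fix $j\in\mathcal{H}_0$. Exchangeability of the remaining i.i.d.\ points gives, conditionally on $M_j=m$, that the accepted points form an i.i.d.\ sample from the law of $(X,Y^\star)$ given $S_{\lambda_j}(X)=1$. Hence $K_j\mid M_j\sim\mathrm{Bin}(M_j,R(\lambda_j))$, as stated in Eq.~\eqref{eq:binomial_model}. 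Since $R(\lambda_j)>\alpha$, the binomial CDF is monotone in its success parameter, so $\mathbb{P}(p_j\le t\mid M_j)\le t$ for every $t$. The case $M_j=0$ is handled by setting $p_j=1$. Integrating over $M_j$ then shows that $p_j$ is super-uniform.

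\emph{Step 2 (closed testing).} Here I would invoke the Bretz et al.\ result. For a graph with initial weights summing to $\delta$ and transition weights satisfying $\sum_u g_{k,u}\le 1$, the sequential rejection algorithm given by Eqs.~\eqref{eq:budget_update}--\eqref{eq:sgt_dag_update} is a shortcut for a closed test. In that closed test each intersection hypothesis $H_I$, $I\subseteq\{1,\dots,N\}$, is tested by a weighted Bonferroni test with weights $\delta_j(I)$ satisfying $\sum_{j\in I}\delta_j(I)\le\delta$. The algorithm's final rejection set also does not depend on the order in which rejections are made. I would check the required conditions directly. UGD weights sum to $1$ when a node has successors. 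DWD weights sum to $(i+j+2)/(i+j+2)=1$. Boundary nodes have outgoing weight at most $1$. The rewiring update preserves these sums.

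\emph{Step 3 (FWER bound).} Closed testing rejects some $j\in\mathcal{H}_0$ only if it rejects the intersection $H_{\mathcal{H}_0}$. By a union bound and Step 1,
\[
\mathbb{P}\bigl(\exists j\in\mathcal{H}_0:\,p_j\le\delta_j(\mathcal{H}_0)\bigr)\le\sum_{j\in\mathcal{H}_0}\delta_j(\mathcal{H}_0)\le\delta .
\]
No dependence assumption across the $p$-values is needed, which matters here because neighboring lattice nodes share data.

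The main obstacle is Step 2: confirming that the lattice SGT, with its data-independent initialization and rewiring, fits exactly into the Bretz et al.\ consonant-shortcut framework. The key point to verify is that the weights $\delta_j(I)$ depend only on the index set $I$ and not on the $p$-values. I would establish this by citing their theorem and checking that the weight-summation conditions from Step 2 hold at every update.
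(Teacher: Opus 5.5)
Your proposal is correct and follows the same route as the paper's proof: super-uniform exact binomial $p$-values (the paper's super-uniformity lemma), FWER control of the graphical procedure via Bretz et al., and the fact that any $\hat\lambda\in\widehat{\Lambda}$ satisfies $R(\hat\lambda)\le\alpha$ on the event that no true null is rejected. Two of your details go beyond what the paper's proof states explicitly: conditioning on the initialization split, so that the root node and graph weights are independent of the SGT $p$-values, and writing out the closed-testing union bound.
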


The proof is given in Appendix~\ref{app:proof_main}.

\subsection{Multi-Risk Control}
\label{subsec:multi_risk}

The same certification procedure can control multiple system-level risks. Besides the selection-conditioned answer error,
\vspace{-0.2cm}
\begin{equation}
\label{eq:answer_risk}
R^{(1)}(\lambda)=\mathbb{P}\left(Z_\lambda=1\mid S_\lambda=1\right),
\end{equation}
we also control the fallback invocation rate,
\vspace{-0.2cm}
\begin{equation}
\label{eq:fallback_risk}
R^{(2)}(\lambda)=\mathbb{P}(B_\lambda=1),
\end{equation}
where \(B_\lambda\) indicates whether the input is routed to the RAG fallback. Given targets \(\alpha_1\) and \(\alpha_2\), each node receives one \(p\)-value per risk, denoted by \(p^{(1)}_{i,j}\) and \(p^{(2)}_{i,j}\). The intersection-union rule~\citep{4d6e5e99-f202-3927-88ed-b212186d1c14} combines them as
\vspace{-0.2cm}
\begin{equation}
\label{eq:iut_pvalue}
p_{i,j}=\max\left(p^{(1)}_{i,j},p^{(2)}_{i,j}\right).
\end{equation}
Feeding this node-level \(p\)-value into the same SGT procedure yields
\vspace{-0.2cm}
\begin{equation}
\label{eq:multi_risk_guarantee}
\mathbb{P}\left(
R^{(1)}(\hat{\lambda})\le\alpha_1,\;
R^{(2)}(\hat{\lambda})\le\alpha_2
\right)
\ge 1-\delta.
\end{equation}
Thus, multi-risk BalanceRAG jointly certifies answer reliability and fallback usage, allowing the final router to trade off coverage and retrieval frequency under explicit risk constraints.


\section{Experiment}
\begin{figure*}[!t]
  \centering
  \captionsetup[subfigure]{skip=0pt}
  \setlength{\abovecaptionskip}{2pt}
  \setlength{\belowcaptionskip}{0pt}

  \begin{subfigure}[t]{0.5\textwidth}
    \centering
    \includegraphics[width=\linewidth]{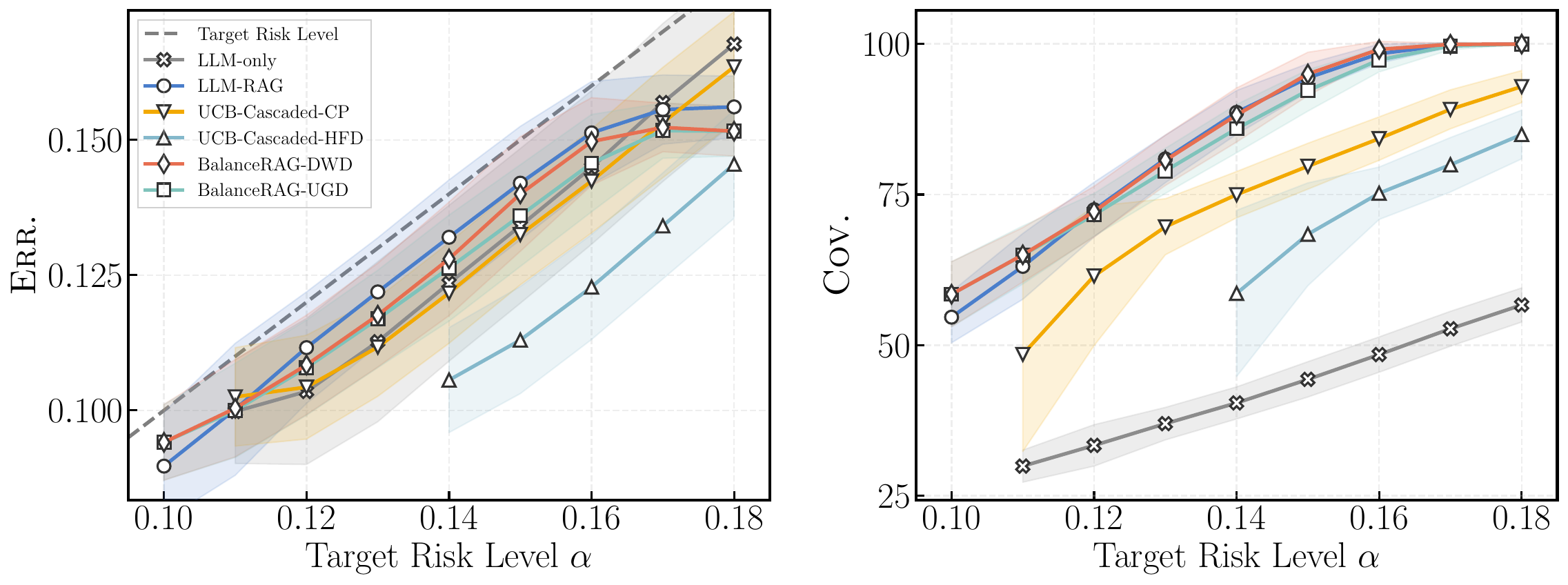}
    \caption{Qwen2.5-3B}
  \end{subfigure}%
  \begin{subfigure}[t]{0.5\textwidth}
    \centering
    \includegraphics[width=\linewidth]{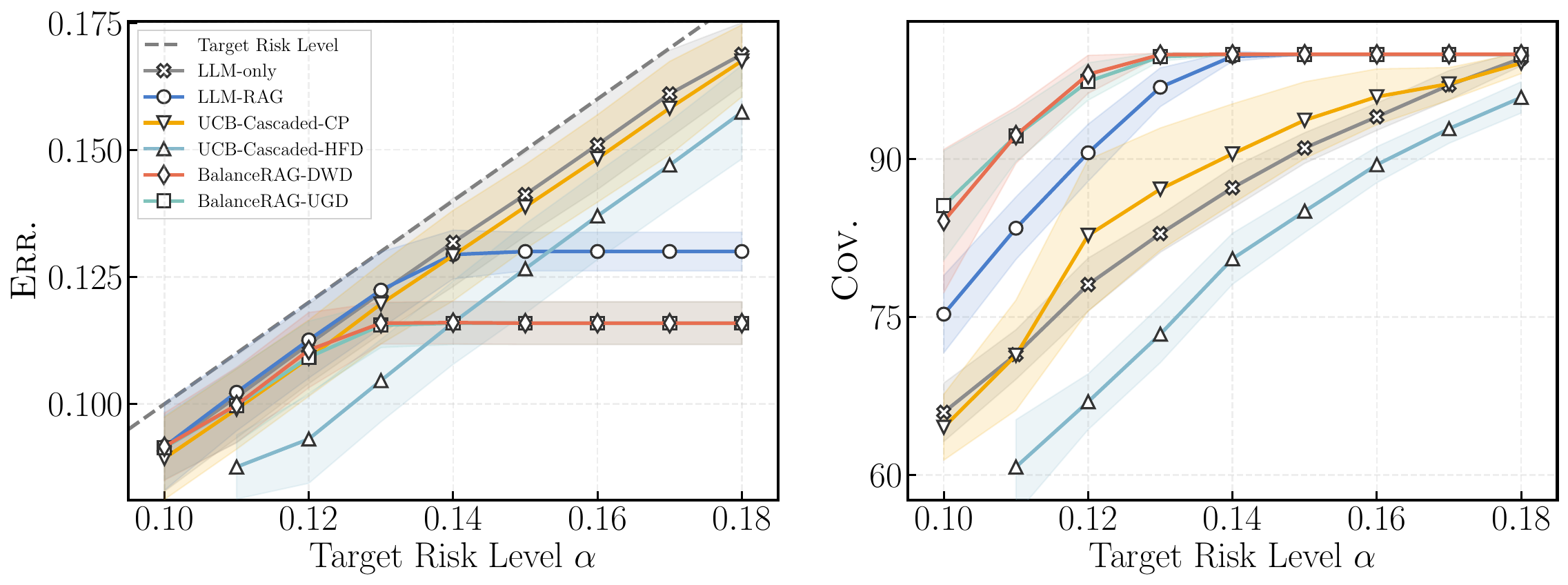}
    \caption{Qwen2.5-7B}
  \end{subfigure}

  \vspace{-0.1em}

  \begin{subfigure}[t]{0.5\textwidth}
    \centering
    \includegraphics[width=\linewidth]{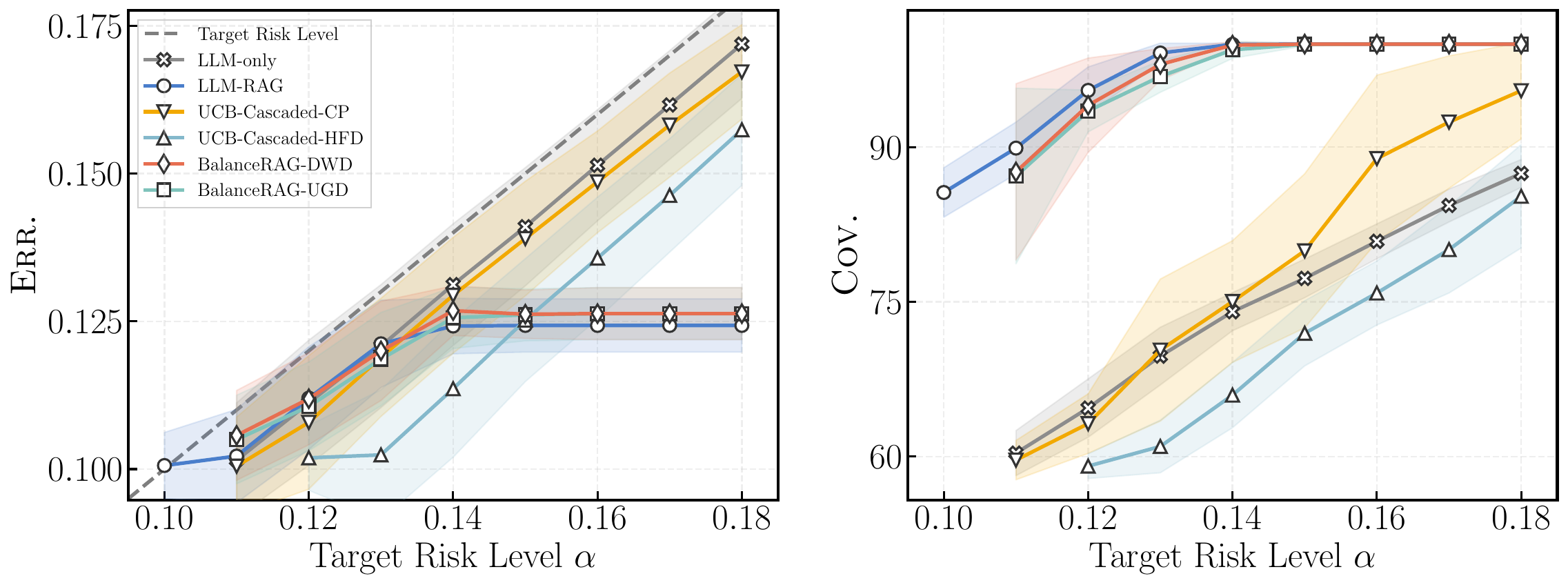}
    \caption{Qwen3-4B}
  \end{subfigure}%
  \begin{subfigure}[t]{0.5\textwidth}
    \centering
    \includegraphics[width=\linewidth]{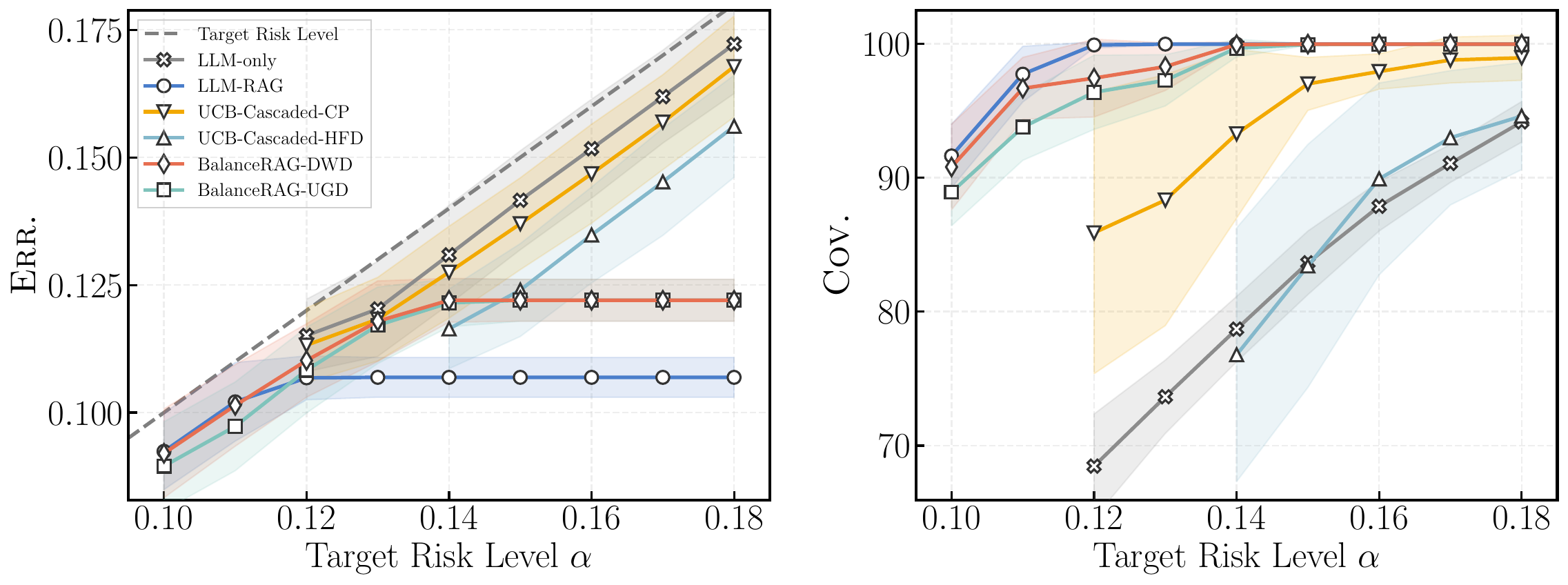}
    \caption{Qwen3-14B}
  \end{subfigure}

  \vspace{-0.1em}

  \begin{subfigure}[t]{0.5\textwidth}
    \centering
    \includegraphics[width=\linewidth]{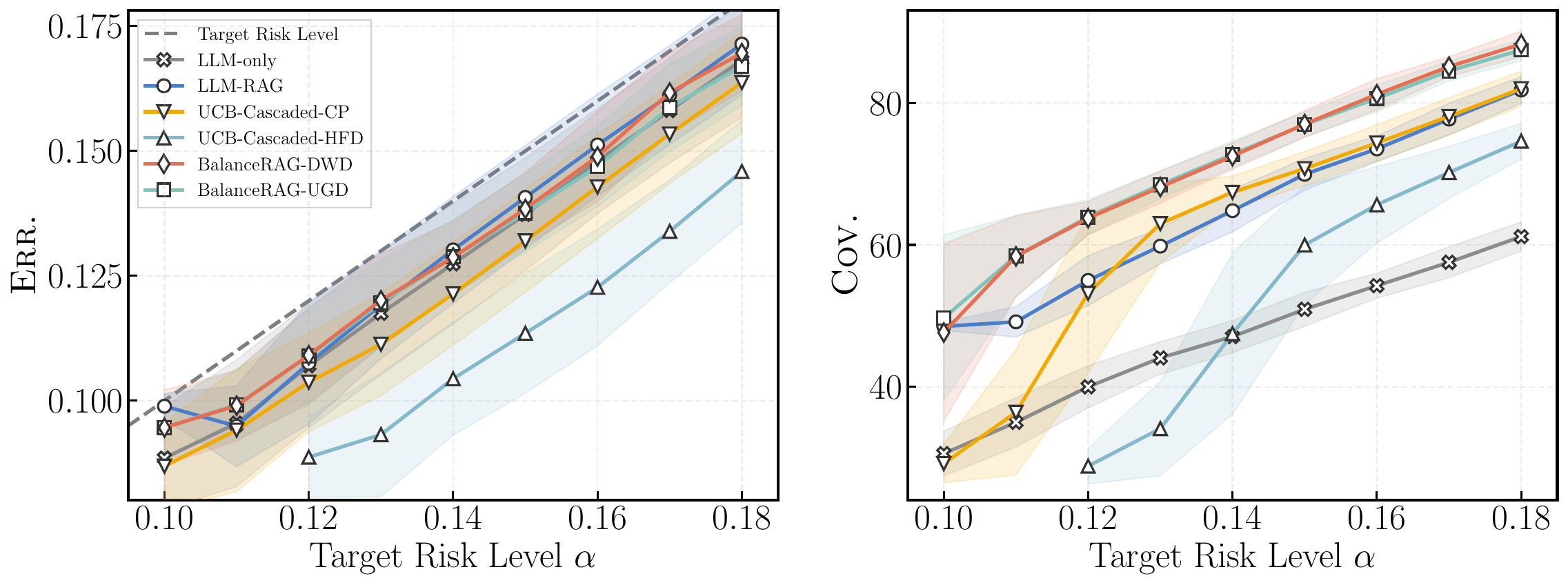}
    \caption{Llama3.2-3B}
  \end{subfigure}%
  \begin{subfigure}[t]{0.5\textwidth}
    \centering
    \includegraphics[width=\linewidth]{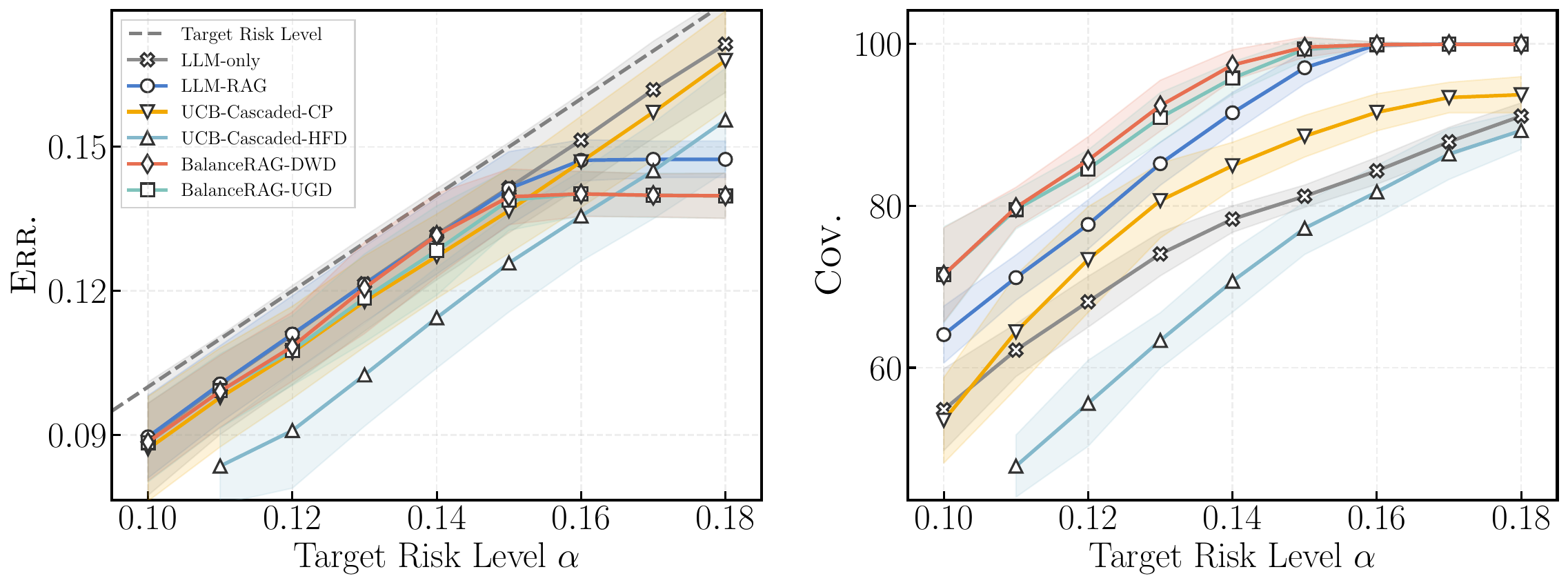}
    \caption{Llama3.1-8B}
  \end{subfigure}

  \vspace{-0.1em}

  \begin{subfigure}[t]{0.5\textwidth}
    \centering
    \includegraphics[width=\linewidth]{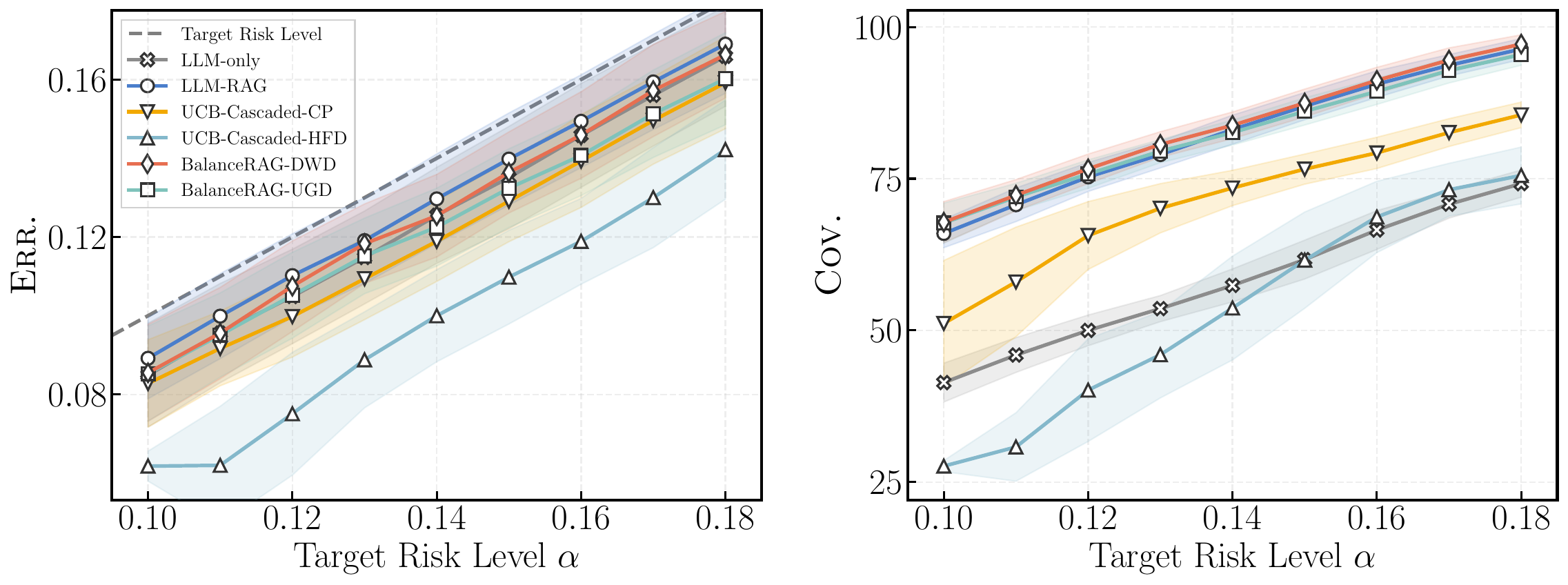}
    \caption{Vicuna-v1.5-7B}
  \end{subfigure}%
  \begin{subfigure}[t]{0.5\textwidth}
    \centering
    \includegraphics[width=\linewidth]{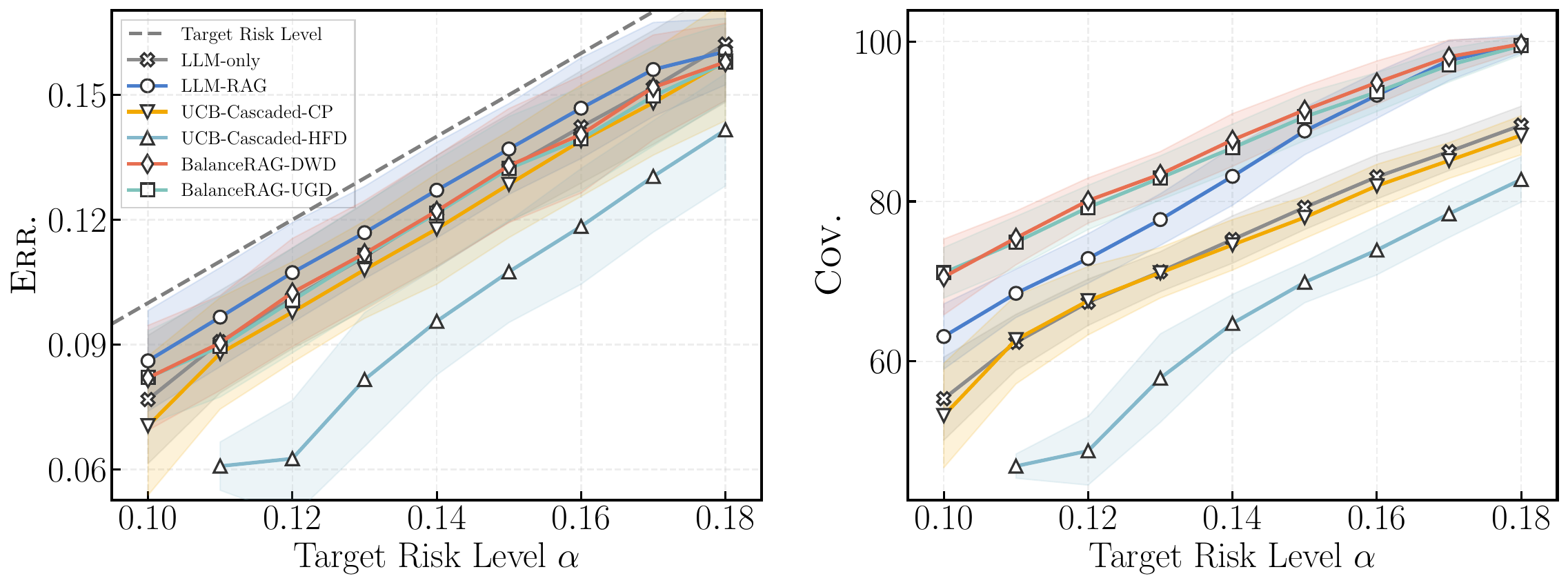}
    \caption{Vicuna-v1.5-13B}
  \end{subfigure}

  \caption{{Err.} (left in each panel pair) and {Cov.} (right) under
different target risk levels $\alpha$ across eight LLMs spanning four model
families on TriviaQA.}
  \label{fig:risk_level_triviaqa}
  \vspace{-5mm}
\end{figure*}

\subsection{Experimental Settings}
\noindent \textbf{Datasets and Base LLMs.}
We evaluate BalanceRAG on three open-ended QA datasets:
TriviaQA~\citep{joshi2017triviaqa}, SQuAD~v2~\citep{rajpurkar2018know},
and Natural Questions (NQ)~\citep{kwiatkowski2019natural}.
To assess robustness across model families and scales, we employ eight backbone LLMs
from Qwen~\citep{yang2025qwen3,qwen2025qwen25technicalreport},
LLaMA~\citep{grattafiori2024llama3herdmodels}, and
Vicuna~\citep{zheng2023judgingllmasajudgemtbenchchatbot}.

\begin{figure}[t]
    \centering
    \begin{subfigure}{0.235\textwidth}
        \centering
        \includegraphics[width=\linewidth]{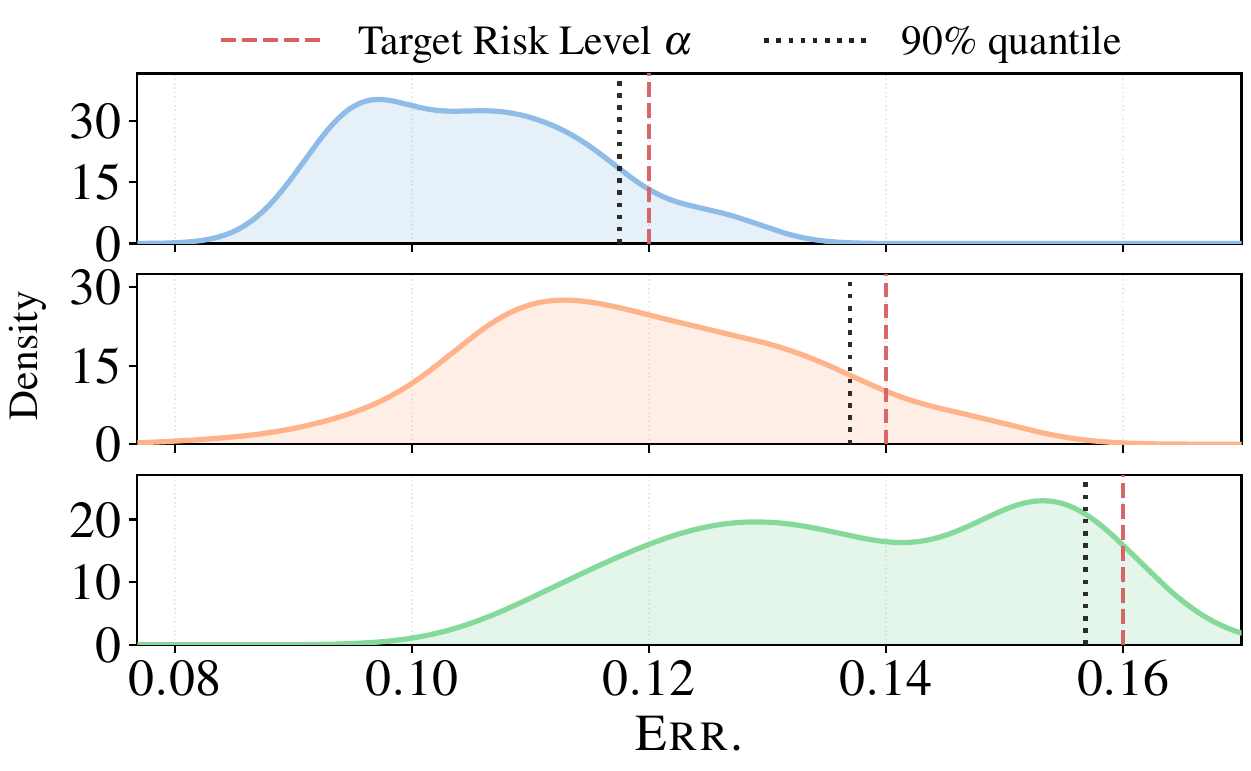} 
    \end{subfigure}
    \begin{subfigure}
        {0.235\textwidth}
        \centering
        \includegraphics[width=\linewidth]{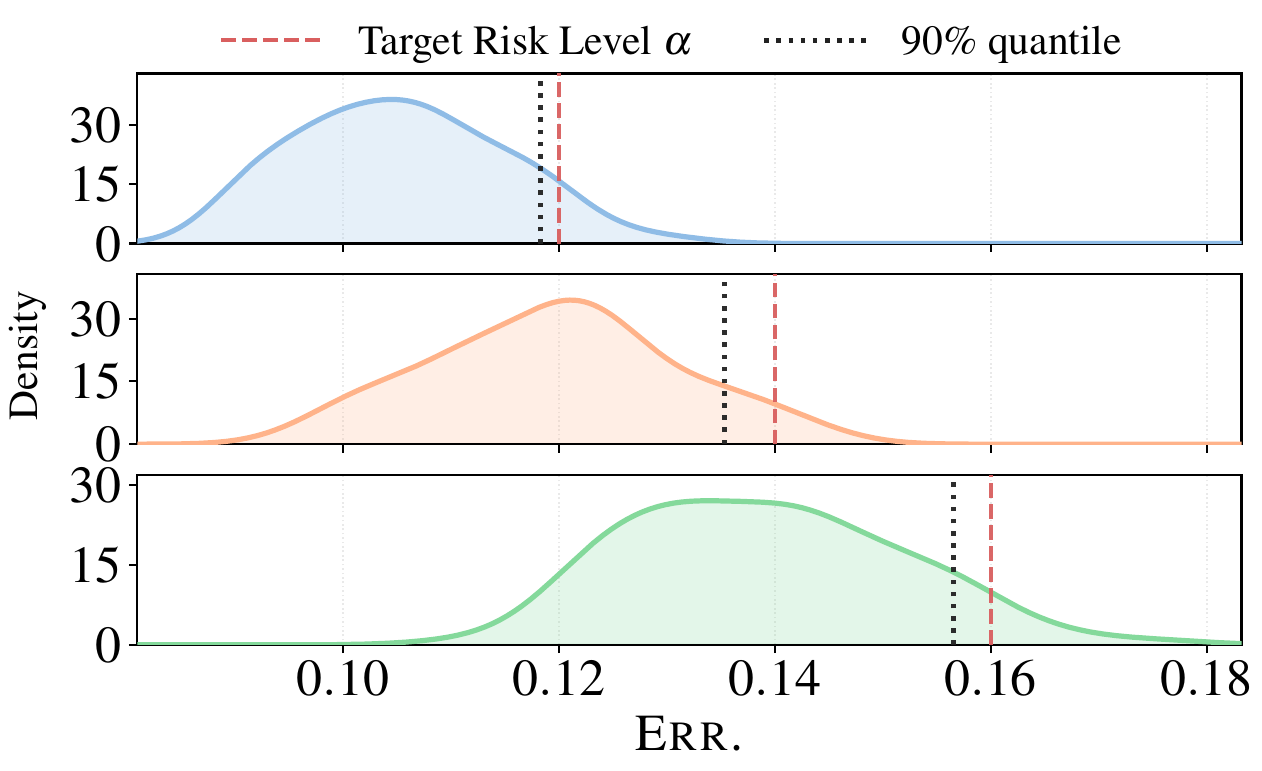} 
    \end{subfigure}
        \caption{{Err.} distributions for Qwen2.5-3B (Left) and Llama-3.2-3B (Right) on TriviaQA.}
    \label{fig:fdr_violin}
    \vspace{-5mm}
\end{figure}

\begin{table*}[t]
\centering
\footnotesize
\setlength{\tabcolsep}{4pt}
\renewcommand{\arraystretch}{1.02}
\caption{Results on TriviaQA, SQuAD~v2, and NQ under different target risk levels.
Bold numbers indicate the best Corr. under the same dataset, model, and risk
level. ``-'' denotes infeasibility, i.e., no non-empty certified operating point is
found under the target risk level.}
\label{tab:three_datasets_results_merged}

\resizebox{\textwidth}{!}{
\begin{tabular}{@{}c@{\hspace{2mm}}llccccc@{\hspace{4mm}}ccccc@{}}
\toprule
& Method & Metric
& \multicolumn{5}{c}{\textit{Llama3.2-3B}}
& \multicolumn{5}{c@{}}{\textit{Llama3.1-8B}} \\
\cmidrule(lr){1-8}\cmidrule(lr){9-13}

\multirow{11}{*}[-0.5mm]{\rotatebox[origin=c]{90}{\textbf{TriviaQA}}}
& \multicolumn{2}{l}{$\alpha$}
& 0.11 & 0.12 & 0.13 & 0.14 & 0.15
& 0.11 & 0.12 & 0.13 & 0.14 & 0.15 \\
\specialrule{0.06em}{0pt}{1pt}

& \multirow{2}{*}{\Only} & Corr.
& 988.0 & 1116.0 & 1214.0 & 1283.0 & 1371.0
& 1780.0 & 1928.0 & 2070.0 & 2165.0 & 2218.0 \\
&  & Err.
& 0.0955 & 0.1070 & 0.1175 & 0.1274 & 0.1373
& 0.1004 & 0.1108 & 0.1215 & 0.1318 & 0.1415 \\
\addlinespace[0.8pt]

& \multirow{2}{*}{\RAG} & Corr.
& \underline{1390.0} & \underline{1532.0} & 1646.0 & 1760.0 & 1876.0
& \underline{2036.0} & \underline{2198.0} & \underline{2382.0} & \underline{2529.0} & \underline{2652.0} \\
&  & Err.
& 0.0949 & 0.1075 & 0.1192 & 0.1302 & 0.1407
& 0.1006 & 0.1110 & 0.1214 & 0.1314 & 0.1413 \\
\addlinespace[0.8pt]

& \multirow{2}{*}{\UCB} & Corr.
& 1028.0 & 1488.0 & \underline{1749.0} & \underline{1850.0} & \underline{1917.0}
& 1849.0 & 2084.0 & 2264.0 & 2360.0 & 2434.0 \\
&  & Err.
& 0.0941 & 0.1037 & 0.1113 & 0.1214 & 0.1320
& 0.0978 & 0.1072 & 0.1178 & 0.1272 & 0.1367 \\
\addlinespace[0.8pt]

& \multirow{2}{*}{\Balance} & Corr.
& \balcell{\textbf{1643.0}} & \balcell{\textbf{1774.0}} & \balcell{\textbf{1873.0}} & \balcell{\textbf{1973.0}} & \balcell{\textbf{2073.0}}
& \balcell{\textbf{2288.0}} & \balcell{\textbf{2430.0}} & \balcell{\textbf{2584.0}} & \balcell{\textbf{2691.0}} & \balcell{\textbf{2727.0}} \\
&  & Err.
& \balcell{0.099} & \balcell{0.1091} & \balcell{0.1201} & \balcell{0.1287} & \balcell{0.1384}
& \balcell{0.0991} & \balcell{0.1084} & \balcell{0.1206} & \balcell{0.1316} & \balcell{0.1396} \\

\midrule

\multirow{11}{*}[-0.5mm]{\rotatebox[origin=c]{90}{\textbf{SQuAD~v2}}}
& \multicolumn{2}{l}{$\alpha$}
& 0.11 & 0.12 & 0.13 & 0.14 & 0.15
& 0.11 & 0.12 & 0.13 & 0.14 & 0.15 \\
\specialrule{0.06em}{0pt}{1pt}

& \multirow{2}{*}{\Only} & Corr.
& - & - & - & - & 137.0
& - & - & - & 194.0 & 211.0 \\
&  & Err.
& - & - & - & - & 0.1345
& - & - & - & 0.1319 & 0.1324 \\
\addlinespace[0.8pt]

& \multirow{2}{*}{\RAG} & Corr.
& \textbf{1546.0} & \textbf{1617.0} & \textbf{1666.0} & \textbf{1714.0} & \textbf{1755.0}
& \textbf{1803.0} & \textbf{1831.0} & \textbf{1850.0} & \textbf{1853.0} & \textbf{1853.0} \\
&  & Err.
& 0.1008 & 0.1112 & 0.1216 & 0.1316 & 0.1416
& 0.1000 & 0.1095 & 0.1175 & 0.1193 & 0.1193 \\
\addlinespace[0.8pt]

& \multirow{2}{*}{\UCB} & Corr.
& - & - & - & - & 1744.0
& - & - & - & 1835.0 & 1837.0 \\
&  & Err.
& - & - & - & - & 0.1354
& - & - & - & 0.1263 & 0.1269 \\
\addlinespace[0.8pt]

& \multirow{2}{*}{\Balance} & Corr.
& \balcell{\underline{1484.0}} & \balcell{\underline{1591.0}} & \balcell{\underline{1654.0}} & \balcell{\underline{1699.0}} & \balcell{\underline{1745.0}}
& \balcell{\underline{1784.0}} & \balcell{\underline{1819.0}} & \balcell{\underline{1836.0}} & \balcell{\underline{1842.0}} & \balcell{\underline{1843.0}} \\
&  & Err.
& \balcell{0.0975} & \balcell{0.1066} & \balcell{0.1164} & \balcell{0.1271} & \balcell{0.1375}
& \balcell{0.1010} & \balcell{0.1118} & \balcell{0.1210} & \balcell{0.1241} & \balcell{0.1240} \\

\midrule

\multirow{11}{*}[-0.5mm]{\rotatebox[origin=c]{90}{\textbf{NQ}}}
& \multicolumn{2}{l}{$\alpha$}
& 0.15 & 0.16 & 0.17 & 0.18 & 0.19
& 0.15 & 0.16 & 0.17 & 0.18 & 0.19 \\
\specialrule{0.06em}{0pt}{1pt}

& \multirow{2}{*}{\Only} & Corr.
& - & - & 119.0 & 125.0 & 134.0
& 158.0 & 191.0 & 240.0 & 283.0 & 320.0 \\
&  & Err.
& - & - & 0.1568 & 0.1552 & 0.1572
& 0.1055 & 0.1184 & 0.1362 & 0.1493 & 0.1630 \\
\addlinespace[0.8pt]

& \multirow{2}{*}{\RAG} & Corr.
& \textbf{489.0} & \textbf{532.0} & \textbf{562.0} & \textbf{590.0} & \textbf{615.0}
& \textbf{734.0} & \textbf{752.0} & \textbf{760.0} & \textbf{763.0} & \textbf{763.0} \\
&  & Err.
& 0.1283 & 0.1390 & 0.1497 & 0.1601 & 0.1697
& 0.1331 & 0.1422 & 0.1466 & 0.1478 & 0.1478 \\
\addlinespace[0.8pt]

& \multirow{2}{*}{\UCB} & Corr.
& - & - & - & 563.0 & 590.0
& 698.0 & 718.0 & 736.0 & 743.0 & 747.0 \\
&  & Err.
& - & - & - & 0.1520 & 0.1582
& 0.1214 & 0.1307 & 0.1408 & 0.1492 & 0.1581 \\
\addlinespace[0.8pt]

& \multirow{2}{*}{\Balance} & Corr.
& \balcell{\underline{447.0}} & \balcell{\underline{492.0}} & \balcell{\underline{531.0}} & \balcell{\underline{575.0}} & \balcell{\underline{615.0}}
& \balcell{\underline{709.0}} & \balcell{\underline{741.0}} & \balcell{\underline{753.0}} & \balcell{\underline{757.0}} & \balcell{\underline{759.0}} \\
&  & Err.
& \balcell{0.122} & \balcell{0.1312} & \balcell{0.1404} & \balcell{0.1506} & \balcell{0.1662}
& \balcell{0.1315} & \balcell{0.1420} & \balcell{0.1511} & \balcell{0.1523} & \balcell{0.1517} \\

\bottomrule
\end{tabular}
}
\vspace{-2mm}
\end{table*}

\noindent \textbf{Baselines.}
We compare BalanceRAG with three groups of baselines. First, we include fixed-path systems, LLM-only and LLM-RAG, both calibrated by fixed sequence testing~\citep{bauer1991multiple} as single-branch risk-controlled baselines. Second, we also evaluate two BalanceRAG variants: BalanceRAG-UGD, which uniformly diffuses the significance budget to neighboring nodes, and BalanceRAG-DWD, which uses diamond-wavefront diffusion to balance testing opportunities among threshold pairs with the same total relaxation. We employ DWD as the default variant. Third, we compare with UCB-based methods for selection-conditioned risk control. UCB-Cascaded-CP adapts the cascaded judge framework of~\citet{jung2024trust} to our LLM-only/RAG routing setting with the Clopper-Pearson bound, while UCB-Cascaded-HFD replaces it with a distribution-free Hoeffding bound. For completeness, Appendix~\ref{tab:llama31_8b_risk_control_comparison} reports additional baselines, including Adaptive-RAG~\citep{jeong2024adaptive}, Self-Route~\citep{li2024retrieval}, empirical risk selection, and Bonferroni correction, covering uncalibrated adaptive routing, simple empirical selection, and conservative multiple-testing correction under the same target risk levels.

\noindent \textbf{Evaluation Metrics.}
We consider four dimensions. Reliability is measured by the empirical selection-conditioned error rate ({Err.}) under different target risk levels $\alpha$. Coverage is measured by the acceptance rate ({Cov.}). Utility is measured by the number of accepted correct samples ({Corr.}), which captures how many useful answers the system produces. Cost is measured by the average number of tokens and the average memory increase per sample after uncertainty scores are obtained, so it reflects the execution cost induced by the calibrated routing policy rather than the overhead of any specific uncertainty estimator.

\noindent \textbf{Correctness Criterion and Uncertainty Estimator.}
By default, we use semantic similarity~\citep{reimers2019sentence} with a threshold of $0.7$ to determine whether a model answer aligns with the ground truth in the admission function. In the robustness analysis, we further consider bidirectional entailment~\citep{kuhn2023semantic} and LLM-as-a-Judge~\citep{zhang2024vluncertaintydetectinghallucinationlarge} as alternative correctness criteria. For uncertainty estimation, we consider five estimators: semantic entropy (SE)~\citep{farquhar2024detecting}, the sum of graph Laplacian eigenvalues (EIGV), eccentricity (ECC), the degree matrix statistic (DEG)~\citep{lin2023generating}, and length-normalized predictive entropy from the model outputs (SELF)~\citep{malinin2020uncertainty}. Unless otherwise stated, we use SE as the default uncertainty estimator.

\begin{figure*}[htbp]
  \centering
  \begin{subfigure}{0.245\textwidth}
    \centering
    \includegraphics[width=\linewidth]{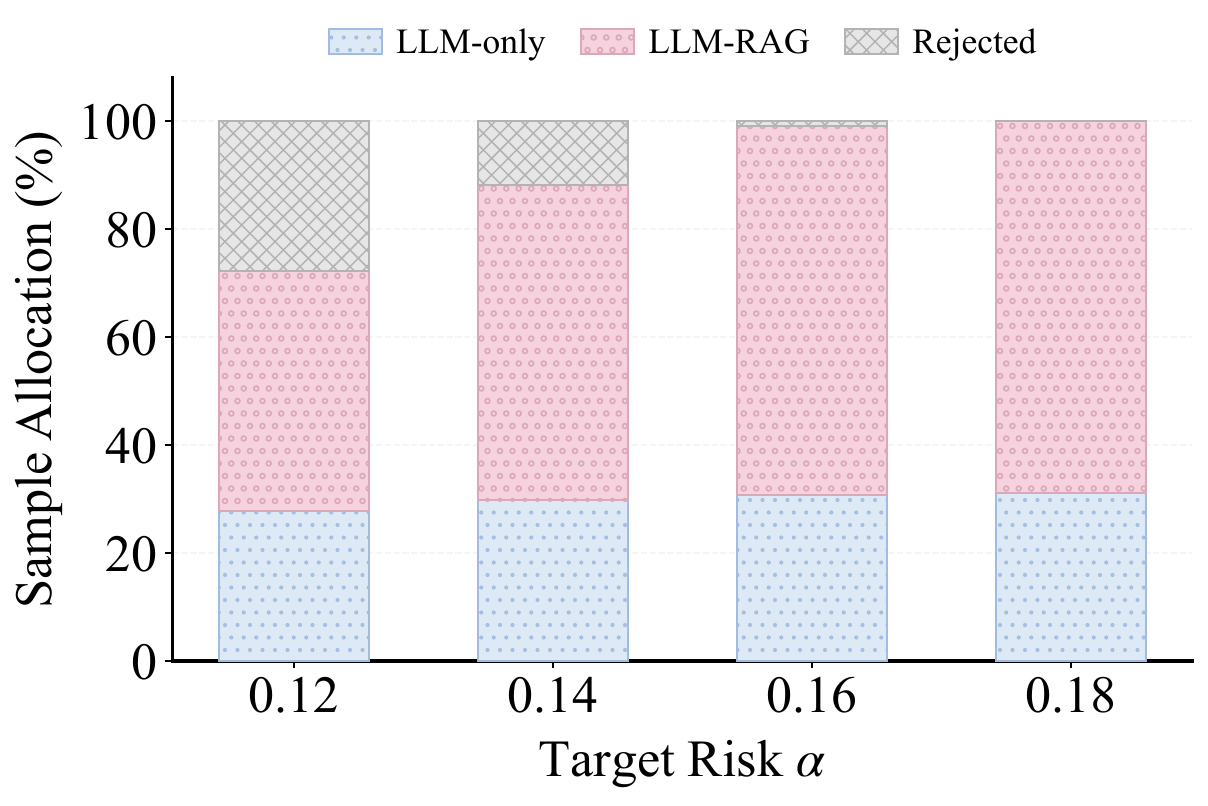}
    \caption{Qwen2.5-3B}
  \end{subfigure}
  \hfill
  \begin{subfigure}{0.245\textwidth}
    \centering
    \includegraphics[width=\linewidth]{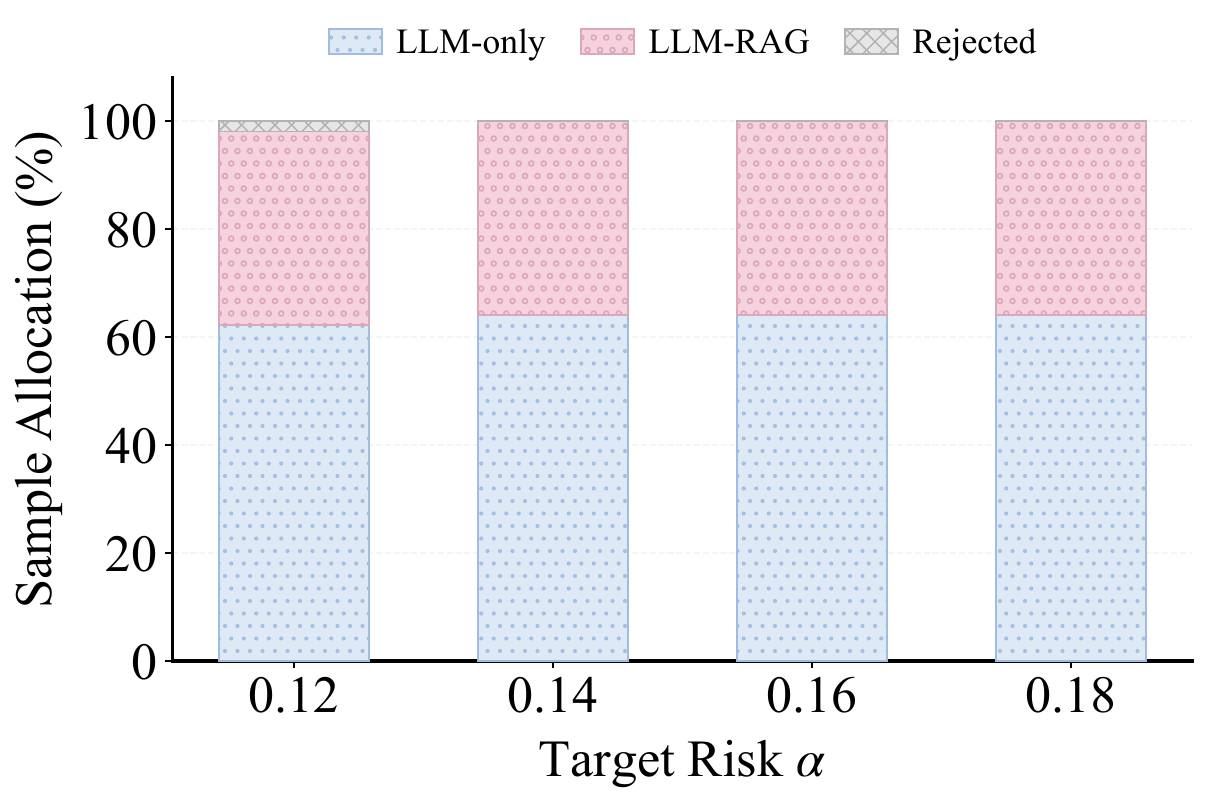}
    \caption{Qwen2.5-7B}
  \end{subfigure}
   \hfill
  \begin{subfigure}{0.245\textwidth}
    \centering
    \includegraphics[width=\linewidth]{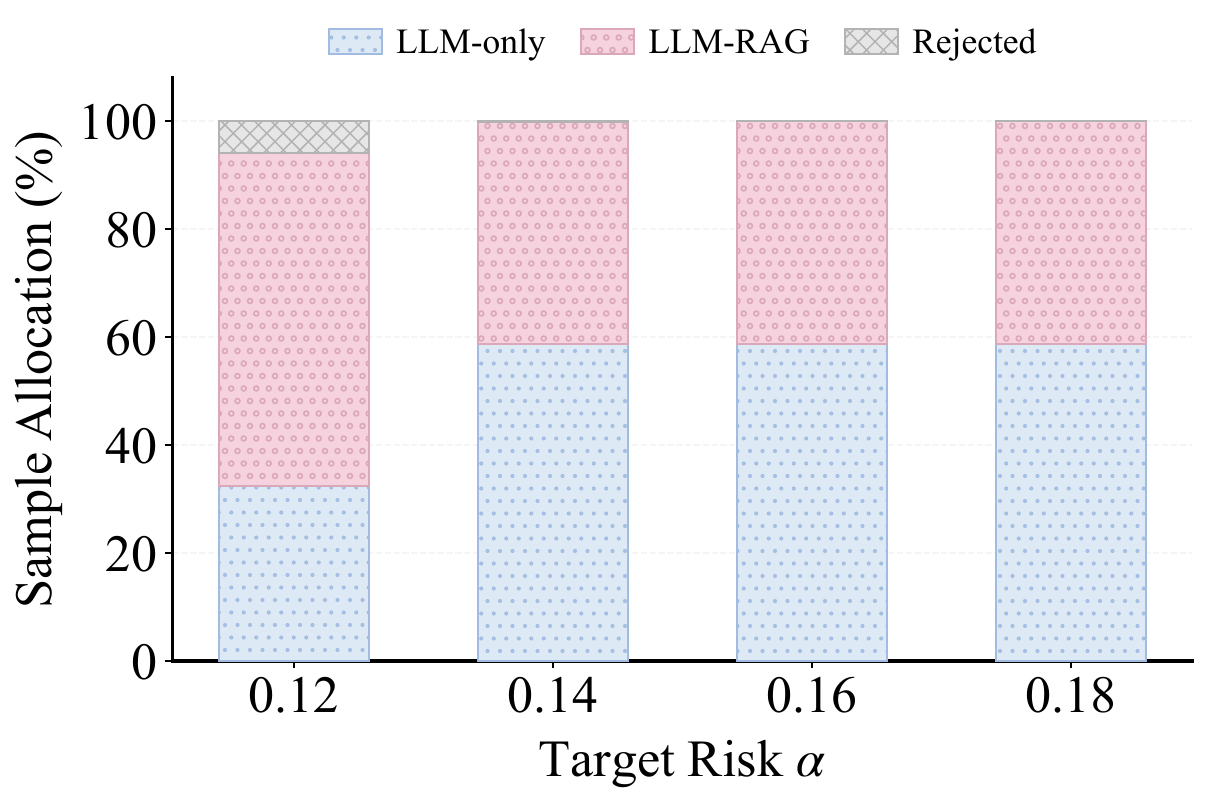}
    \caption{Qwen3-4B}
  \end{subfigure}
  \hfill
  \begin{subfigure}{0.245\textwidth}
    \centering
    \includegraphics[width=\linewidth]{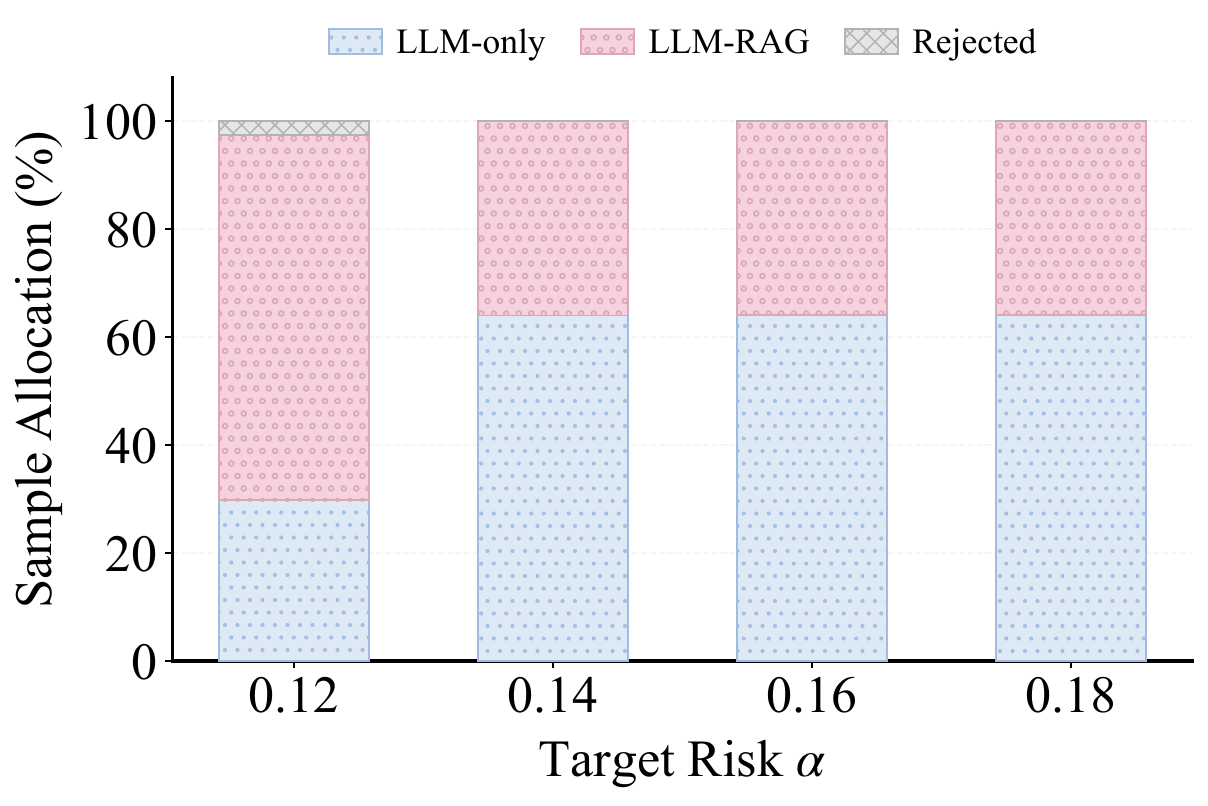}
    \caption{Qwen3-14B}
  \end{subfigure}
  \hfill
  \begin{subfigure}{0.245\textwidth}
    \centering
    \includegraphics[width=\linewidth]{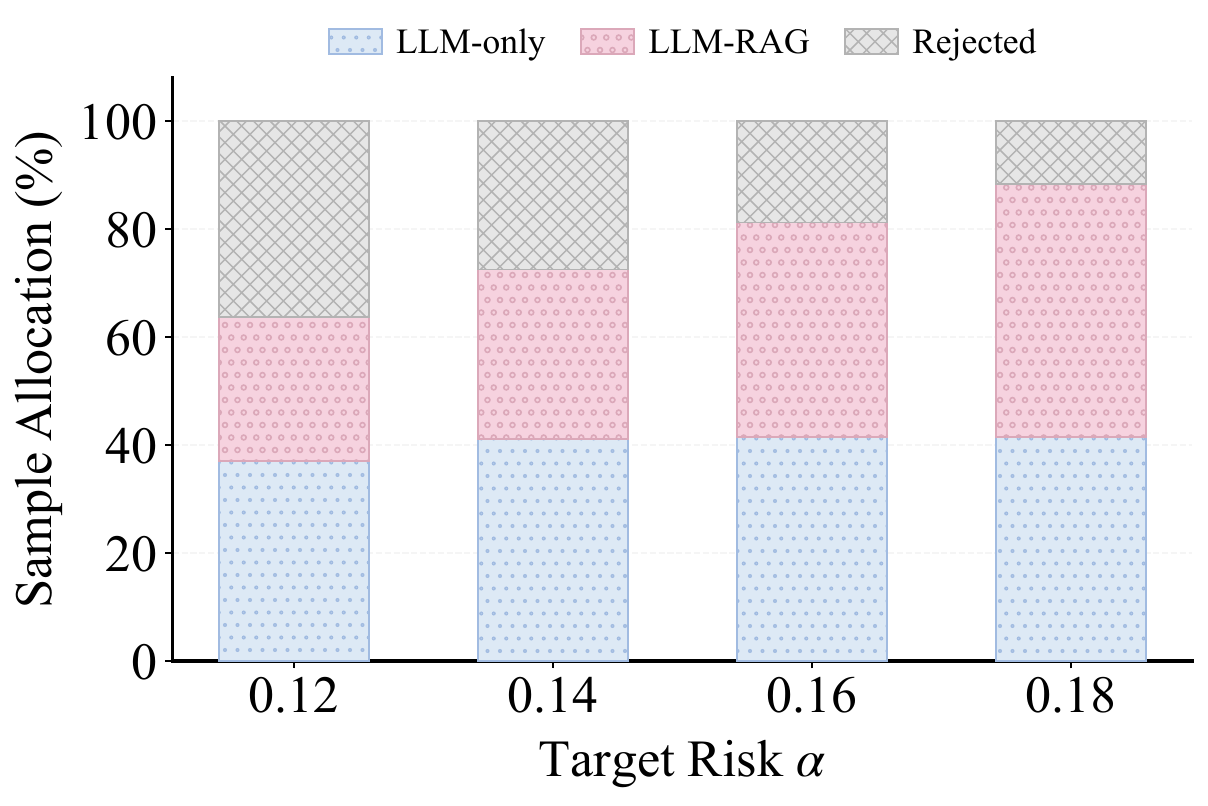}
    \caption{Llama3.2-3B}
  \end{subfigure}
  \hfill
  \begin{subfigure}{0.245\textwidth}
    \centering
    \includegraphics[width=\linewidth]{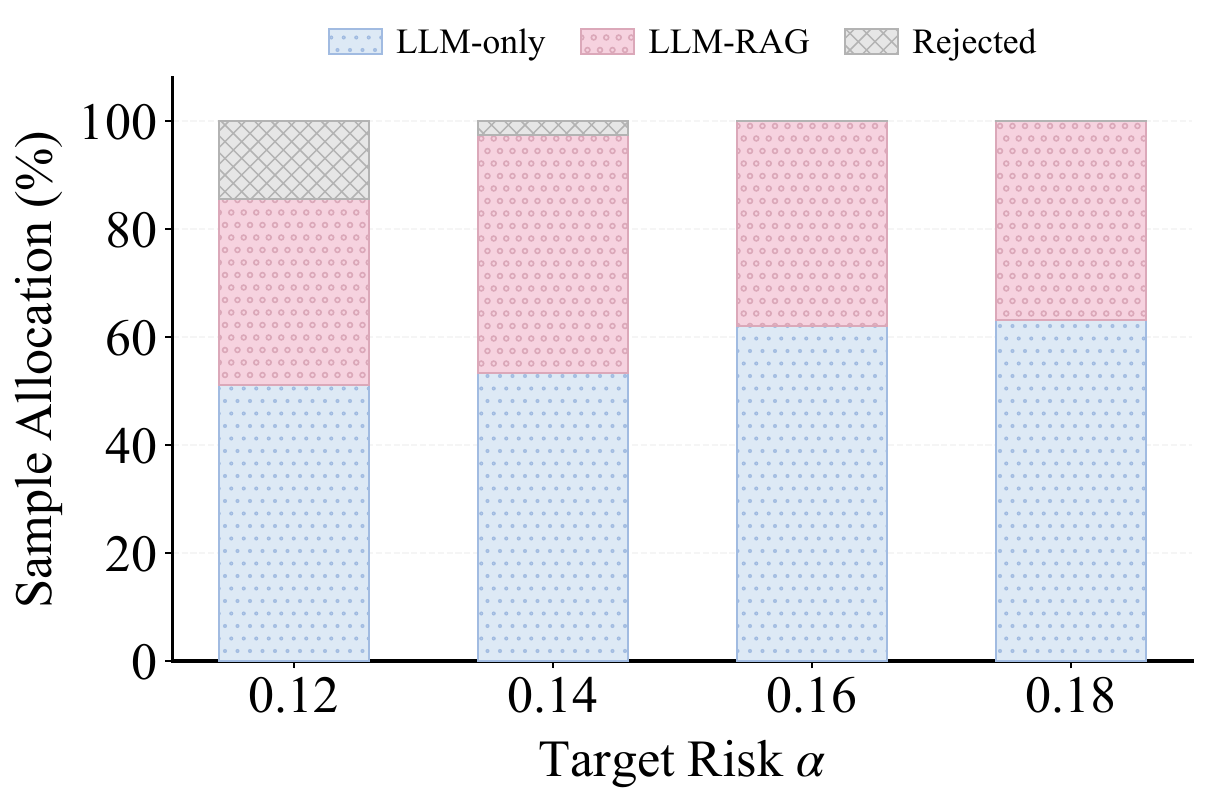}
    \caption{Llama3.1-8B}
  \end{subfigure}
  \hfill
  \begin{subfigure}{0.245\textwidth}
    \centering
    \includegraphics[width=\linewidth]{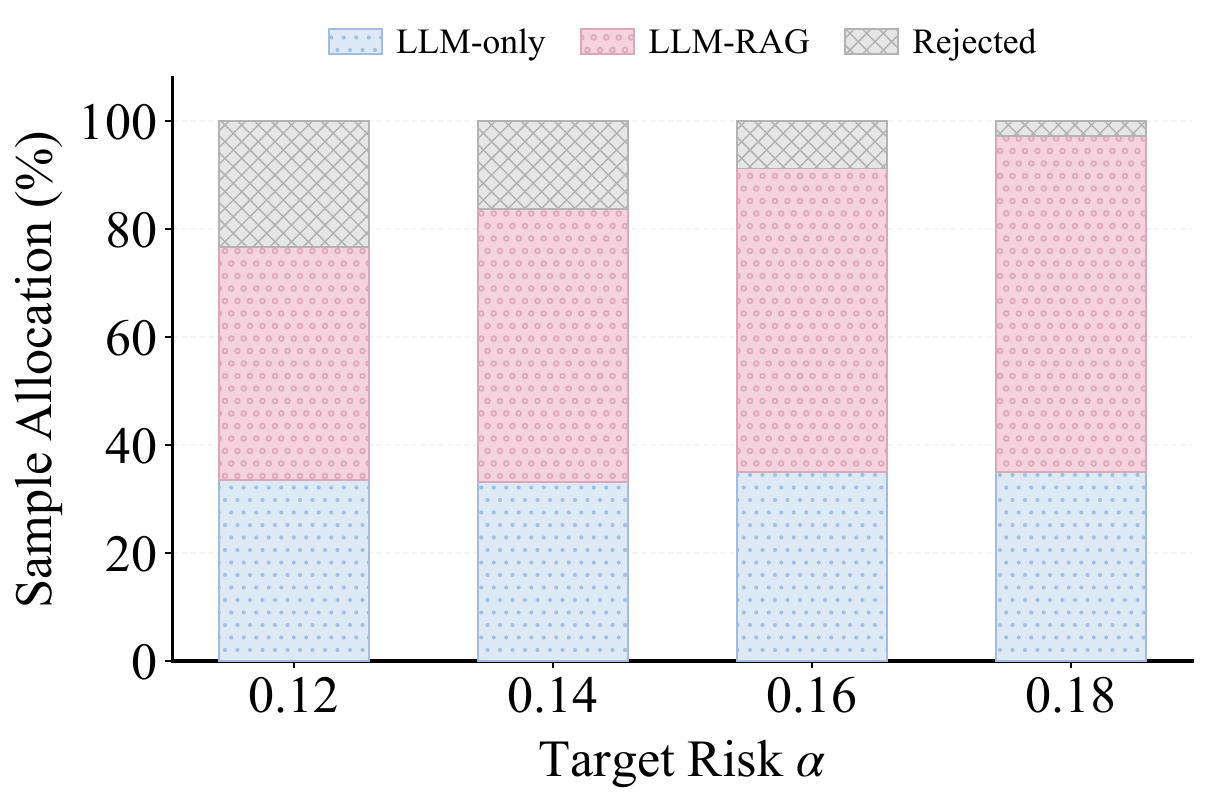}
    \caption{Vicuna-v1.5-7B}
  \end{subfigure}
  \hfill
  \begin{subfigure}{0.245\textwidth}   
    \centering
    \includegraphics[width=\linewidth]{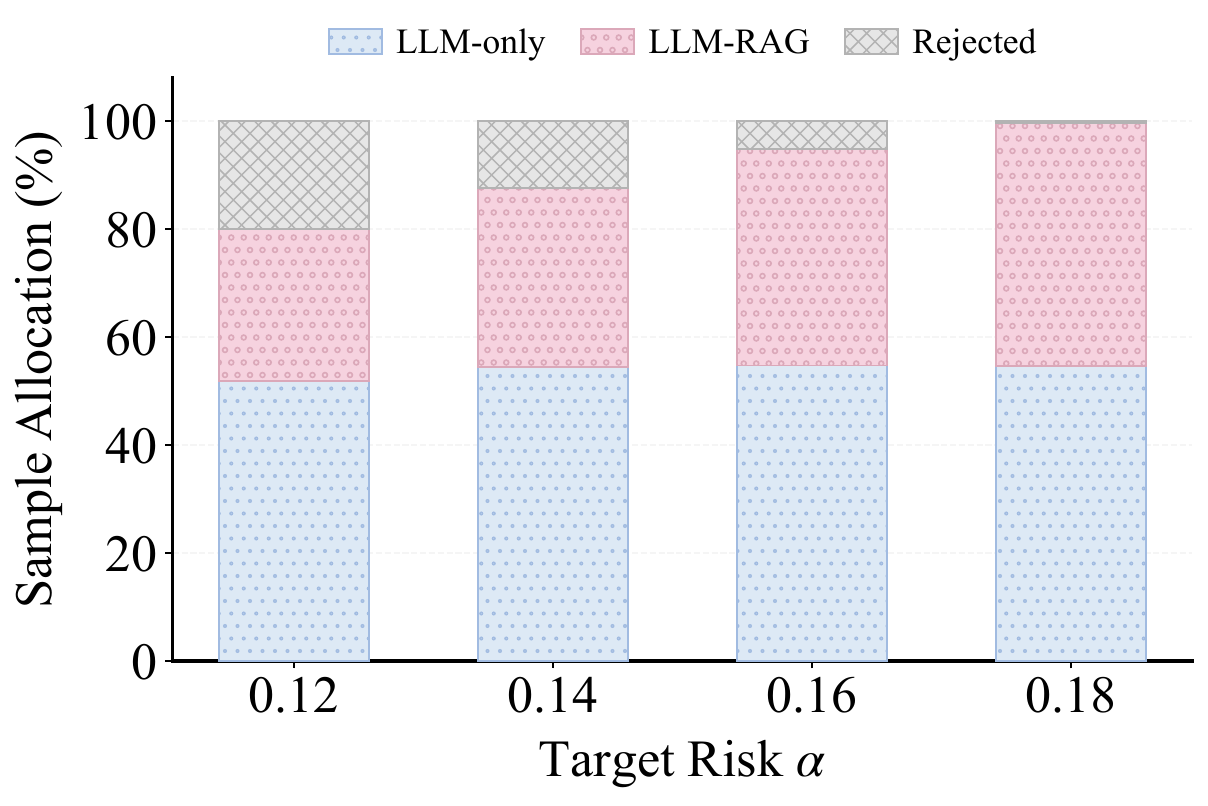}
    \caption{Vicuna-v1.5-13B}
  \end{subfigure}

  \caption{Routing allocation of test samples in BalanceRAG on TriviaQA (mean), including LLM-only acceptance, LLM-RAG fallback acceptance, and rejection.}
  \label{fig:proportion_triviaqa}
  \vspace{-5mm}
\end{figure*}

\noindent \textbf{Hyperparameters.}
Following standard split-calibration protocols~\citep{angelopoulos2025conformalriskcontrol}, we use a calibration-test split ratio of $0.5$ and repeat each experiment over $100$ random calibration-test splits. In each calibration set, we reserve $40\%$ of the examples for initial-node selection and use the remaining data for formal SGT calibration.

\subsection{Utility and Coverage under Risk Control}

\noindent \textbf{Reliability under Target Risk Control.}
We first examine whether BalanceRAG achieves system-level risk control at user-specified target levels. Figure~\ref{fig:risk_level_triviaqa} reports the empirical test-set selection-conditioned error rate across backbones and target risk levels, where solid curves denote the mean over repeated calibration/test splits and shaded regions show one standard deviation. BalanceRAG provides a finite-sample, high-probability guarantee that the population selection-conditioned risk of the calibrated router is at most the target level $\alpha$. The empirical test-set {Err.} closely tracks the target line, which is consistent with this guarantee. Figure~\ref{fig:fdr_violin} further shows the distribution of test-set {Err.} over repeated random splits. When $\delta=0.1$, the fraction of splits with test-set {Err.} exceeding $\alpha$ remains below $10\%$ at every target level. Moreover, joint calibration keeps BalanceRAG feasible under strict risk targets, whereas branch-wise baselines such as LLM-only and UCB-Cascaded often fail to find a feasible operating point due to branch-level constraints and stage-wise $\delta$ splitting.

\noindent \textbf{Coverage and Utility under Risk Control.}
We next evaluate how much useful output is retained at the same risk level $\alpha$. Figure~\ref{fig:risk_level_triviaqa} and Table~\ref{tab:three_datasets_results_merged} show that BalanceRAG consistently achieves the best or near-best coverage and utility among the baselines. It outperforms non-RAG baselines and matches or exceeds RAG in many settings. 
Notably, BalanceRAG accepts more test samples and retains more correctly accepted predictions ({Corr.}), while maintaining empirical {Err.} near or below the risk level. This advantage reflects the heterogeneous value of retrieval: retrieval can help knowledge-intensive cases, but may also introduce noisy or distracting context for easier queries. By jointly calibrating the LLM-only and fallback RAG branches, BalanceRAG keeps cases where the LLM-only answer is sufficient on the primary path and routes harder cases to LLM-RAG when the fallback is sufficiently reliable. Appendix~\ref{tab:llama31_8b_risk_control_comparison} further compares BalanceRAG with additional baselines under the same target risk levels. BalanceRAG is more reliable than Adaptive-RAG, Self-Route, and empirical risk selection, while achieving higher coverage than Bonferroni. Overall, these results show that joint calibration improves the reliability-coverage-utility trade-off under fixed risk control.

\begin{figure}[htbp]
  \centering

  \begin{subfigure}{0.23\textwidth}
    \centering
    \includegraphics[width=\linewidth]{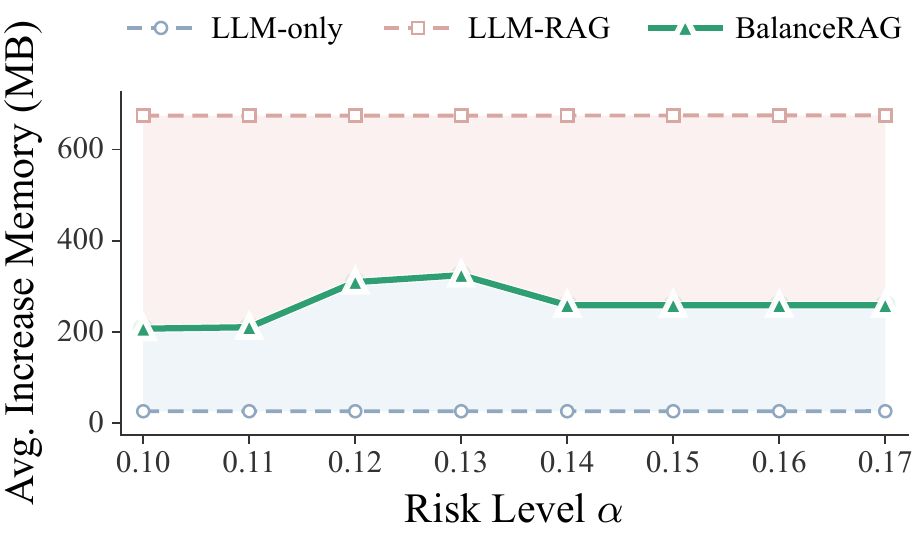}
    \caption{Qwen2.5-7B Memory}
  \end{subfigure}
  \hfill
  \begin{subfigure}{0.23\textwidth}
    \centering
    \includegraphics[width=\linewidth]{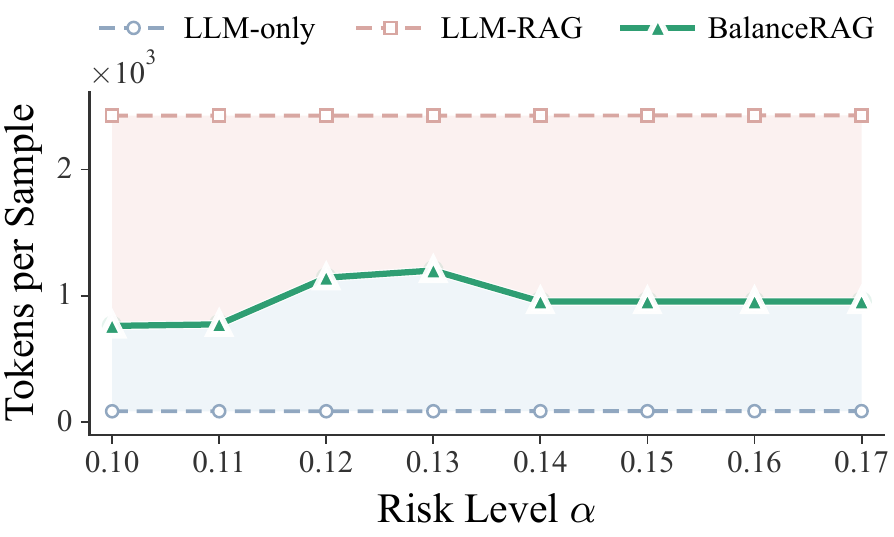}
    \caption{Qwen2.5-7B Tokens}
  \end{subfigure}

  \vspace{0.6em}

  \begin{subfigure}{0.23\textwidth}
    \centering
    \includegraphics[width=\linewidth]{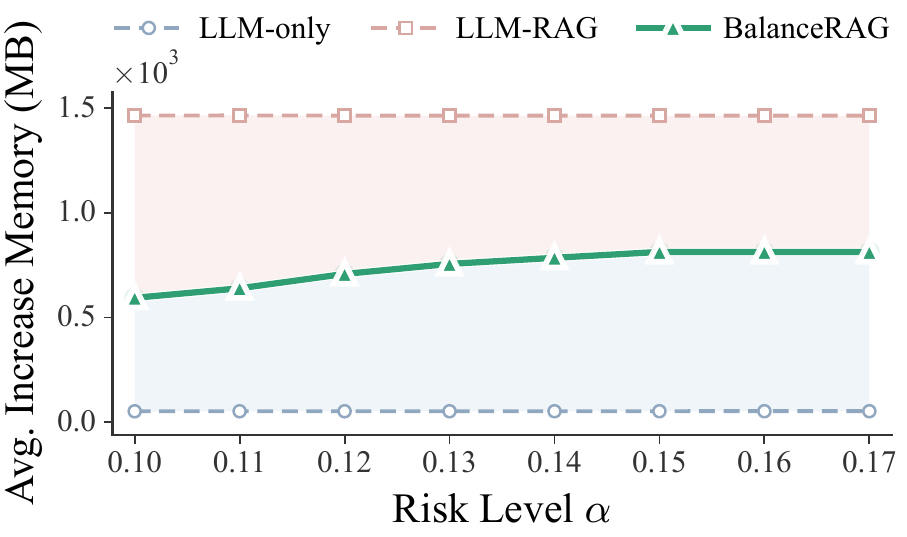}
    \caption{Llama-3.1-8B Memory}
  \end{subfigure}
  \hfill
  \begin{subfigure}{0.23\textwidth}
    \centering
    \includegraphics[width=\linewidth]{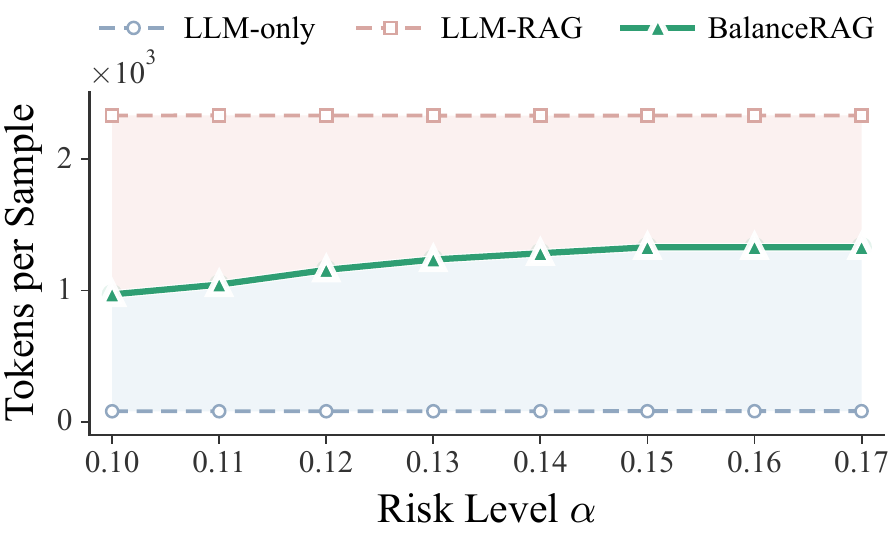}
    \caption{Llama-3.1-8B Tokens}
  \end{subfigure}

  \caption{Memory and token cost on TriviaQA across risk levels for Qwen2.5-7B and Llama-3.1-8B}
  \label{fig:cost_triviaqa}
  \vspace{-5mm}
\end{figure}

\subsection{Routing Allocation and Cost Efficiency}

\noindent \textbf{Routing Allocation across Models.}
Figure~\ref{fig:proportion_triviaqa} shows the routing allocation of BalanceRAG across eight backbones. BalanceRAG consistently keeps a substantial fraction of queries on the LLM-only branch, thereby reducing unnecessary fallback usage while maintaining the prescribed risk control. The LLM-only share also reflects the strength of the backbone: stronger models can safely answer more queries without retrieval, whereas weaker models require more frequent fallback to satisfy the same risk target. In our results, stronger backbones keep more than $60\%$ of accepted queries on the LLM-only branch, medium-strength backbones stay around $40$-$50\%$, and weaker backbones remain around $25$-$30\%$.

\noindent \textbf{Routed Execution Cost.}
We further evaluate the routed execution cost of BalanceRAG at calibrated operating points. Since BalanceRAG does not depend on a specific uncertainty estimator, we measure cost after uncertainty scores are obtained; this isolates the execution cost induced by the calibrated routing policy rather than the overhead of a particular uncertainty-estimation method. Under the same backbone and decoding setup, LLM-only and RAG perform fixed single-path inference, whereas BalanceRAG first attempts the LLM-only branch and invokes the RAG fallback only when the primary answer does not pass its calibrated threshold. We report the average memory increase and average token usage per sample. As shown in Figure~\ref{fig:cost_triviaqa}, BalanceRAG lies between the two fixed-path endpoints: it incurs more cost than LLM-only, but substantially lower cost than always-on RAG, because many queries can be accepted directly by the primary branch without invoking retrieval.

\subsection{Multi-Risk Control}
\label{sec:multi-risk}
\begin{table}[t]
\centering
\caption{Single-risk vs.\ multi-risk routing at selection-conditioned risk target
$\alpha_1=0.15$. $\alpha_2$ caps the fallback rate to LLM-RAG;
``Route'' shows the LLM-only/LLM-RAG split.}
\label{tab:single_vs_multi_routing}
\renewcommand{\arraystretch}{1.18}
\resizebox{\columnwidth}{!}{%
\begin{tabular}{llccccc}
\toprule
Model & Method & $\alpha_2$ & Route (Only/RAG) & Cov. & Corr. & Err. \\
\midrule

\multirow{4}{*}{Qwen2.5-3B}
& LLM-only & - & 1.0000 / 0.0000 & 0.4432 & 1137 & 0.1341 \\
& Single-risk & -  & 0.2995 / 0.6505 & 0.9499 & 2421 & 0.1400 \\
& Multi-risk  & 0.6 & 0.3464 / \textbf{0.5620} & 0.9084 & 2332 & 0.1338 \\
& Multi-risk  & 0.5 & 0.3859 / \textbf{0.4709} & 0.8568 & 2206 & 0.1313  \\
\midrule

\multirow{4}{*}{Qwen2.5-7B}
& LLM-only & - & 1.0000 / 0.0000 & 0.9105 & 2422 & 0.1412 \\
& Single-risk & -  & 0.6414 / 0.3582 & 0.9996 & 2738 & 0.1159 \\
& Multi-risk  & 0.3 & 0.7559 / \textbf{0.2438} & 0.9996 & 2719 & 0.1221 \\
& Multi-risk  & 0.2 & 0.8257 / \textbf{0.1729} & 0.9986 & 2692 & 0.1298 \\
\midrule

\multirow{4}{*}{Llama3.1-8B}
& LLM-only & - & 1.0000 / 0.0000 & 0.8118 & 2218 & 0.1415 \\
& Single-risk & -  & 0.5935 / 0.4024 & 0.9959 & 2727 & 0.1396 \\
& Multi-risk  & 0.3 & 0.7200 / \textbf{0.2593} & 0.9793 & 2689 & 0.1372 \\
& Multi-risk  & 0.2 & 0.7748 / \textbf{0.1702} & 0.9450 & 2599 & 0.1357 \\
\midrule

\multirow{4}{*}{Llama3.2-3B}
& LLM-only & - & 1.0000 / 0.0000 & 0.5092 & 1371 & 0.1373 \\
& Single-risk & -  & 0.4162 / 0.3539 & 0.7701 & 2073 & 0.1384 \\
& Multi-risk  & 0.3 & 0.4872 / \textbf{0.2504} & 0.7376 & 1989 & 0.1366 \\
& Multi-risk  & 0.2 & 0.5320 / \textbf{0.1279} & 0.6599 & 1781 & 0.1355 \\
\bottomrule
\end{tabular}%
}
\vspace{-2mm}
\end{table}

In deployment, practitioners may want to control not only the selection-conditioned answer risk but also the frequency of fallback retrieval. We therefore extend BalanceRAG to the multi-risk setting by adding a constraint $\alpha_2$ on the fallback invocation rate in addition to the answer-risk target $\alpha_1$. Table~\ref{tab:single_vs_multi_routing} shows that stricter $\alpha_2$ values lead BalanceRAG to certify operating points with lower fallback rates, shifting more queries to the cheaper LLM-only branch. This reduces retrieval usage, with only a moderate decrease in coverage and correctly accepted samples. At the same time, both risks remain controlled in all reported configurations: the empirical selection-conditioned risk stays below $\alpha_1=0.15$, and the fallback invocation rate stays below $\alpha_2$. These results show that multi-risk BalanceRAG provides an explicit mechanism for navigating the reliability-utility-retrieval trade-off.

\section{Conclusion}
\label{sec:conclusion}
In this paper, we develop BalanceRAG, a training-free framework for joint threshold calibration in risk-controlled cascaded LLM-RAG systems. By certifying candidate threshold pairs on a two-dimensional lattice with SGT, BalanceRAG selects high-utility operating points while providing finite-sample, high-probability control of the selection-conditioned error rate. The framework also supports multi-risk calibration to bound fallback retrieval usage together with answer risk. Across open-domain QA benchmarks and diverse backbones, BalanceRAG meets prescribed risk levels, preserves high coverage and accepted correct answers, and reduces unnecessary retrieval calls. These results suggest a principled path from heuristic routing to reliable cascaded LLM-RAG. 
\section*{Limitations}

BalanceRAG relies on the standard split-calibration assumption that the calibration and deployment examples are drawn from the same distribution. If the query distribution, retrieval corpus, or backbone model changes substantially, the calibrated thresholds may need to be updated. This is a common limitation of finite-sample calibration methods and can be addressed in practice through periodic recalibration.

The guarantee is also defined with respect to the chosen correctness criterion. Although we evaluate semantic similarity, bidirectional entailment, and LLM-as-a-Judge variants, open-domain QA correctness can still be ambiguous, especially for partially correct or underspecified answers. This reflects a broader challenge in evaluating free-form generation rather than a limitation specific to BalanceRAG.

Finally, our experiments focus on two-branch LLM-RAG cascades. Extending the calibration principle to multi-stage retrieval, tool-augmented systems, or agentic workflows is a promising direction, but may require more scalable search and calibration over larger routing spaces.
\bibliography{custom}

\clearpage\onecolumn
\appendix

\section{Proofs}
\label{app:proofs}

This appendix provides the proofs deferred from Section~\ref{sec:method}.
Throughout, $\{(x_\ell,y^\star_\ell\}_{\ell=1}^n$ is the i.i.d.\ calibration
set, and for any threshold pair $\lambda=(t_1,t_2)$ we use the notation
$M(\lambda)$, $K(\lambda)$, $R(\lambda)$ from
Section~\ref{subsec:problem_formulation}. We also write $F_{m,q}$ for the
CDF of the binomial distribution $\mathrm{Bin}(m,q)$.

\subsection{Super-uniformity of the binomial \texorpdfstring{$p$}{p}-value}
\label{app:proof_pvalue}

\begin{lemma}[Super-uniformity]
\label{lem:pvalue}
For each lattice node $\lambda_{i,j}$, let
\begin{equation}
\label{eq:pvalue_app}
p_{i,j} \;=\; F_{M(\lambda_{i,j}),\,\alpha}\!\bigl(K(\lambda_{i,j})\bigr).
\end{equation}
Under the null hypothesis $H_{\lambda_{i,j}}: R(\lambda_{i,j})\ge\alpha$,
$p_{i,j}$ is super-uniform, i.e.\ for every $t\in[0,1]$,
\begin{equation}
\mathbb{P}_{H_{\lambda_{i,j}}}\!\bigl(p_{i,j}\le t\bigr)\;\le\;t.
\end{equation}
\end{lemma}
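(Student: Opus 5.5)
The plan is to condition on the accepted count $M=M(\lambda_{i,j})$ and reduce everything to a statement about a single binomial CDF. Two ingredients are needed: the conditional binomial model from Section~\ref{subsec:joint_threshold_calibration}, and stochastic monotonicity of the binomial family in its success probability.

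\textbf{Step 1 (conditional model).} Fix the node and write $q=R(\lambda_{i,j})$. By i.i.d.\ sampling, the pairs $(S_\ell,Z_\ell)$ are i.i.d. Moreover $Z_\ell\le S_\ell$, and $\mathbb{P}(Z_\ell=1\mid S_\ell=1)=q$. Conditioning on the set of accepted indices, the errors among accepted points are therefore i.i.d.\ Bernoulli$(q)$. Hence $K\mid M=m\sim\mathrm{Bin}(m,q)$, which is the model in Eq.~\eqref{eq:binomial_model}. If $m=0$, then $K=0$ and $p=F_{0,\alpha}(0)=1$, so the bound $\mathbb{P}(p\le t\mid M=0)\le t$ holds trivially for $t<1$; for $t=1$ it holds for any $m$. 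The same conditioning argument also shows the thresholds must be fixed in advance, i.e.\ independent of the calibration data. This is true for the lattice here, with the initialization split held out.

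\textbf{Step 2 (stochastic dominance).} Under the null $q\ge\alpha$, $\mathrm{Bin}(m,q)$ stochastically dominates $\mathrm{Bin}(m,\alpha)$. I would show this by the standard coupling: take $U_1,\dots,U_m$ i.i.d.\ uniform and set $K_q=\sum_i \mathbbm{1}[U_i\le q]$. Then $K_q\ge K_\alpha$ pointwise. Consequently, for every $k$,
\[
\mathbb{P}(K_q\le k)\le F_{m,\alpha}(k).
\]

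\textbf{Step 3 (super-uniformity of the CDF transform).} This is the main, though routine, obstacle, because of discreteness. Let $G=F_{m,\alpha}$, which is nondecreasing and right-continuous. Fix $t$ and set $k^*=\max\{k: G(k)\le t\}$; if no such $k$ exists, the probability is $0$. Since $G$ is nondecreasing, the event $\{G(K)\le t\}$ equals $\{K\le k^*\}$. Combining with Step 2 gives
\[
\mathbb{P}(G(K)\le t\mid M=m)=\mathbb{P}(K_q\le k^*)\le G(k^*)\le t.
\]

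\textbf{Step 4.} Averaging over the distribution of $M$ yields the unconditional bound $\mathbb{P}_{H}(p_{i,j}\le t)\le t$. The null as stated is $R\ge\alpha$, which contains the main-text null $R>\alpha$, so the lemma covers Eq.~\eqref{eq:hypothesis}.
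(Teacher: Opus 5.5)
Your proof is correct and follows essentially the same route as the paper: condition on $M$ to get $K\mid M\sim\mathrm{Bin}(M,R)$, use the stochastic dominance $F_{M,R}\le F_{M,\alpha}$ under the null, and then marginalize over $M$. The only difference is cosmetic: in Step 3 you bound $\mathbb{P}(F_{m,\alpha}(K)\le t)$ directly through the lower set $\{K\le k^*\}$, whereas the paper applies the generalized probability integral transform to the true CDF $F_{M,R}$ and transfers via the pointwise inequality, so your version in effect proves that transform inline (and your coupling supplies an explicit proof of the dominance the paper only asserts).
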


\begin{proof}
Fix $\lambda=\lambda_{i,j}$ and abbreviate $M=M(\lambda)$, $K=K(\lambda)$,
$R=R(\lambda)$. Conditionally on $M$, the samples accepted by $\lambda$
are i.i.d.\ Bernoulli with error probability $R$, hence
\begin{equation}
\label{eq:bin_cond}
K \mid M \;\sim\; \mathrm{Bin}(M,R).
\end{equation}

Under $H_{\lambda}$ we have $R\ge\alpha$, which implies the stochastic
ordering $\mathrm{Bin}(M,R)\succeq_{\mathrm{st}}\mathrm{Bin}(M,\alpha)$;
equivalently, for every integer $k$,
\begin{equation}
\label{eq:stoch_dom}
F_{M,R}(k) \;\le\; F_{M,\alpha}(k).
\end{equation}

By the generalized probability integral transform applied to the
(possibly discrete) CDF $F_{M,R}$, for every $t\in[0,1]$,
\begin{equation}
\label{eq:pit}
\mathbb{P}\!\bigl(F_{M,R}(K)\le t\mid M\bigr) \;\le\; t.
\end{equation}

Combining Eq.~\ref{eq:stoch_dom} and Eq.~\ref{eq:pit} gives
\begin{align}
\mathbb{P}(p_{i,j}\le t \mid M)
&= \mathbb{P}\!\bigl(F_{M,\alpha}(K)\le t \mid M\bigr) \nonumber\\
&\le \mathbb{P}\!\bigl(F_{M,R}(K)\le t \mid M\bigr) \nonumber\\
&\le t.
\label{eq:pvalue_cond}
\end{align}
Marginalizing over $M$ yields
$\mathbb{P}_{H_{\lambda_{i,j}}}(p_{i,j}\le t)\le t$.
\end{proof}

\subsection{Shell-equal budget property of DWD}
\label{app:proof_shell}

We record here a structural property of DWD that motivates its design
(cf.\ Eq.~\ref{eq:shell_equal_budget} in the main text). The result is
self-contained and does not depend on any hypothesis-testing content;
while it is not used in the proof of Theorem~\ref{thm:main_guarantee},
it provides useful intuition for the behavior of DWD on the lattice.

\begin{lemma}[Shell-equal budget]
\label{lem:shell}
Let the lattice $\Lambda$ have root $(0,0)$ with initial budget
$\delta_{0,0}=\delta$ and all other budgets set to zero. If the edge
weights follow the DWD rule in Eq.~\ref{eq:diamond_weights} and each
node forwards its budget to neighbors once reached, then every node
$(i,j)$ with $i+j=d$ accumulates
\begin{equation}
\label{eq:shell_budget_app}
\delta^{\mathrm{DWD}}_{i,j} \;=\; \frac{\delta}{d+1}.
\end{equation}
\end{lemma}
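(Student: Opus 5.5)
Induction on the diagonal index $d=i+j$ gives the result. The statement concerns pure budget flow on the DAG. Each node $(i,j)$ receives budget only from its two possible predecessors $(i-1,j)$ and $(i,j-1)$, and it forwards all of its accumulated budget along the DWD weights. We therefore claim that $\delta_{i,j}=\delta/(d+1)$ for every node on shell $d$. The base case $d=0$ holds by assumption, since $\delta_{0,0}=\delta$.

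For the inductive step, we fix $(i,j)$ with $i+j=d+1$ and write its accumulated budget as the sum of its incoming contributions:
\begin{equation*}
\delta_{i,j}=\delta_{i-1,j}\,g^{\mathrm{DWD}}_{(i-1,j)\to(i,j)}+\delta_{i,j-1}\,g^{\mathrm{DWD}}_{(i,j-1)\to(i,j)},
\end{equation*}
where a term is omitted if its predecessor has a negative index. Both predecessors lie on shell $d$, so by induction each carries $\delta/(d+1)$. Using Eq.~\eqref{eq:diamond_weights}, the weight from $(i-1,j)$ to $(i,j)$ is $i/(i+j+1)=i/(d+2)$, and the weight from $(i,j-1)$ to $(i,j)$ is $j/(d+2)$. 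Hence
\begin{equation*}
\delta_{i,j}=\frac{\delta}{d+1}\cdot\frac{i+j}{d+2}=\frac{\delta}{d+1}\cdot\frac{d+1}{d+2}=\frac{\delta}{d+2}.
\end{equation*}
The boundary cases are handled by the same formula. When $i=0$ the missing predecessor contributes the factor $i=0$ anyway, and similarly when $j=0$.

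As a consistency check, the out-weights of each node sum to $(i+1+j+1)/(i+j+2)=1$, so budget is conserved. Summing over shell $d$ gives $(d+1)\cdot\delta/(d+1)=\delta$, which is consistent with the result.

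The main obstacle is making the model explicit. The lemma assumes an unbounded lattice, or equivalently shells lying strictly inside the grid, so that each node has both successors and the DWD weights apply as written. On a finite grid, boundary nodes with $|\mathcal{N}(i,j)|=1$ would have to renormalize their weights, and this would break the exact identity near the far boundary. I would therefore state the result for shells $d$ whose nodes all have full successor sets, which is the regime the lemma implicitly intends, and note that "forwards once reached" means the accumulated, not the local, budget is transmitted.
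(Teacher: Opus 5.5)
Your proof is correct and follows the paper's argument essentially line by line: induction on the diagonal $d$, with the two parents of a node $(i,j)$ on shell $d+1$ contributing $\frac{\delta}{d+1}\cdot\frac{i}{d+2}$ and $\frac{\delta}{d+1}\cdot\frac{j}{d+2}$, and boundary nodes covered by the vanishing numerator. One correction to your closing caveat: renormalization at far-edge nodes is not part of Eq.~\eqref{eq:diamond_weights}, and without it the same induction gives $\delta/(d+1)$ at every node of a finite grid, so you do not need to restrict to shells with full successor sets — mass directed off-grid is simply dropped, which SGT allows since out-weights need only sum to at most $1$, and a node's predecessors always lie in the grid.
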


\begin{proof}
We prove Eq.~\ref{eq:shell_budget_app} by induction on $d$.

\textbf{Base case ($d=0$).}
The only node is $(0,0)$ itself, which trivially receives
$\delta^{\mathrm{DWD}}_{0,0}=\delta=\delta/(0+1)$.

\textbf{Inductive step.}
Assume the claim holds on diagonal $d$. Let $(a,b)$ be any node on
diagonal $d+1$, so $a+b=d+1$. Budget reaches $(a,b)$ only from its
in-lattice parents $(a-1,b)$ and $(a,b-1)$. Using
Eq.~\ref{eq:diamond_weights},
\begin{align}
g^{\mathrm{DWD}}_{(a-1,b)\to(a,b)}
&= \frac{a}{(a-1)+b+2} = \frac{a}{d+2}, \label{eq:wt_left}\\
g^{\mathrm{DWD}}_{(a,b-1)\to(a,b)}
&= \frac{b}{a+(b-1)+2} = \frac{b}{d+2}. \label{eq:wt_down}
\end{align}
By the inductive hypothesis, both parents carry budget $\delta/(d+1)$.
Summing the incoming contributions,
\begin{align}
\delta^{\mathrm{DWD}}_{a,b}
&= \frac{\delta}{d+1}\cdot\frac{a}{d+2}
 + \frac{\delta}{d+1}\cdot\frac{b}{d+2}\nonumber\\
&= \frac{\delta}{d+1}\cdot\frac{a+b}{d+2}\nonumber\\
&= \frac{\delta}{d+1}\cdot\frac{d+1}{d+2}
 = \frac{\delta}{d+2},
\label{eq:shell_step}
\end{align}
where we used $a+b=d+1$ in the penultimate step.

For boundary nodes with only one in-lattice parent ($a=0$ or $b=0$),
the missing term in Eq.~\ref{eq:shell_step} has numerator zero, and
the ``sole neighbor receives weight $1$'' convention makes the lone
parent contribute $\delta/(d+1)\cdot 1=\delta/(d+2)$ after the same
algebra. Hence every node on diagonal $d+1$ accumulates $\delta/(d+2)$,
completing the induction.
\end{proof}

\begin{remark}
Lemma~\ref{lem:shell} implies that the total budget on each diagonal
is conserved: summing $\delta/(d+1)$ over the $d+1$ nodes of diagonal
$d$ yields $\delta$.
\end{remark}

\subsection{Proof of Theorem~\ref{thm:main_guarantee}}
\label{app:proof_main}

\begin{proof}[Proof of Theorem~\ref{thm:main_guarantee}]
The proof combines two ingredients: the validity of the binomial
$p$-value (Lemma~\ref{lem:pvalue}) and the FWER control of SGT.

\textbf{(i) Valid $p$-values.}
By Lemma~\ref{lem:pvalue}, the $p$-value $p_{i,j}$ defined in
Eq.~\ref{eq:pvalue} is super-uniform under the null
$H_{\lambda_{i,j}}$ for every lattice node.

\textbf{(ii) FWER control by SGT.}
Both the UGD weights (Eq.~\ref{eq:diffusion_weights}) and the DWD
weights (Eq.~\ref{eq:diamond_weights}) are non-negative, satisfy
$g_{(i,j)\to(i,j)}=0$, and obey
$\sum_{u\in\mathcal{N}(i,j)} g_{(i,j)\to u}\le 1$ at every node.
Hence, the proposed graph satisfies the regularity conditions of
\citet{https://doi.org/10.1002/sim.3495}. Combined with the initial budget allocation
$\sum_{(i,j)} \delta_{i,j}\le \delta$ and the super-uniform $p$-values
from (i), Algorithm~1 of \citet{https://doi.org/10.1002/sim.3495} yields a sequentially
rejective multiple testing procedure that strongly controls the
family-wise error rate at level $\delta$. Therefore, with the null
hypotheses defined as $H_\lambda: R(\lambda)>\alpha$, the certified set
$\widehat{\Lambda}$ returned by the procedure satisfies
\begin{equation}
\label{eq:fwer_in_proof}
\mathbb{P}\!\Bigl(\sup_{\lambda\in\widehat{\Lambda}} R(\lambda)\le\alpha\Bigr)
\;\ge\;1-\delta,
\end{equation}
which recovers Eq.~\ref{eq:fwer_guarantee}.

\textbf{(iii) From $\widehat{\Lambda}$ to $\hat{\lambda}$.}
Let
$\mathcal{E}:=\{\sup_{\lambda\in\widehat{\Lambda}} R(\lambda)\le\alpha\}$;
by (ii), $\mathbb{P}(\mathcal{E})\ge 1-\delta$. On $\mathcal{E}$, the
selection rule in Eq.~\ref{eq:select_final} picks
$\hat{\lambda}\in\widehat{\Lambda}$, so
\begin{equation}
R(\hat{\lambda})
\;\le\;
\sup_{\lambda\in\widehat{\Lambda}} R(\lambda)
\;\le\;
\alpha.
\end{equation}
Hence $\mathcal{E}\subseteq\{R(\hat{\lambda})\le\alpha\}$, and
\begin{equation}
\mathbb{P}\!\bigl(R(\hat{\lambda})\le\alpha\bigr)
\;\ge\;
\mathbb{P}(\mathcal{E})
\;\ge\;
1-\delta.\qedhere
\end{equation}
\end{proof}

\begin{remark}
The proof does \emph{not} rely on the specific form of the selection
rule in Eq.~\ref{eq:select_final}; any measurable rule that picks
$\hat{\lambda}\in\widehat{\Lambda}$ preserves the guarantee.
\end{remark}

\newpage
\section{Details of Experimental Setup}
\paragraph{Details of Datasets and Models}
We evaluate BalanceRAG on three open-domain QA datasets, TriviaQA, SQuAD~v2, and Natural Questions (NQ). All datasets are processed into a unified question-answer evaluation format, where each example contains a question, an answer, an id, and the corresponding prompt. The preprocessing stage filters out examples with empty questions, empty answers, non-ASCII questions or answers, or inputs that exceed the maximum context length of the model. The default maximum numbers of retained examples are 7000 for TriviaQA, 5000 for SQuAD~v2, and 2000 for NQ. Since the cleaning results after generation may differ across models, the actual number of examples used in the experiments can vary slightly. We therefore report the final number of evaluated examples as $N$ in Table~\ref{tab:branch_base_error_single}. We use eight backbone LLMs, including Qwen2.5-3B-Instruct, Qwen2.5-7B-Instruct, Qwen3-4B, Qwen3-14B, Llama3.2-3B-Instruct, Llama3.1-8B-Instruct, Vicuna-v1.5-7B, and Vicuna-v1.5-13B. Table~\ref{tab:branch_base_error_single} reports the base error rates of the two branches before calibration, where Only denotes the error rate of the LLM-only branch, RAG denotes the error rate of the LLM-RAG branch, and $\Delta=Only-RAG$ denotes the change in error rate brought by retrieval.

\begin{table}[!htbp]
\vspace{-2mm}
\centering
\small
\setlength{\tabcolsep}{6pt}
\renewcommand{\arraystretch}{1.08}
\caption{Branch-level base error rates.
$N$ denotes the number of examples.
Only and RAG are the pre-calibration error rates of the
LLM-only and LLM-RAG branches, and
$\Delta=Only-RAG$ indicates the retrieval gain.
Blue cells highlight the largest retrieval gain within each dataset.}
\label{tab:branch_base_error_single}
\begin{tabular}{lcccc@{\hspace{16pt}}lcccc}
\toprule
\rowcolor{HeaderGray}
Model & $N$ & Only $\downarrow$ & RAG $\downarrow$ & $\Delta \uparrow$
& Model & $N$ & Only $\downarrow$ & RAG $\downarrow$ & $\Delta \uparrow$ \\
\midrule

\rowcolor{GroupBlue}
\multicolumn{10}{l}{\textbf{TriviaQA}} \\
Qwen2.5-3B  & 5929 & 0.2739 & 0.1558 & \cellcolor{GainBlue}\textbf{0.1181}
& Llama3.2-3B & 6248 & 0.2769 & 0.2130 & 0.0639 \\
Qwen2.5-7B  & 6196 & 0.1699 & 0.1298 & 0.0401
& Llama3.1-8B & 6365 & 0.2016 & 0.1475 & 0.0541 \\
Qwen3-4B    & 6510 & 0.2167 & 0.1246 & 0.0921
& Vicuna-7B   & 4183 & 0.2417 & 0.1831 & 0.0586 \\
Qwen3-14B   & 6063 & 0.1921 & 0.1070 & 0.0851
& Vicuna-13B  & 3036 & 0.1989 & 0.1607 & 0.0382 \\

\addlinespace[3pt]

\rowcolor{GroupBlue}
\multicolumn{10}{l}{\textbf{SQuAD}} \\
Qwen2.5-3B  & 4475 & 0.2291 & 0.1321 & 0.0970
& Llama3.2-3B & 4482 & 0.3135 & 0.1772 & 0.1363 \\
Qwen2.5-7B  & 4375 & 0.2027 & 0.1191 & 0.0836
& Llama3.1-8B & 4210 & 0.3192 & 0.1202 & \cellcolor{GainBlue}\textbf{0.1990} \\
Qwen3-4B    & 4557 & 0.2203 & 0.0898 & 0.1305
& Vicuna-7B   & 3461 & 0.3389 & 0.1410 & 0.1979 \\
Qwen3-14B   & 4360 & 0.2122 & 0.0954 & 0.1168
& Vicuna-13B  & 3031 & 0.2748 & 0.1488 & 0.1260 \\

\addlinespace[3pt]

\rowcolor{GroupBlue}
\multicolumn{10}{l}{\textbf{NQ}} \\
Qwen2.5-3B  & 1732 & 0.3816 & 0.1819 & 0.1997
& Llama3.2-3B & 1808 & 0.3711 & 0.2163 & 0.1548 \\
Qwen2.5-7B  & 1755 & 0.3419 & 0.1744 & 0.1675
& Llama3.1-8B & 1792 & 0.3203 & 0.1479 & 0.1724 \\
Qwen3-4B    & 1858 & 0.3617 & 0.1335 & 0.2282
& Vicuna-7B   & 1599 & 0.4953 & 0.1932 & \cellcolor{GainBlue}\textbf{0.3021} \\
Qwen3-14B   & 1807 & 0.3298 & 0.1384 & 0.1914
& Vicuna-13B  & 1196 & 0.3972 & 0.1898 & 0.2074 \\

\bottomrule
\end{tabular}
\vspace{-2mm}
\end{table}
\paragraph{Details of Corpus Construction}
For each dataset, we build a dataset-level retrieval corpus from the raw evidence fields. Each document consists of a title and a passage. TriviaQA uses evidence from Wikipedia entity pages and web search results, SQuAD~v2 uses the article title and context passage, and Natural Questions uses the preprocessed long-answer passage or the majority-voted long-answer candidate. We remove documents with missing titles or passages, strip extra whitespace, and deduplicate exact title--passage pairs across the corpus. The resulting corpus is cached and reused for dense retrieval.

\paragraph{Details of Answer Generation and Retrieval}
All datasets use a one-shot prompt. The LLM-only branch provides only the question itself and one example question-answer pair. The LLM-RAG branch prepends retrieved evidence to the same question-answer format and asks the model to answer based on the evidence. The most likely answer is generated with beam search using a beam size of 5, without sampling. The sampled generations for uncertainty estimation are generated with sampling, where each input has 20 sampled answers, the temperature is 1.0, and top-$p$ is 0.9. The default maximum generation length is 30 tokens. The RAG branch uses dense retrieval with all-MiniLM-L6-v2 as the retriever encoder. During retrieval, the document title and passage are concatenated and encoded, the embeddings are L2-normalized, and the documents are ranked by dot product. For each question, we use the top-4 retrieved documents by default, and each passage is truncated to at most 1200 characters.

\paragraph{Details of Correctness Criteria}
By default, we use semantic similarity as the correctness criterion. Specifically, we compute the similarity between the ground-truth answer and the most likely generation using a sentence-transformers cross-encoder, and set 0.7 as the default threshold. An example is treated as correct when its similarity score is no lower than this threshold. This correctness label is used to construct the error indicator in the calibration risk. In addition to the default setting, we also use entailment and LLM-as-a-Judge in the robustness experiments. The entailment setting uses DeBERTa-v3-large-mnli-fever-anli-ling-wanli for NLI-based judgment. The LLM-as-a-Judge setting uses Qwen3-14B as the judge and formulates the judgment as a three-way classification problem, where both correct and partial are treated as acceptable correct answers.

\begingroup
\setlength{\floatsep}{0pt}
\setlength{\textfloatsep}{0pt}
\setlength{\intextsep}{5pt}
\setlength{\abovecaptionskip}{1pt}
\setlength{\belowcaptionskip}{0pt}
\begin{algorithm}[H]
\small
\caption{Single-branch calibration for LLM-only and LLM-RAG}
\label{alg:single_branch}
\begin{algorithmic}[1]
\Require 
$\mathcal{D}_{\text{cal}}=\{(x_\ell,u_b(x_\ell),c_b(x_\ell))\}_{\ell=1}^n$,
where $u_b$ is uncertainty and $c_b=1$ denotes correctness;
branch $b\in\{\text{LLM},\text{RAG}\}$; target risk $\alpha$; error level $\delta$
\Ensure  threshold $\hat{t}$
\State $\mathcal{T} \gets$ sorted unique values of $\{u_b(x_\ell)\}_{\ell=1}^{n}$
\State $\hat{t} \gets \varnothing$
\For{$t \in \mathcal{T}$ in ascending order}
    \State $m \gets |\{\ell : u_b(x_\ell) \le t\}|$;\;
           $w \gets |\{\ell : u_b(x_\ell) \le t,\; c_b(x_\ell)=0\}|$
    \If{$m = 0$} \textbf{continue} \EndIf
    \State $p(t) \gets \mathbb{P}\bigl(\mathrm{Bin}(m,\alpha)\le w\bigr)$
    \If{$p(t) \le \delta$}
        \State $\hat{t} \gets t$ \Comment{keep scanning for a looser threshold}
    \Else
        \State \textbf{break} \Comment{monotone stopping}
    \EndIf
\EndFor
\State \Return $\hat{t}$
\end{algorithmic}
\end{algorithm}
\begin{algorithm}[H]
\small
\caption{UCB-Cascaded calibration}
\label{alg:ucb_cascaded}
\begin{algorithmic}[1]
\Require 
$\mathcal{D}_{\text{cal}}=\{(x_\ell,u_1(x_\ell),u_2(x_\ell),c_1(x_\ell),c_2(x_\ell))\}_{\ell=1}^n$;
target risk $\alpha$; error level $\delta$; bound type
$B \in \{\text{CP}, \text{HFD}\}$
\Ensure  thresholds $(\hat{t}_1, \hat{t}_2)$
\State Split the confidence budget: $\delta_{\text{stage}} \gets \delta / 2$
\Statex
\Statex \textit{\# Stage 1: calibrate $t_1$ on the full calibration set}
\State $\mathcal{T}_1 \gets$ sorted unique values of $\{u_1(x_\ell)\}_{\ell=1}^n$
\State $\hat{t}_1 \gets -\infty$
\For{$t \in \mathcal{T}_1$ in ascending order}
    \State $m \gets |\{\ell : u_1(x_\ell) \le t\}|$;\;
           $w \gets |\{\ell : u_1(x_\ell) \le t,\; c_1(x_\ell)=0\}|$
    \State $\bar{R}(t) \gets \textsc{UpperBound}_B(m, w;\, \delta_{\text{stage}})$
    \If{$\bar{R}(t) \le \alpha$}
        \State $\hat{t}_1 \gets t$ \Comment{largest feasible so far}
    \Else
        \State \textbf{break}
    \EndIf
\EndFor
\Statex
\Statex \textit{\# Stage 2: calibrate $t_2$ on the residual set}
\State $\mathcal{R} \gets \{\ell : u_1(x_\ell) > \hat{t}_1\}$ \Comment{deferred samples}
\State $\mathcal{T}_2 \gets$ sorted unique values of $\{u_2(x_\ell) : \ell \in \mathcal{R}\}$
\State $\hat{t}_2 \gets -\infty$
\For{$t \in \mathcal{T}_2$ in ascending order}
    \State $m \gets |\{\ell \in \mathcal{R} : u_2(x_\ell) \le t\}|$;\;
           $w \gets |\{\ell \in \mathcal{R} : u_2(x_\ell) \le t,\; c_2(x_\ell)=0\}|$
    \State $\bar{R}(t) \gets \textsc{UpperBound}_B(m, w;\, \delta_{\text{stage}})$
    \If{$\bar{R}(t) \le \alpha$}
        \State $\hat{t}_2 \gets t$
    \Else
        \State \textbf{break}
    \EndIf
\EndFor
\State \Return $(\hat{t}_1, \hat{t}_2)$
\end{algorithmic}
\end{algorithm}

\endgroup
\paragraph{Details of Uncertainty Estimators}
The default uncertainty estimator is semantic entropy based on sampled answer clusters, denoted as SE. For each input, we sample 20 answers to compute the uncertainty score. In addition to SE, we also evaluate EigV, Deg, Ecc, and SELF. SE computes the entropy of the semantic-cluster distribution over sampled generations. EigV, Deg, and Ecc construct a graph from the pairwise similarity matrix of sampled generations and derive uncertainty scores from graph-structure statistics. SELF uses the length-normalized sentence entropy of the most likely generation.

\paragraph{Details of Baselines}
\label{app:baseline_details}
We provide three algorithms to clarify the calibration procedures of different baselines. Algorithm~\ref{alg:single_branch} describes single-branch calibration, which is used for LLM-only and LLM-RAG. These methods use the uncertainty score from a single branch and search over a one-dimensional threshold space for a feasible point that satisfies the target risk. Algorithm~\ref{alg:ucb_cascaded} describes UCB-Cascaded. This method first calibrates the LLM-only threshold on the full calibration set, then calibrates the RAG threshold on the residual subset not accepted by the LLM-only branch. It splits $\delta$ equally between the two stages, with each stage using $\delta/2$. UCB-Cascaded includes two variants based on the Clopper--Pearson upper bound and the Hoeffding upper bound. Algorithm~\ref{alg:balancerag} describes BalanceRAG, which jointly calibrates the LLM-only and RAG branches on a two-dimensional threshold lattice and uses sequential graphical testing for high-probability risk control. We report two graph diffusion priors, UGD and DWD, where DWD is the default main method. Additional baselines in the appendix include Adaptive RAG, Self-Route, Empirical Risk $<\alpha$, and Bonferroni correction. Adaptive RAG routes queries using question-complexity labels. Self-Route routes queries using route-judge labels. Empirical Risk $<\alpha$ selects the threshold pair with the largest number of accepted examples within the empirically feasible region. Bonferroni correction applies a conservative adjustment by dividing $\delta$ by the number of tested nodes.

\begingroup
\setlength{\floatsep}{0pt}
\setlength{\textfloatsep}{0pt}
\setlength{\intextsep}{5pt}
\setlength{\abovecaptionskip}{1pt}
\setlength{\belowcaptionskip}{0pt}

\begin{algorithm}[htbp]
\small

\caption{BalanceRAG calibration via SGT}
\label{alg:balancerag}
\begin{algorithmic}[1]
\Require $\mathcal{D}_{\text{cal}}$; target risk $\alpha$;
        error level $\delta$; graph prior $\mathcal{P} \in \{\textsc{UGD},\textsc{DWD}\}$
\Ensure  thresholds $(\hat{t}_1, \hat{t}_2)$
\Statex
\Statex \textit{\# Step 1: choose the initial node}
\State Discretize $u_1, u_2$ into grids of size $M \times N$, yielding
       lattice $\Lambda = \{\lambda_{i,j}\}$
\State Split $\mathcal{D}_{\text{cal}}$ into a seed split
       $\mathcal{D}_{\text{seed}}$ and an SGT split $\mathcal{D}_{\text{sgt}}$
\For{each $\lambda_{i,j} \in \Lambda$}
    \State $M^{\mathrm{seed}}_{ij} \gets M_{\mathcal{D}_{\text{seed}}}(\lambda_{i,j})$;\;
           $K^{\mathrm{seed}}_{ij} \gets K_{\mathcal{D}_{\text{seed}}}(\lambda_{i,j})$
\EndFor
\State $\mathcal{S}_{\text{seed}} \gets
       \{\lambda_{i,j}: M^{\mathrm{seed}}_{ij}>0,\;
       K^{\mathrm{seed}}_{ij}/M^{\mathrm{seed}}_{ij}\le \alpha\}$
\If{$\mathcal{S}_{\text{seed}}\neq\varnothing$}
    \State $s \gets \argmax_{\lambda_{i,j}\in\mathcal{S}_{\text{seed}}}
           M^{\mathrm{seed}}_{ij}$
\Else
    \State $s \gets \operatorname*{arg\,min}_{\lambda_{i,j}:M^{\mathrm{seed}}_{ij}>0}
           \mathbb{P}\bigl(\mathrm{Bin}(M^{\mathrm{seed}}_{ij},\alpha)
           \le K^{\mathrm{seed}}_{ij}\bigr)$
\EndIf
\Statex
\Statex \textit{\# Step 2: compute node-wise $p$-values}
\For{each $\lambda_{i,j} \in \Lambda$}
    \State $M_{ij} \gets M_{\mathcal{D}_{\text{sgt}}}(\lambda_{i,j})$;\;
           $K_{ij} \gets K_{\mathcal{D}_{\text{sgt}}}(\lambda_{i,j})$
    \State $p_{ij} \gets \mathbb{P}\bigl(\mathrm{Bin}(M_{ij},\alpha)\le K_{ij}\bigr)$
           \Comment{Eq.~\ref{eq:pvalue}}
\EndFor
\Statex
\Statex \textit{\# Step 3: initialize budget and graph prior}
\State $\delta_s \gets \delta$;\quad $\delta_{i,j} \gets 0$ for all $\lambda_{i,j}\neq s$
\State $\widehat{\Lambda} \gets \varnothing$
       \Comment{set of certified thresholds}
\State Set edge weights $g_{(i,j)\to u}$ from $\mathcal{P}$ with root node $s$
       (Eq.~\ref{eq:diffusion_weights} or~\ref{eq:diamond_weights})
\Statex
\Statex \textit{\# Step 4: sequential graphical testing}
\While{$\exists\,\lambda_{i,j} \in \Lambda\setminus\widehat{\Lambda}$
        such that $p_{ij}\le\delta_{i,j}$}
    \State Choose any such $\lambda_{i,j}$ maximizing $M_{ij}$
    \State $\widehat{\Lambda} \gets \widehat{\Lambda} \cup \{\lambda_{i,j}\}$
           \Comment{certify $\lambda_{i,j}$}
    \For{each successor $u\in\mathcal{N}(i,j)$ with $\lambda_u\notin\widehat{\Lambda}$}
        \State $\delta_u \gets \delta_u + \delta_{i,j}\cdot g_{(i,j)\to u}$
           \Comment{propagate budget}
    \EndFor
    \State $\delta_{i,j} \gets 0$
\EndWhile
\Statex
\Statex \textit{\# Step 5: select the best certified threshold pair}
\State $\hat{\lambda} \gets
       \argmax_{\lambda_{i,j}\in\widehat{\Lambda}} M_{ij}$
       \Comment{Eq.~\ref{eq:select_final}}
\State \Return $\hat{\lambda}=(\hat{t}_1,\hat{t}_2)$
\end{algorithmic}
\end{algorithm}
\endgroup

\newtcolorbox{promptbox}[2][]{
    enhanced,
    breakable,
    width=0.92\linewidth,
    colback=gray!3,
    colframe=gray!55,
    coltitle=white,
    colbacktitle=gray!75,
    title={\small\bfseries #2},
    fonttitle=\small\bfseries,
    boxrule=0.6pt,
    arc=2pt,
    left=10pt,
    right=10pt,
    top=7pt,
    bottom=7pt,
    titlerule=0pt,
    boxed title style={
        sharp corners,
        boxrule=0pt,
        colback=gray!75,
        left=8pt,
        right=8pt,
        top=3pt,
        bottom=3pt
    },
    #1
}

\newcommand{\promptrole}[1]{%
    \vspace{4pt}
    \noindent\textbf{\texttt{\#\#\# #1:}}\par
}

\newcommand{\promptslot}[1]{\texttt{\{#1\}}}
\clearpage
\newpage
\begin{figure*}[!htbp]
\centering
\captionsetup[subfigure]{skip=2pt}
\setlength{\abovecaptionskip}{3pt}
\setlength{\belowcaptionskip}{0pt}

\begin{promptbox}{Prompt for LLM-Only QA}
\footnotesize
\promptrole{System}
This is a bot that correctly answers questions.

\promptrole{User}
\promptslot{few-shot question}

\promptrole{Assistant}
\promptslot{few-shot answer}

\promptrole{User}
\promptslot{question}

\promptrole{Assistant}
\end{promptbox}

\vspace{0.4em}

\begin{promptbox}{Prompt for RAG QA}
\footnotesize
\promptrole{System}
This is a bot that correctly answers questions.\\
Answer the question based on the provided evidence.

\promptrole{Evidence}
\texttt{[Doc 1] Title: \promptslot{title\_1}}\\
\promptslot{passage\_1}

\vspace{1pt}
\texttt{[Doc 2] Title: \promptslot{title\_2}}\\
\promptslot{passage\_2}

\vspace{1pt}
\texttt{\ldots}

\promptrole{User}
\promptslot{few-shot question}

\promptrole{Assistant}
\promptslot{few-shot answer}

\promptrole{Evidence}
\promptslot{retrieved evidence}

\promptrole{User}
\promptslot{question}

\promptrole{Assistant}
\end{promptbox}

\vspace{0.4em}

\begin{promptbox}{Prompt for LLM-as-a-Judge Correctness Evaluation}
\footnotesize
You are an expert evaluator for open-ended QA correctness.

\vspace{2pt}
Given a question, a ground-truth answer, and a model's answer, decide which option best describes the model's answer:

\vspace{2pt}
\begin{enumerate}[
    label=\textbf{\Alph*.},
    leftmargin=2.2em,
    labelsep=0.8em,
    itemsep=1pt,
    topsep=1pt
]
    \item \textbf{correct} -- semantically equivalent to the ground-truth answer.
    \item \textbf{partial} -- related and contains some correct information but is incomplete or partially wrong.
    \item \textbf{incorrect} -- not compatible with the ground-truth answer.
\end{enumerate}

\vspace{2pt}
Respond by selecting exactly one of \textbf{A}, \textbf{B}, or \textbf{C}.

\promptrole{Question}
\promptslot{question}

\promptrole{Ground truth answer}
\promptslot{ground-truth answer}

\promptrole{Model answer}
\promptslot{model answer}

\promptrole{Answer}
\end{promptbox}

\caption{Prompt templates used for LLM-only QA, RAG QA, and LLM-as-a-Judge correctness evaluation.}
\label{fig:prompt_templates}
\end{figure*}

\newpage
\section{Additional Experimental Results}
\paragraph{Risk Control Holds Across Datasets}
Figures~\ref{fig:risk_level_squad_appendix} and~\ref{fig:risk_level_nq_appendix} show the results of BalanceRAG on SQuAD~v2 and NQ. Consistent with the TriviaQA results in the main text, BalanceRAG achieves stable risk control across different models and target risk levels, while its coverage is usually close to or even higher than that of LLM-RAG. More importantly, under stricter values of $\alpha$, the UCB-Cascaded baselines often fail to find a feasible point, which suggests that calibrating the two branches separately can be overly restrictive under a tight risk budget. In contrast, BalanceRAG still returns feasible solutions, showing that system-level calibration preserves reliability across datasets with different retrieval difficulty.

\begin{figure*}[!t]
  \centering
  \captionsetup[subfigure]{skip=0pt}
  \setlength{\abovecaptionskip}{2pt}
  \setlength{\belowcaptionskip}{0pt}

  \begin{subfigure}[t]{0.5\textwidth}
    \centering
    \includegraphics[width=\linewidth]{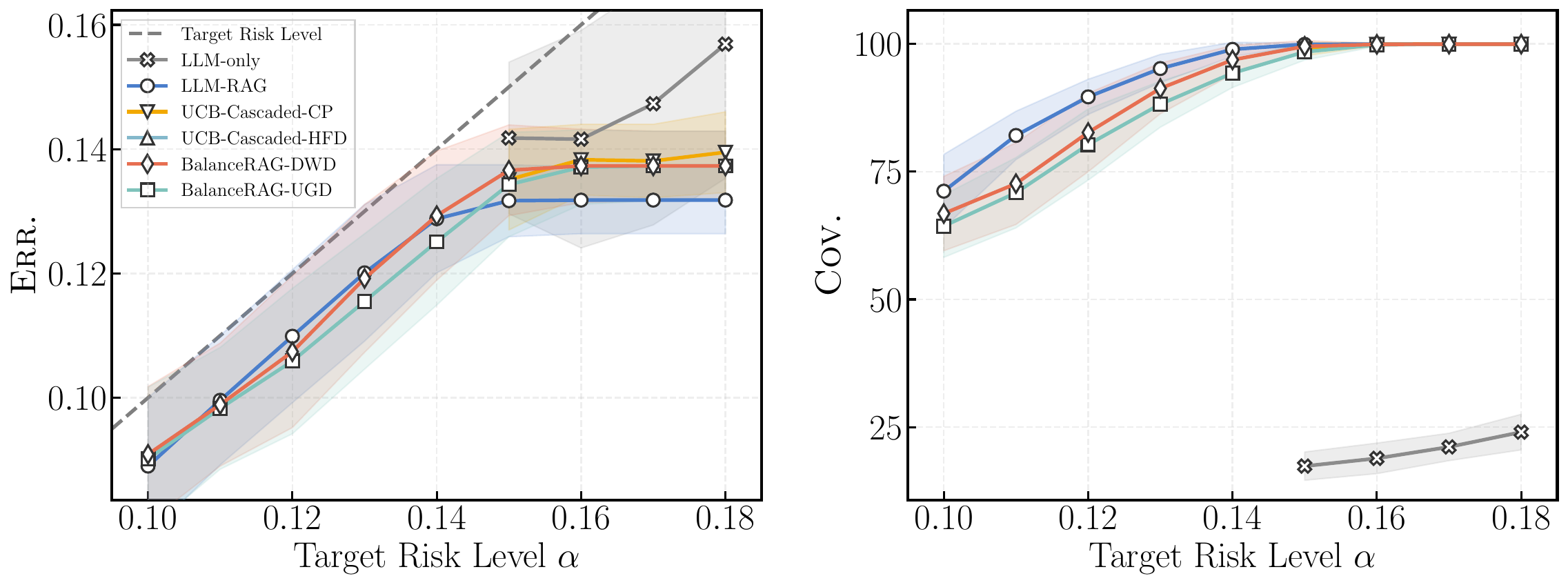}
    \caption{Qwen2.5-3B}
  \end{subfigure}%
  \begin{subfigure}[t]{0.5\textwidth}
    \centering
    \includegraphics[width=\linewidth]{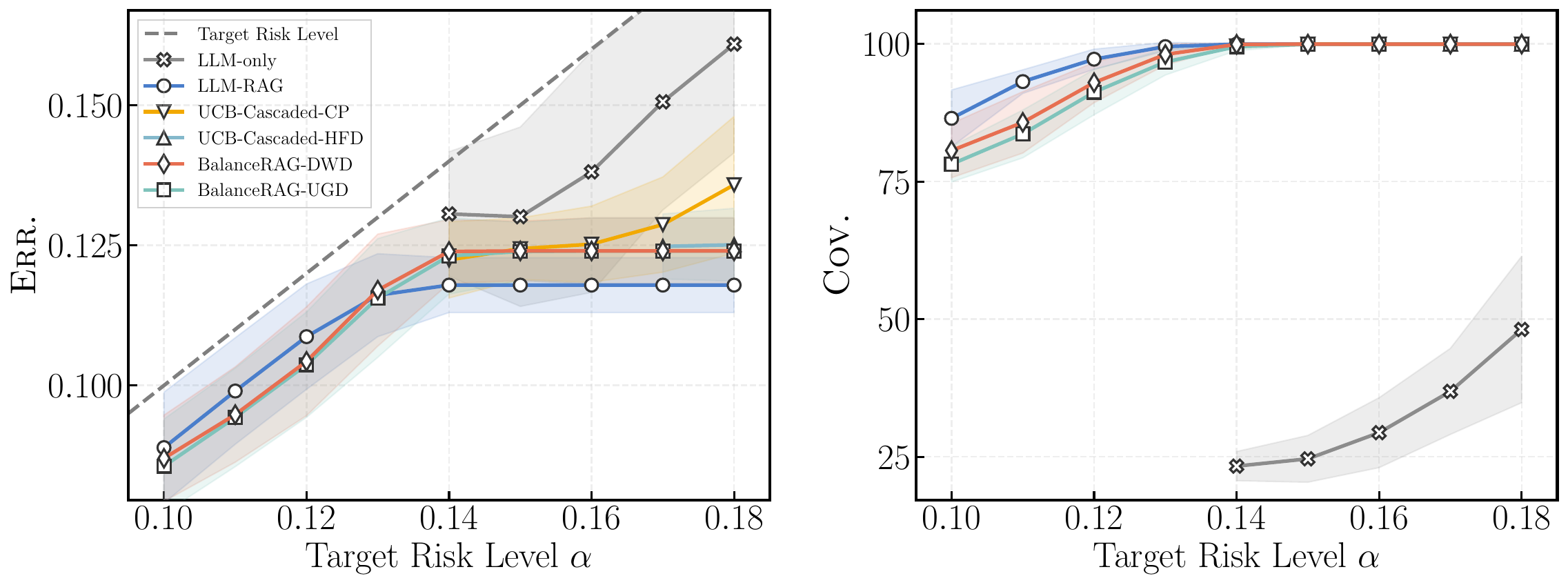}
    \caption{Qwen2.5-7B}
  \end{subfigure}

  \vspace{-0.1em}

  \begin{subfigure}[t]{0.5\textwidth}
    \centering
    \includegraphics[width=\linewidth]{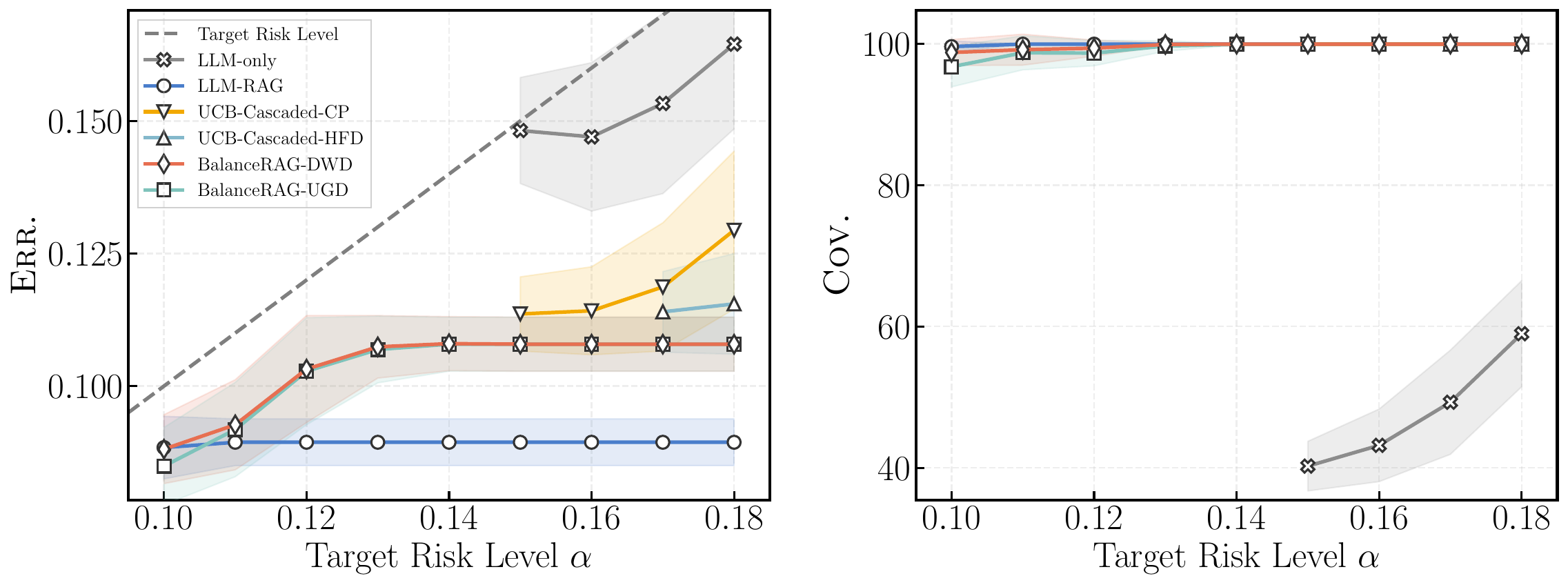}
    \caption{Qwen3-4B}
  \end{subfigure}%
  \begin{subfigure}[t]{0.5\textwidth}
    \centering
    \includegraphics[width=\linewidth]{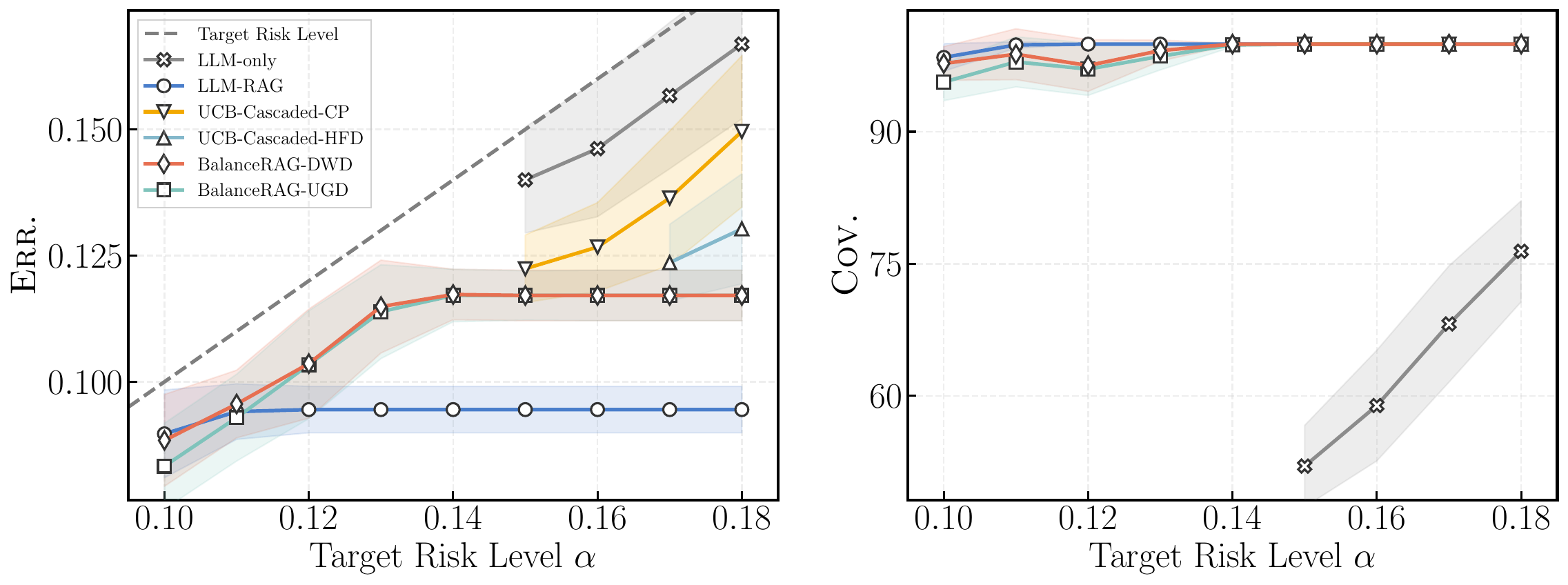}
    \caption{Qwen3-14B}
  \end{subfigure}

  \vspace{-0.1em}

  \begin{subfigure}[t]{0.5\textwidth}
    \centering
    \includegraphics[width=\linewidth]{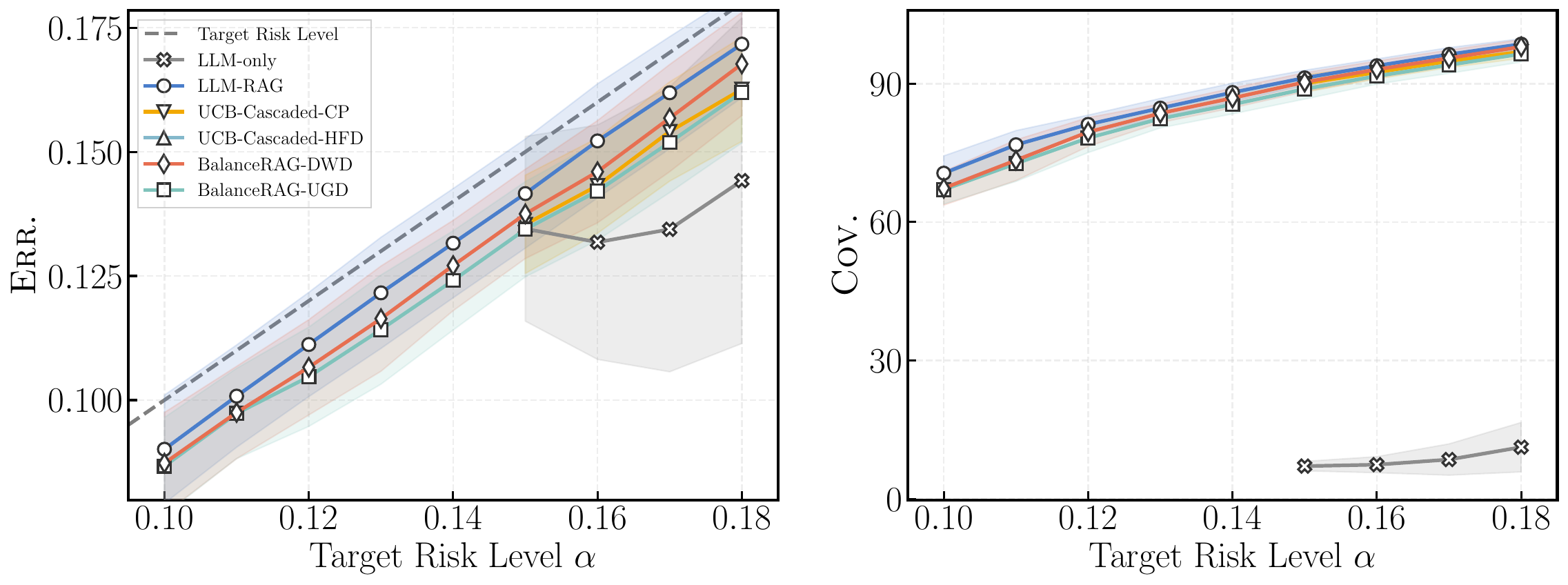}
    \caption{Llama3.2-3B}
  \end{subfigure}%
  \begin{subfigure}[t]{0.5\textwidth}
    \centering
    \includegraphics[width=\linewidth]{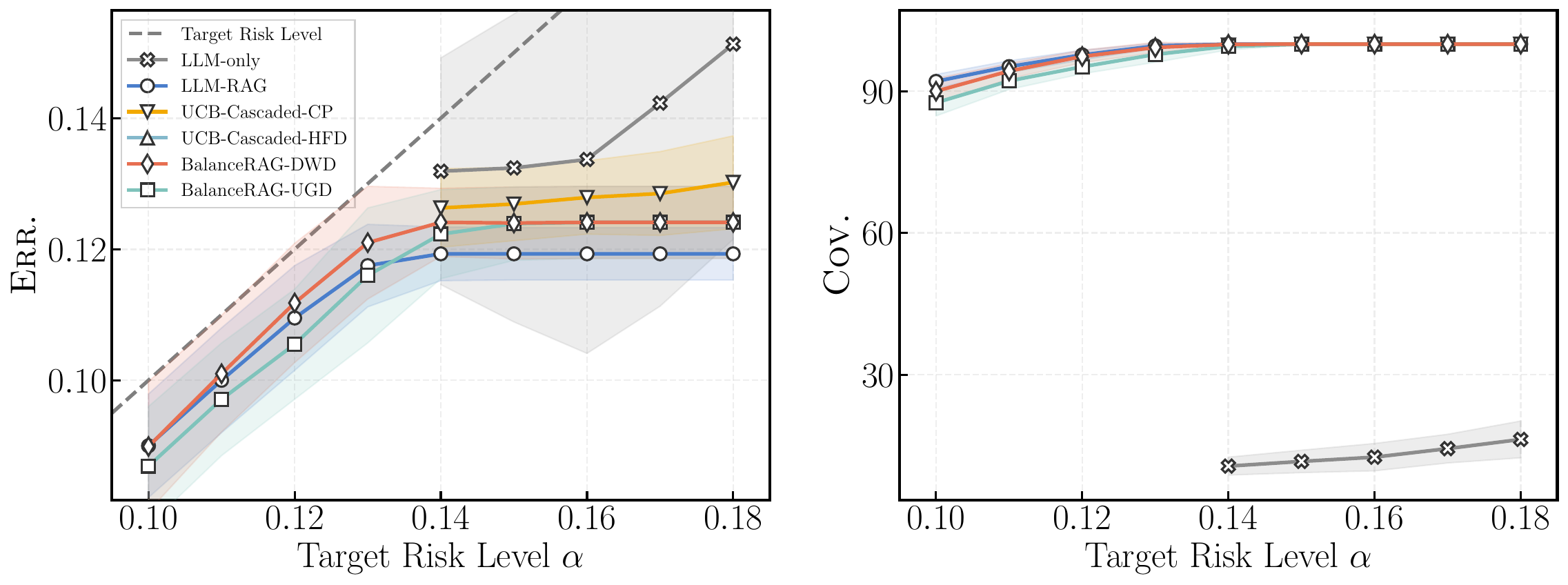}
    \caption{Llama3.1-8B}
  \end{subfigure}

  \vspace{-0.1em}

  \begin{subfigure}[t]{0.5\textwidth}
    \centering
    \includegraphics[width=\linewidth]{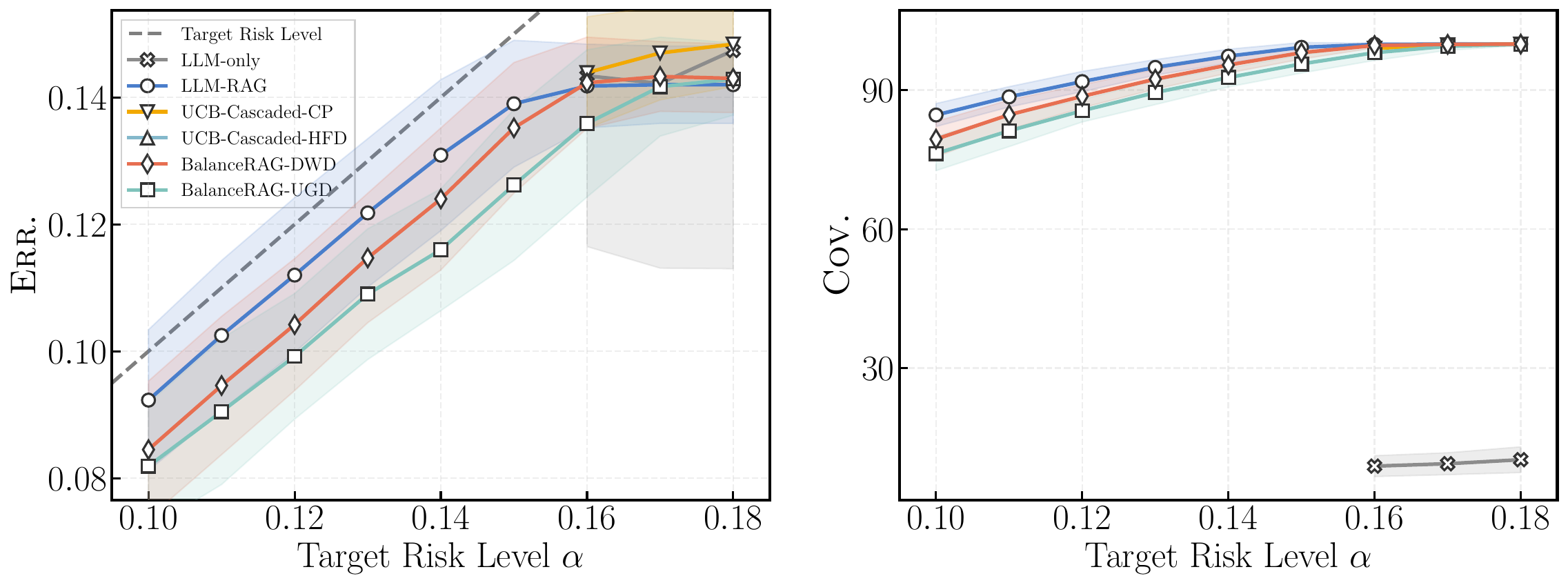}
    \caption{Vicuna-7B}
  \end{subfigure}%
  \begin{subfigure}[t]{0.5\textwidth}
    \centering
    \includegraphics[width=\linewidth]{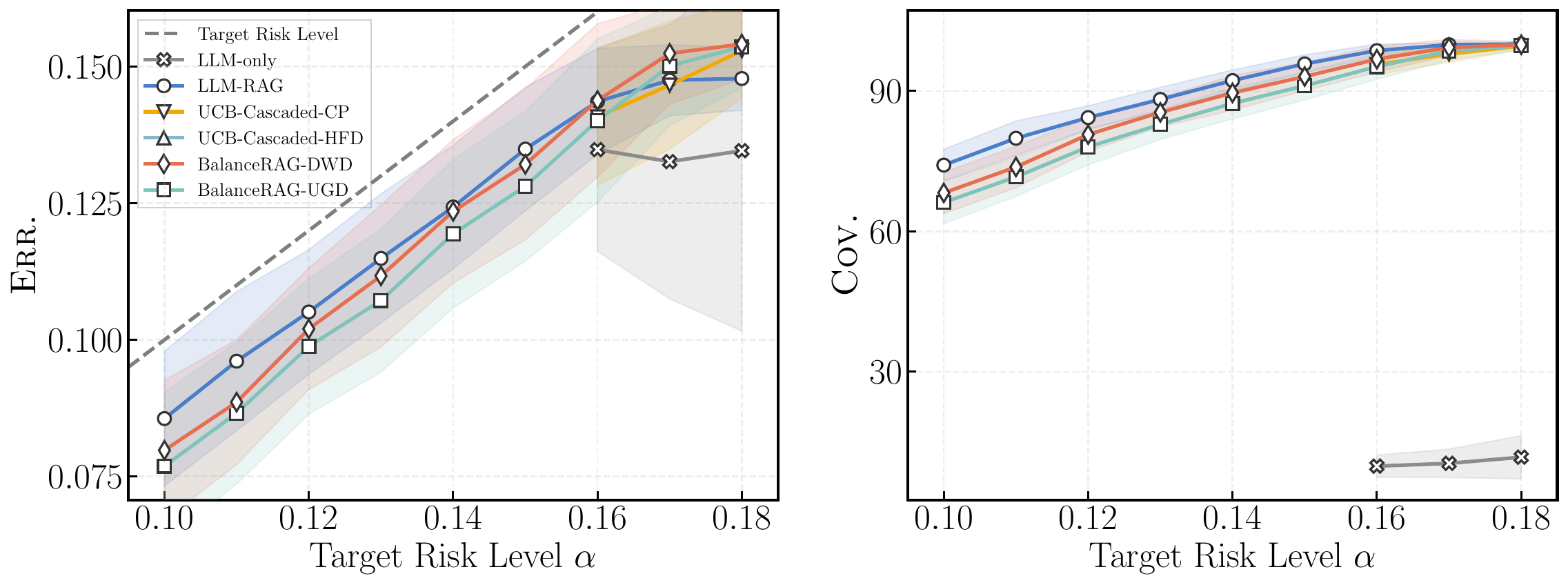}
    \caption{Vicuna-13B}
  \end{subfigure}

  \caption{Err. (left in each panel pair) and Cov. (right) under
different target risk levels $\alpha$ across eight LLMs spanning four model
families on SQuAD~v2.}
  \label{fig:risk_level_squad_appendix}
  \vspace{-5mm}
\end{figure*}

\begin{figure*}[!t]
  \centering
  \captionsetup[subfigure]{skip=0pt}
  \setlength{\abovecaptionskip}{2pt}
  \setlength{\belowcaptionskip}{0pt}

  \begin{subfigure}[t]{0.5\textwidth}
    \centering
    \includegraphics[width=\linewidth]{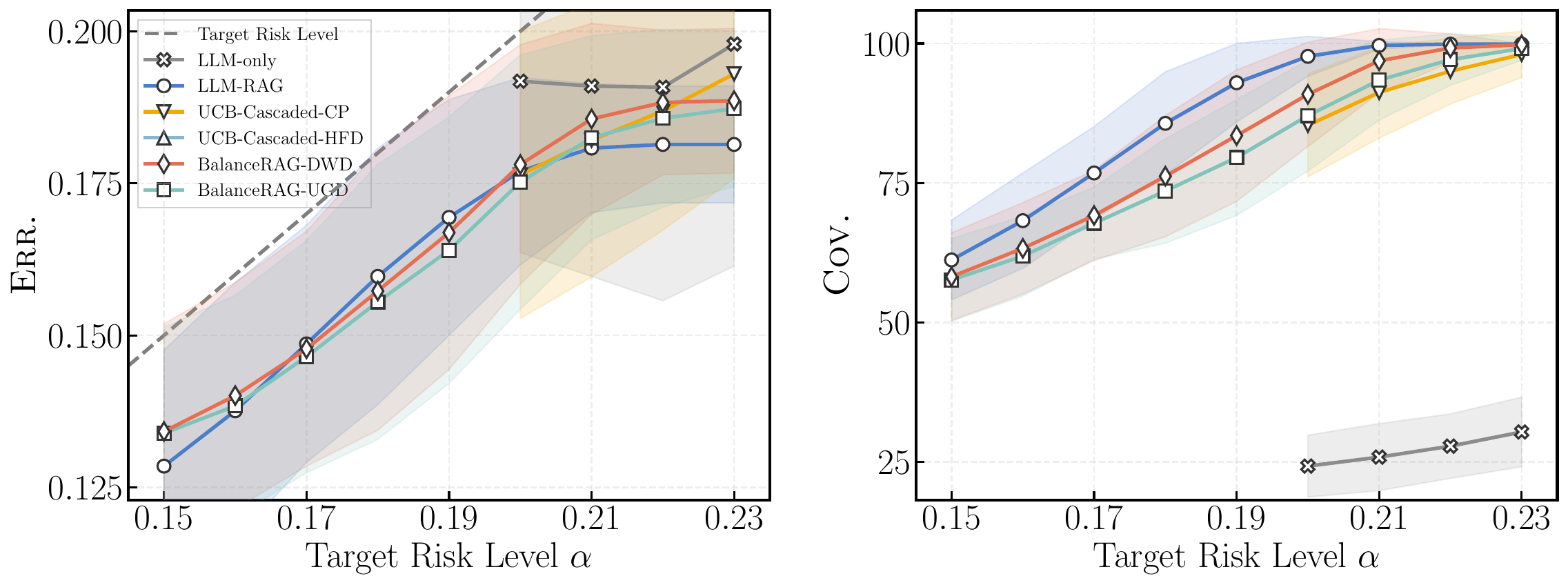}
    \caption{Qwen2.5-3B}
  \end{subfigure}%
  \begin{subfigure}[t]{0.5\textwidth}
    \centering
    \includegraphics[width=\linewidth]{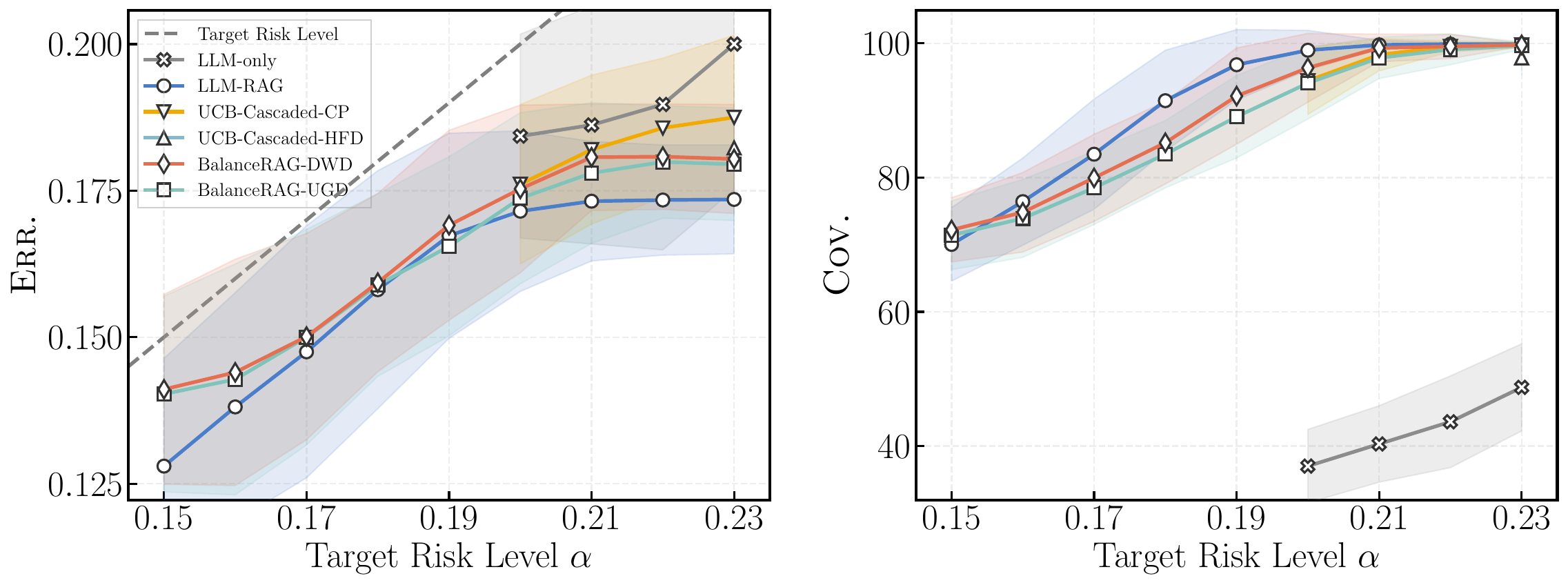}
    \caption{Qwen2.5-7B}
  \end{subfigure}

  \vspace{-0.1em}

  \begin{subfigure}[t]{0.5\textwidth}
    \centering
    \includegraphics[width=\linewidth]{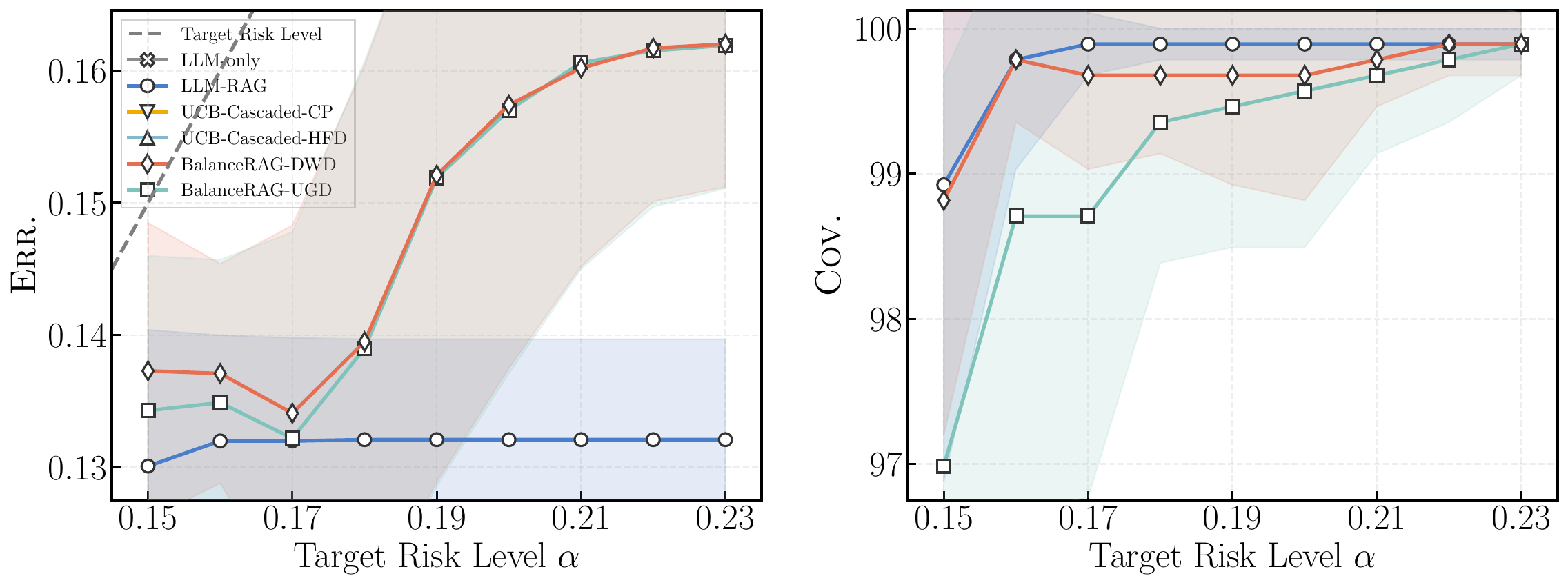}
    \caption{Qwen3-4B}
  \end{subfigure}%
  \begin{subfigure}[t]{0.5\textwidth}
    \centering
    \includegraphics[width=\linewidth]{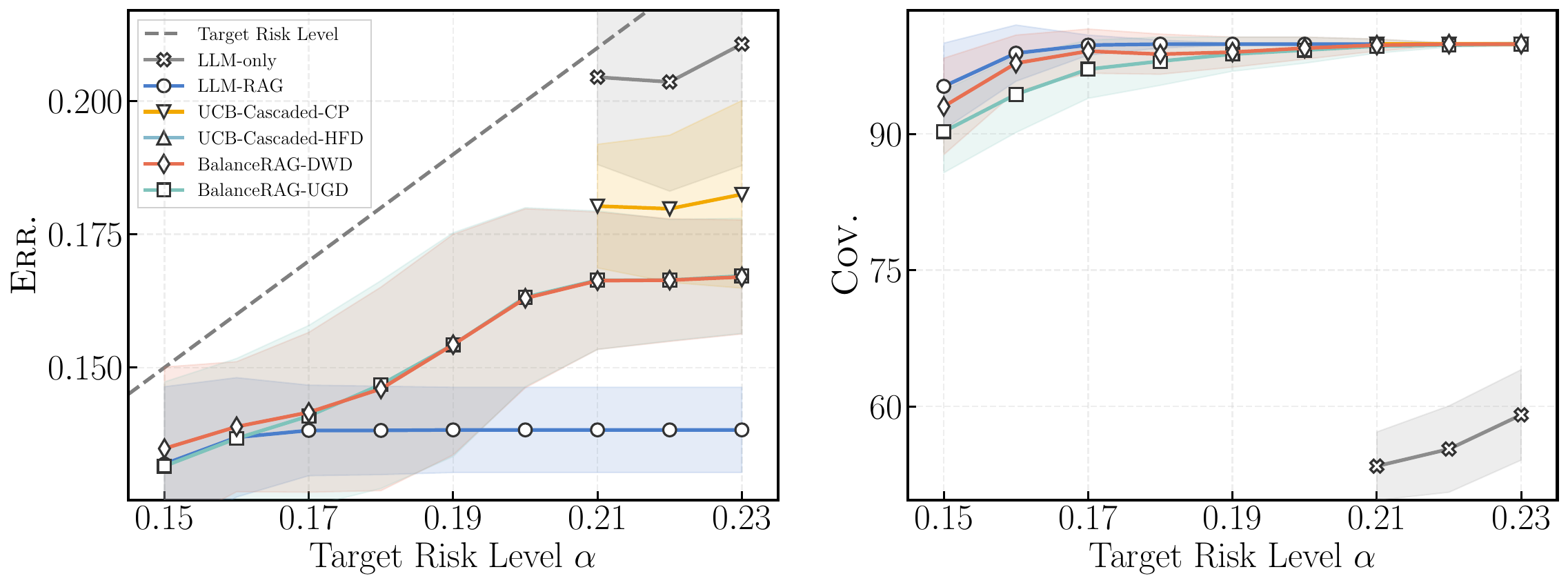}
    \caption{Qwen3-14B}
  \end{subfigure}

  \vspace{-0.1em}

  \begin{subfigure}[t]{0.5\textwidth}
    \centering
    \includegraphics[width=\linewidth]{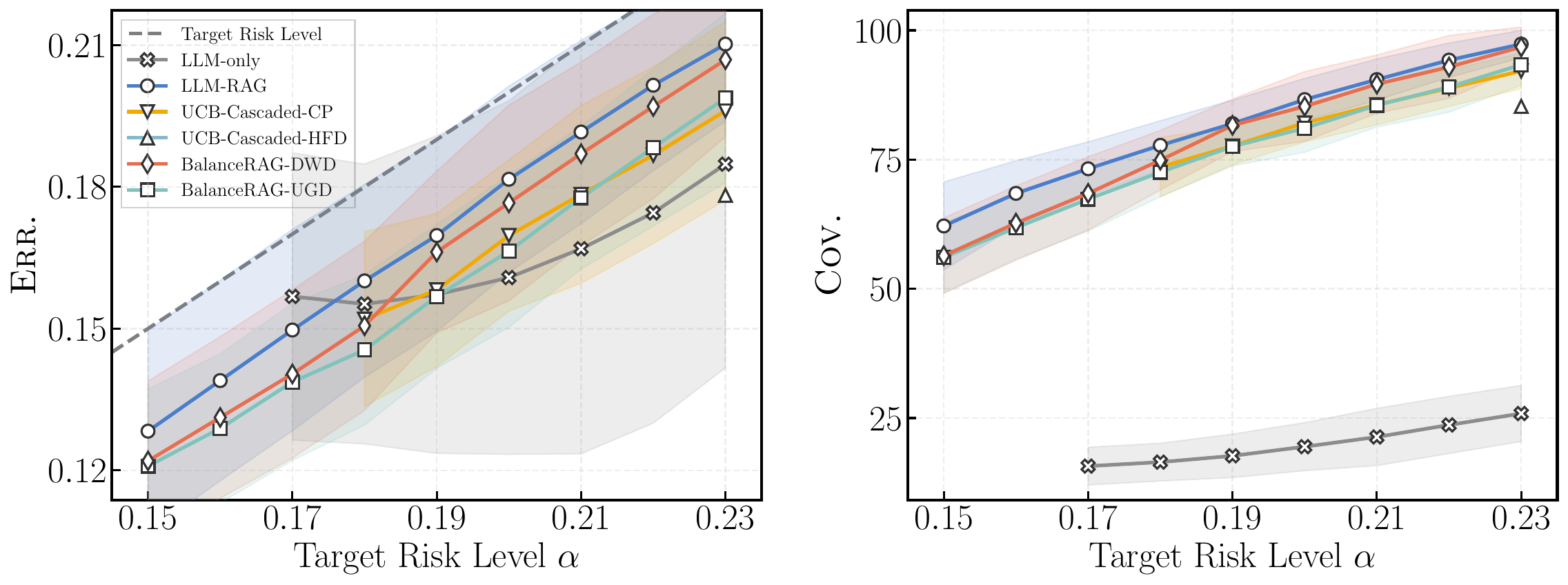}
    \caption{Llama3.2-3B}
  \end{subfigure}%
  \begin{subfigure}[t]{0.5\textwidth}
    \centering
    \includegraphics[width=\linewidth]{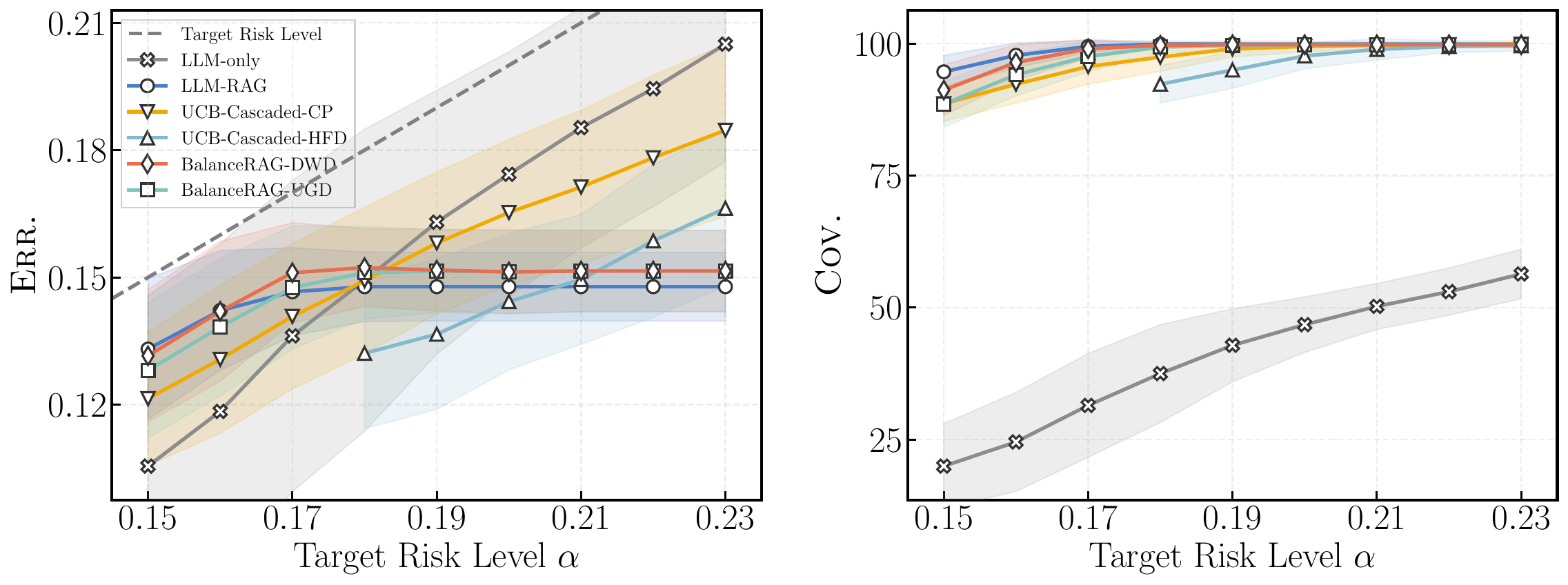}
    \caption{Llama3.1-8B}
  \end{subfigure}

  \vspace{-0.1em}

  \begin{subfigure}[t]{0.5\textwidth}
    \centering
    \includegraphics[width=\linewidth]{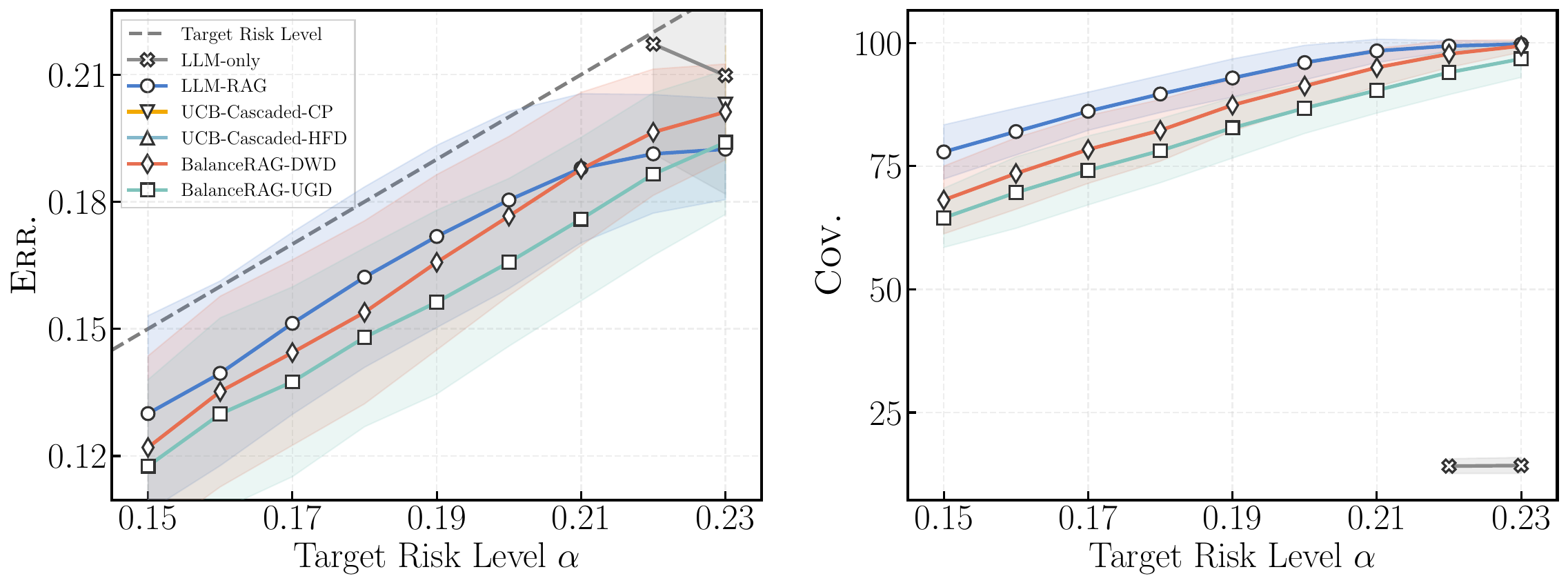}
    \caption{Vicuna-7B}
  \end{subfigure}%
  \begin{subfigure}[t]{0.5\textwidth}
    \centering
    \includegraphics[width=\linewidth]{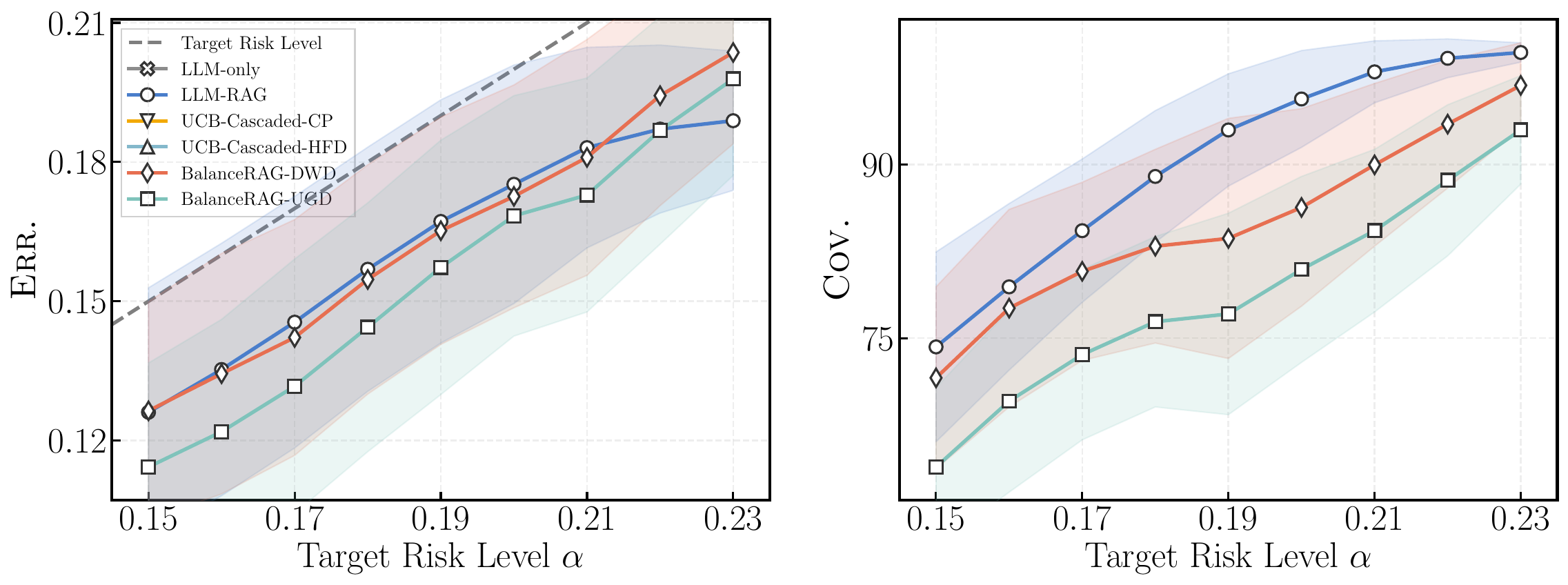}
    \caption{Vicuna-13B}
  \end{subfigure}

  \caption{Err. (left in each panel pair) and Cov. (right) under
different target risk levels $\alpha$ across eight LLMs spanning four model
families on NQ.}
  \label{fig:risk_level_nq_appendix}
  
\end{figure*}

\paragraph{Correct Acceptances Remain Competitive Under Risk Control}
Tables~\ref{tab:corr_comparison_triviaqa},~\ref{tab:corr_comparison_squad}, and~\ref{tab:corr_comparison_nq} compare the number of correctly accepted examples under the same target risk levels. The results show that the risk control of BalanceRAG is not obtained by simply rejecting a large number of examples. Instead, BalanceRAG preserves high utility while satisfying the risk constraint. On TriviaQA, BalanceRAG is often close to or better than RAG, and clearly outperforms UCB-Cascaded. On SQuAD~v2 and NQ, RAG usually retains more correctly accepted examples, but BalanceRAG still maintains comparable utility and is more likely to remain feasible than the UCB-Cascaded baselines under strict values of $\alpha$. These results show that BalanceRAG maintains utility close to RAG while providing feasible risk control.
\begin{table}[!htbp]
\centering
\small
\setlength{\tabcolsep}{5.5pt}
\caption{
Effect of uncertainty quantification methods on BalanceRAG on TriviaQA.
AUROC results reflect the performance of different uncertainty quantification methods on the LLM-only and RAG branches.
Coverage is reported under different target risk levels $\alpha$ after joint risk calibration.
}
\label{tab:uncertainty_ablation}
\begin{tabular}{lcccc}
\toprule
Uncertainty & AUROC Only/RAG $\uparrow$ 
& Cov. @ $\alpha=0.10$ $\uparrow$ 
& Cov. @ $\alpha=0.14$ $\uparrow$ 
& Cov. @ $\alpha=0.18$ $\uparrow$ \\
\midrule
Deg  & 0.7476 / 0.6978 & 81.97 & 99.95 & 99.96 \\
SELF & 0.5839 / 0.6800 & 70.25 & 99.43 & 99.93 \\
SE  & 0.7248 / 0.6544 & 84.11 & 99.96 & 99.96 \\
EigV & 0.7488 / 0.6987 & 82.87 & 99.95 & 99.95 \\
Ecc  & 0.6900 / 0.6706 & 72.89 & 99.49 & 99.95 \\
\bottomrule
\end{tabular}
\vspace{-2mm}
\end{table}

\paragraph{Risk Control Is Stable Across Uncertainty Estimators}
Figure~\ref{fig:uq_methods_triviaqa} and Table~\ref{tab:uncertainty_ablation} examine whether BalanceRAG depends on a particular uncertainty estimator. Deg, SELF, SE, EigV, and Ecc provide different forms of uncertainty signals, and therefore affect sample ranking, calibrated thresholds, and final coverage. The results show that, although different estimators lead to different utility levels, BalanceRAG still maintains risk control near the target levels across these settings. This finding suggests that the reliability of the method mainly comes from the calibration procedure itself, rather than from any specific uncertainty score. In other words, the uncertainty estimator affects utility and acceptance size, but does not change the basic ability of BalanceRAG to achieve risk control.

\begin{figure}[!htbp]
  \centering
  \captionsetup[subfigure]{skip=0pt}
  \setlength{\abovecaptionskip}{2pt}
  \setlength{\belowcaptionskip}{0pt}

  \begin{subfigure}[t]{0.5\linewidth}
    \centering
    \includegraphics[width=\linewidth]{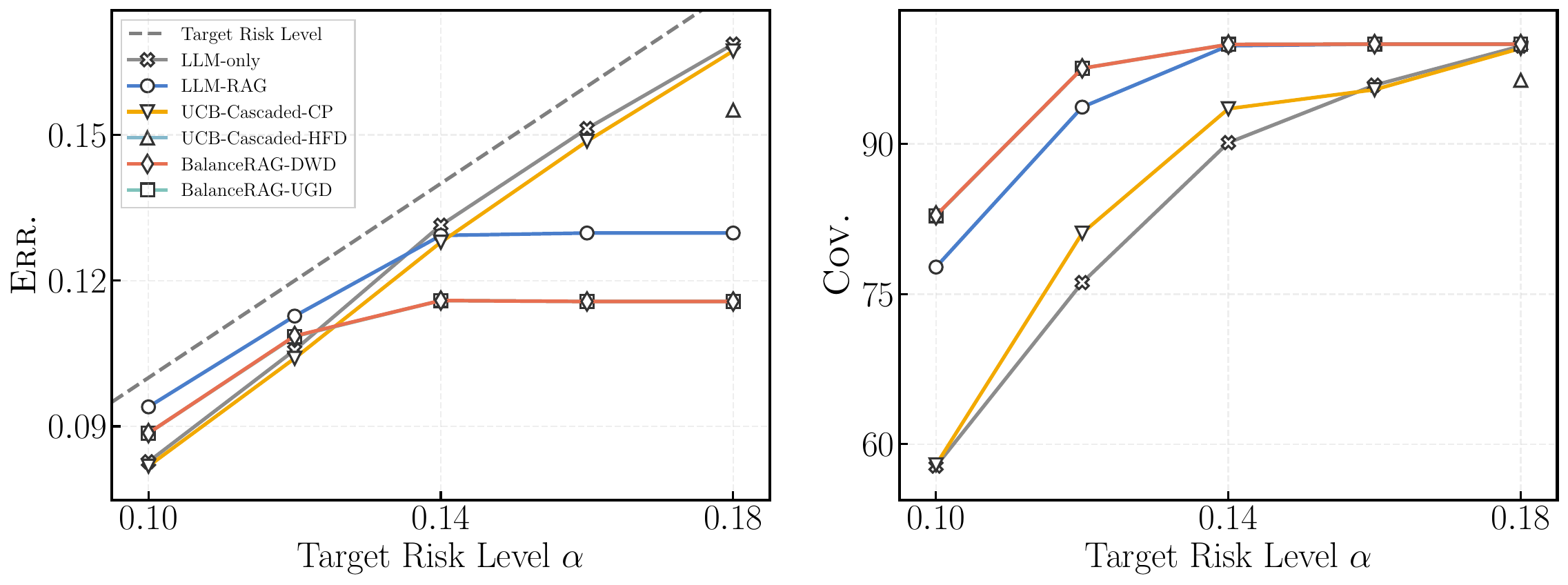}
    \caption{EigV}
  \end{subfigure}%
  \begin{subfigure}[t]{0.5\linewidth}
    \centering
    \includegraphics[width=\linewidth]{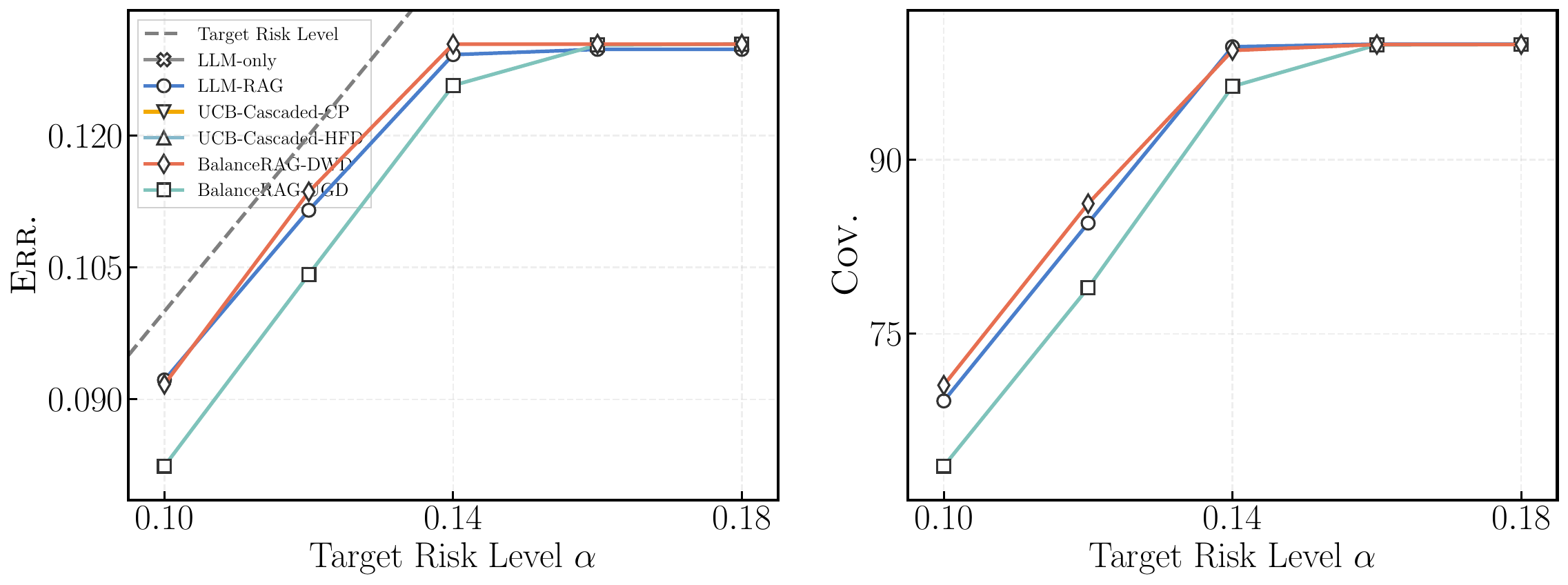}
    \caption{SELF}
  \end{subfigure}

  \par\vspace{-0.1em}

  \begin{subfigure}[t]{0.5\linewidth}
    \centering
    \includegraphics[width=\linewidth]{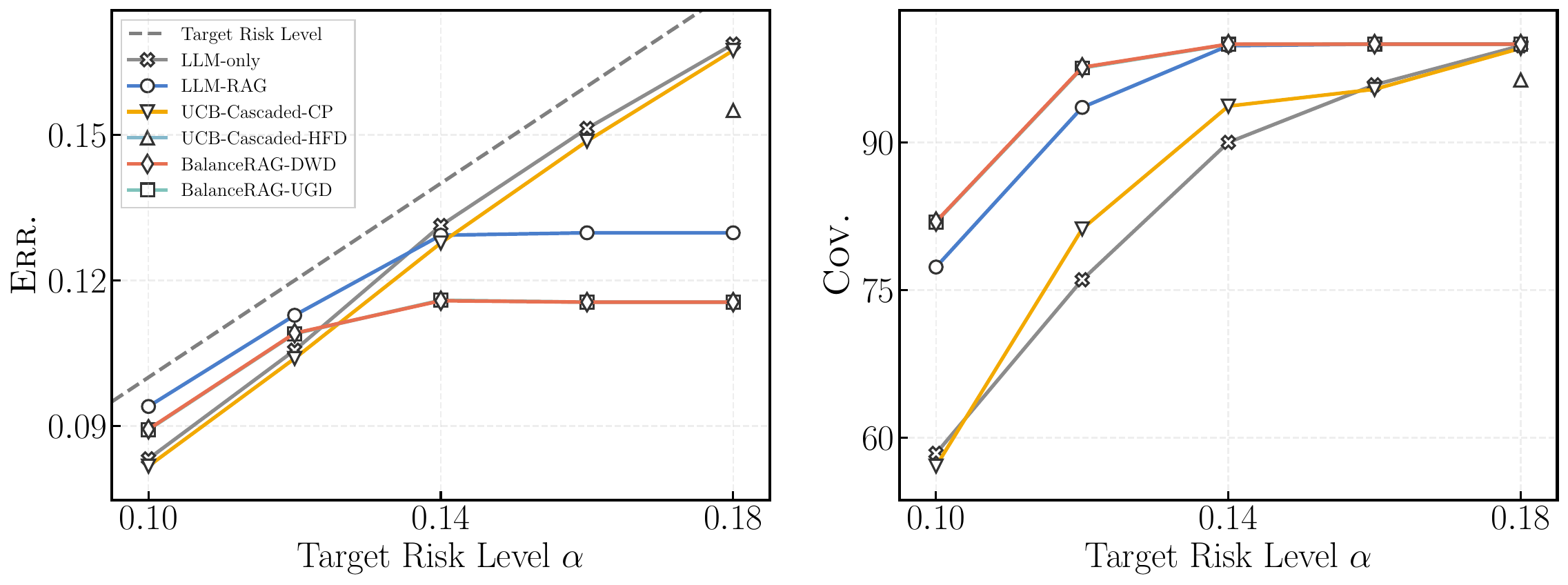}
    \caption{Deg}
  \end{subfigure}%
  \begin{subfigure}[t]{0.5\linewidth}
    \centering
    \includegraphics[width=\linewidth]{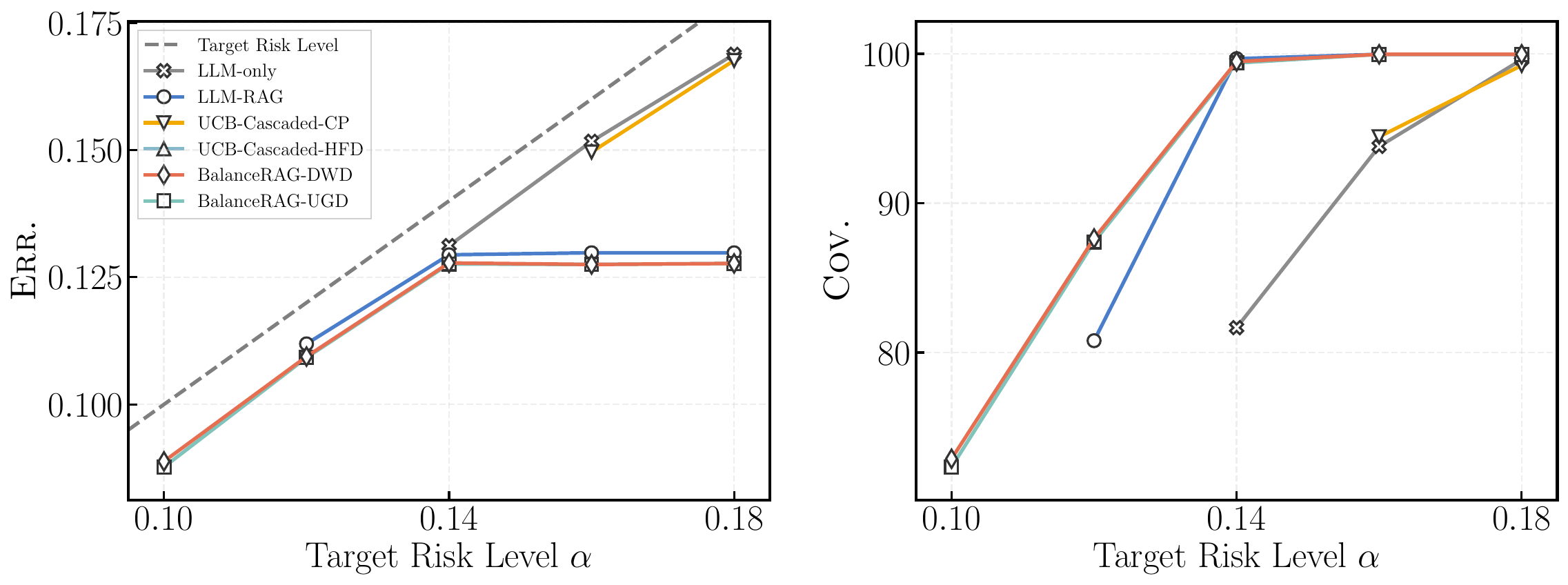}
    \caption{Ecc}
  \end{subfigure}
  \caption{Err. and Cov. on TriviaQA with Qwen2.5-7B using different UQ methods.}
  \label{fig:uq_methods_triviaqa}
\end{figure}

\begin{table}[htbp]
\centering
\small
\setlength{\tabcolsep}{4pt}
\renewcommand{\arraystretch}{1.08}
\caption{Branch-level error rates under different correctness evaluation methods on TriviaQA.}
\label{tab:triviaqa_eval_branch_error}
\begin{tabular}{lccccc@{\hspace{1.2em}}lccccc}
\toprule

\multirow{2}{*}{\raisebox{-0.6ex}{Model}}
& \multirow{2}{*}{\raisebox{-0.6ex}{$N$}}
& \multicolumn{2}{c}{LLM-as-a-Judge}
& \multicolumn{2}{c}{Entailment}
&
\multirow{2}{*}{\raisebox{-0.6ex}{Model}}
& \multirow{2}{*}{\raisebox{-0.6ex}{$N$}}
& \multicolumn{2}{c}{LLM-as-a-Judge}
& \multicolumn{2}{c}{Entailment} \\
\cmidrule(lr){3-4}
\cmidrule(lr){5-6}
\cmidrule(lr){9-10}
\cmidrule(lr){11-12}

& & Only $\downarrow$ & RAG $\downarrow$ & Only $\downarrow$ & RAG $\downarrow$
& & & Only $\downarrow$ & RAG $\downarrow$ & Only $\downarrow$ & RAG $\downarrow$ \\
\midrule

Llama3.2-3B
& 6248
& 0.2623 & 0.2308
& 0.3782 & 0.2591
&
Qwen2.5-3B
& 5929
& 0.2889 & 0.2663
& 0.3719 & 0.1881 \\

Llama3.1-8B
& 6365
& 0.1277 & 0.2687
& 0.2353 & 0.1983
&
Qwen2.5-7B
& 6196
& 0.1921 & 0.2515
& 0.2352 & 0.1451 \\

\bottomrule
\end{tabular}
\vspace{-2mm}
\end{table}

\paragraph{Risk Control Is Stable Across Correctness Criteria}
Figures~\ref{fig:judge_triviaqa} and~\ref{fig:entailment_triviaqa} evaluate BalanceRAG under LLM-as-a-Judge and entailment criteria. These two correctness criteria change which examples are treated as correct, and therefore affect the error signals observed during calibration and the final accepted set. The results show that BalanceRAG still maintains risk control under different correctness criteria, indicating that it does not rely on a single correctness definition. More importantly, BalanceRAG does not assume that one branch is always more reliable. Instead, it adjusts the accepted set and routing decisions according to the calibration results under the current evaluation criterion. Therefore, when the correctness criterion changes, the system can adapt through recalibration rather than being fixed toward either LLM-only or RAG.
\begin{figure}[!htbp]
  \centering
  \captionsetup[subfigure]{skip=0pt}
  \setlength{\abovecaptionskip}{2pt}
  \setlength{\belowcaptionskip}{0pt}

  \begin{subfigure}[t]{0.5\linewidth}
    \centering
    \includegraphics[width=\linewidth]{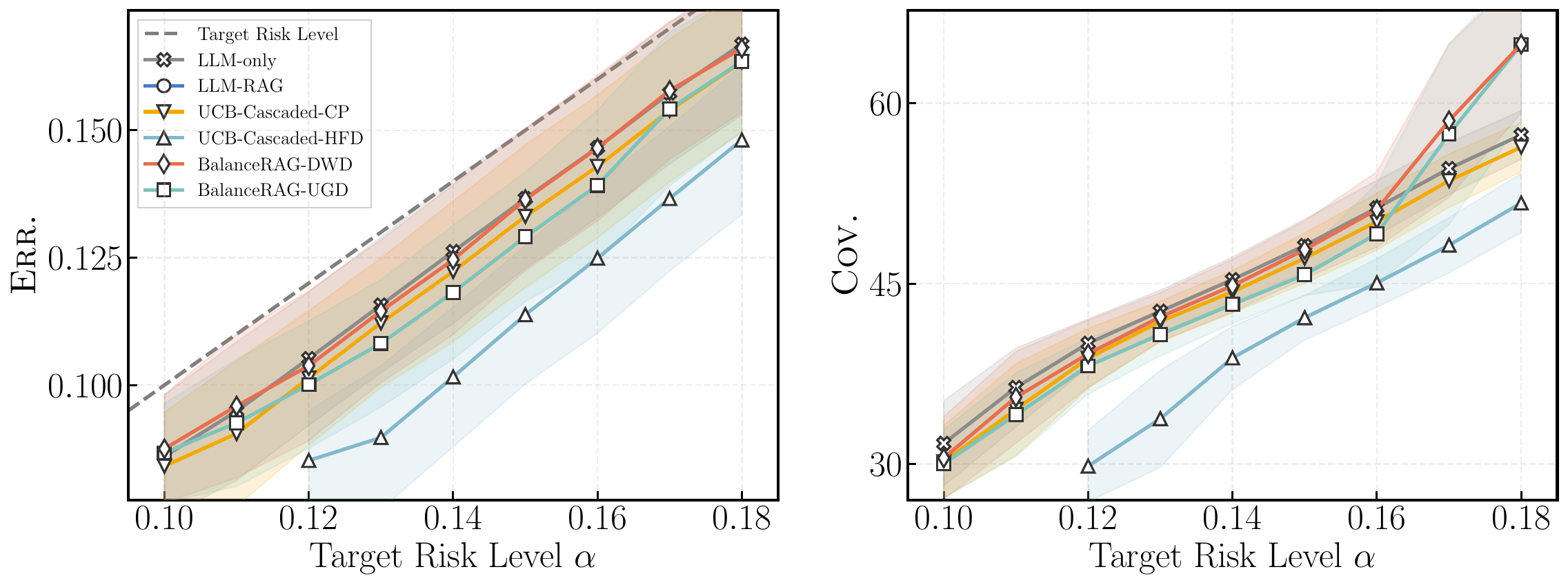}
    \caption{Qwen2.5-3B}
  \end{subfigure}%
  \begin{subfigure}[t]{0.5\linewidth}
    \centering
    \includegraphics[width=\linewidth]{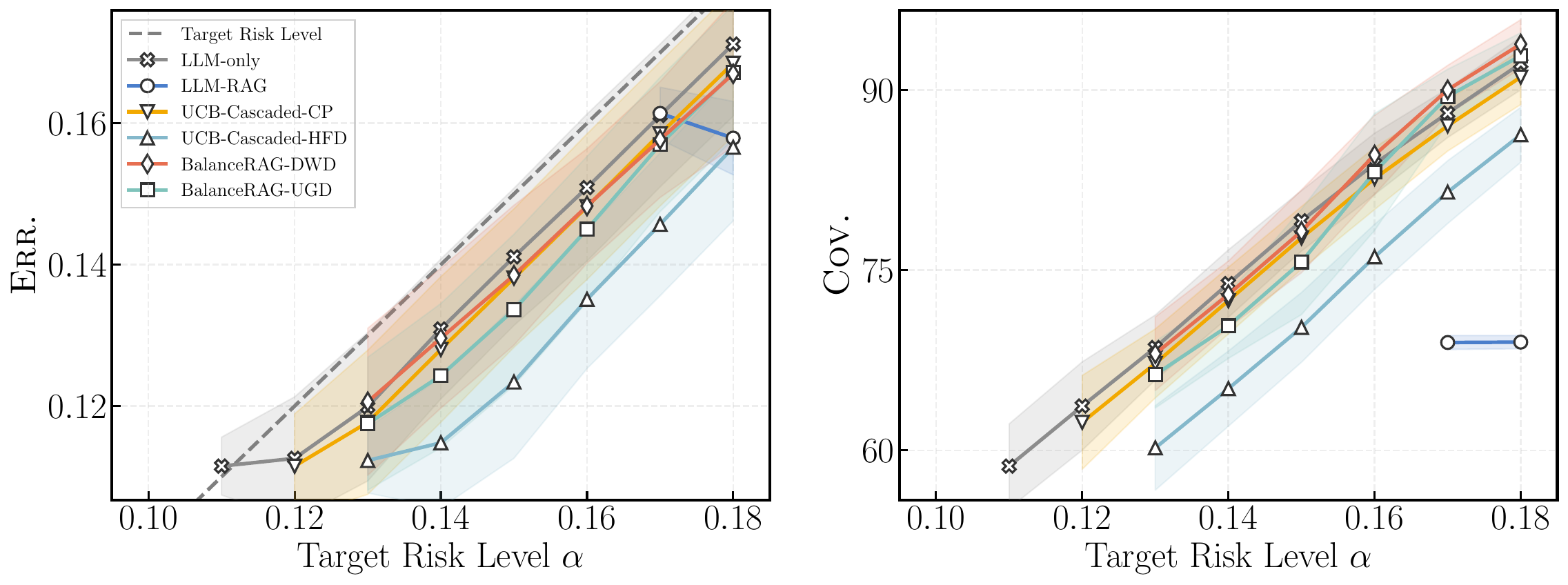}
    \caption{Qwen2.5-7B}
  \end{subfigure}

  \par\vspace{-0.1em}

  \begin{subfigure}[t]{0.5\linewidth}
    \centering
    \includegraphics[width=\linewidth]{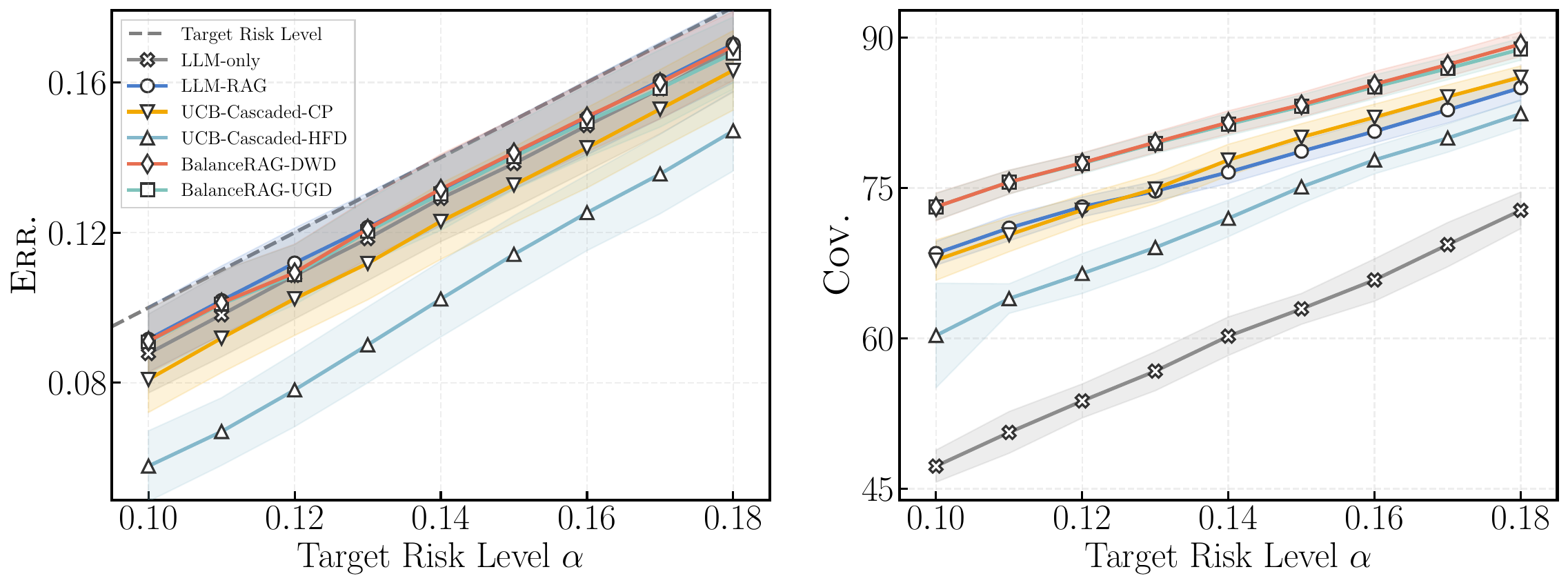}
    \caption{Llama3.2-3B}
  \end{subfigure}%
  \begin{subfigure}[t]{0.5\linewidth}
    \centering
    \includegraphics[width=\linewidth]{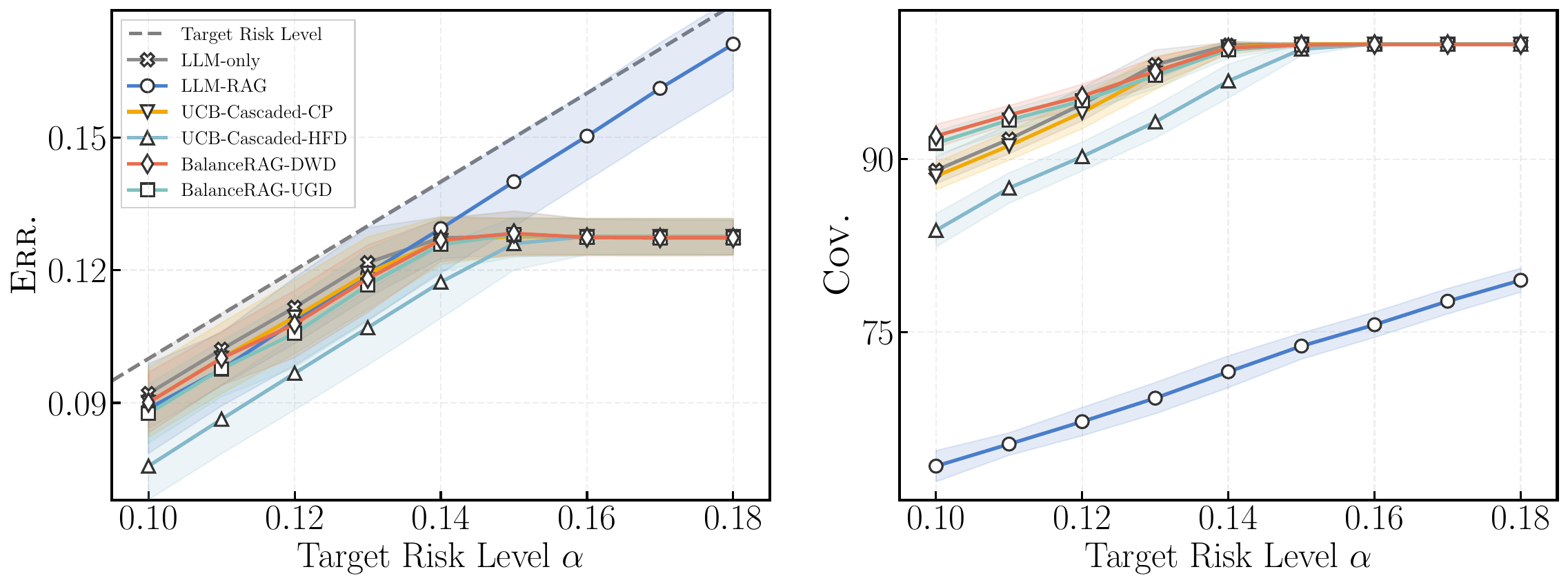}
    \caption{Llama3.1-8B}
  \end{subfigure}
  \caption{Err. and Cov. on TriviaQA with LLM-as-a-Judge for correctness evaluation.}
  \label{fig:judge_triviaqa}
  \vspace{-5mm}
\end{figure}

\paragraph{Calibration Size Mainly Affects Stability}
Figures~\ref{fig:split_ratio_all} analyze the effect of the calibration-test split ratio. With fewer calibration examples, threshold estimation is more sensitive to random splits, and the results may fluctuate more. With more calibration examples, the estimates are generally more stable. In the experiments, the empirical risk of BalanceRAG consistently satisfies risk control as the split ratio changes. Moreover, under some settings of $\alpha$, the variance of Err. decreases as the calibration size increases. These results show that the calibration size mainly affects stability, rather than changing the basic mechanism by which the method achieves risk control.

\paragraph{Routing Allocation Reflects Branch Complementarity}
Table~\ref{tab:balancerag_allocation} shows how accepted examples are allocated between the LLM-only and RAG branches. The allocation ratios vary across models and datasets, indicating that BalanceRAG neither always selects LLM-RAG nor always preserves LLM-only. In some settings, stronger models allow more examples to remain in the LLM-only branch, thereby reducing retrieval calls. In other settings, more examples are routed to RAG to use the additional information provided by retrieval. This variation shows that the two branches are complementary, and that BalanceRAG uses calibrated risk signals to decide when to trust each branch.
\begin{figure}[!htbp]
  \centering
  \captionsetup[subfigure]{skip=0pt}
  \setlength{\abovecaptionskip}{2pt}
  \setlength{\belowcaptionskip}{0pt}

  \begin{subfigure}[t]{0.5\linewidth}
    \centering
    \includegraphics[width=\linewidth]{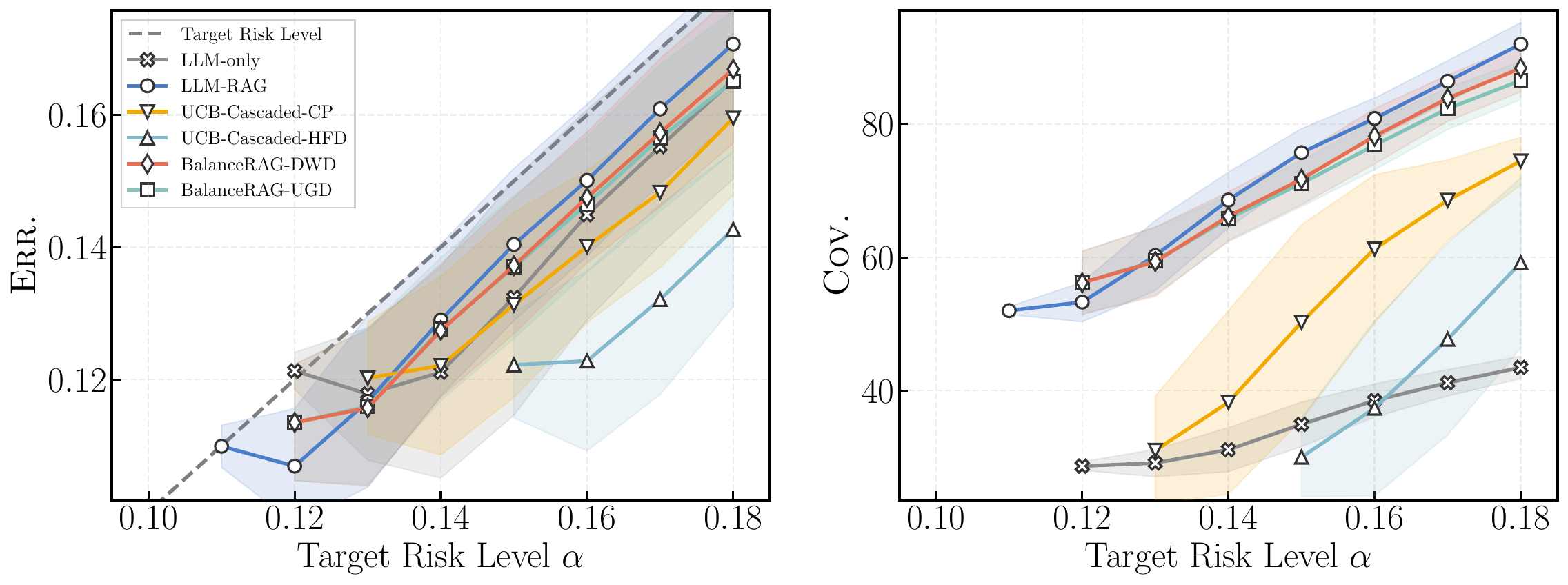}
    \caption{Qwen2.5-3B}
  \end{subfigure}%
  \begin{subfigure}[t]{0.5\linewidth}
    \centering
    \includegraphics[width=\linewidth]{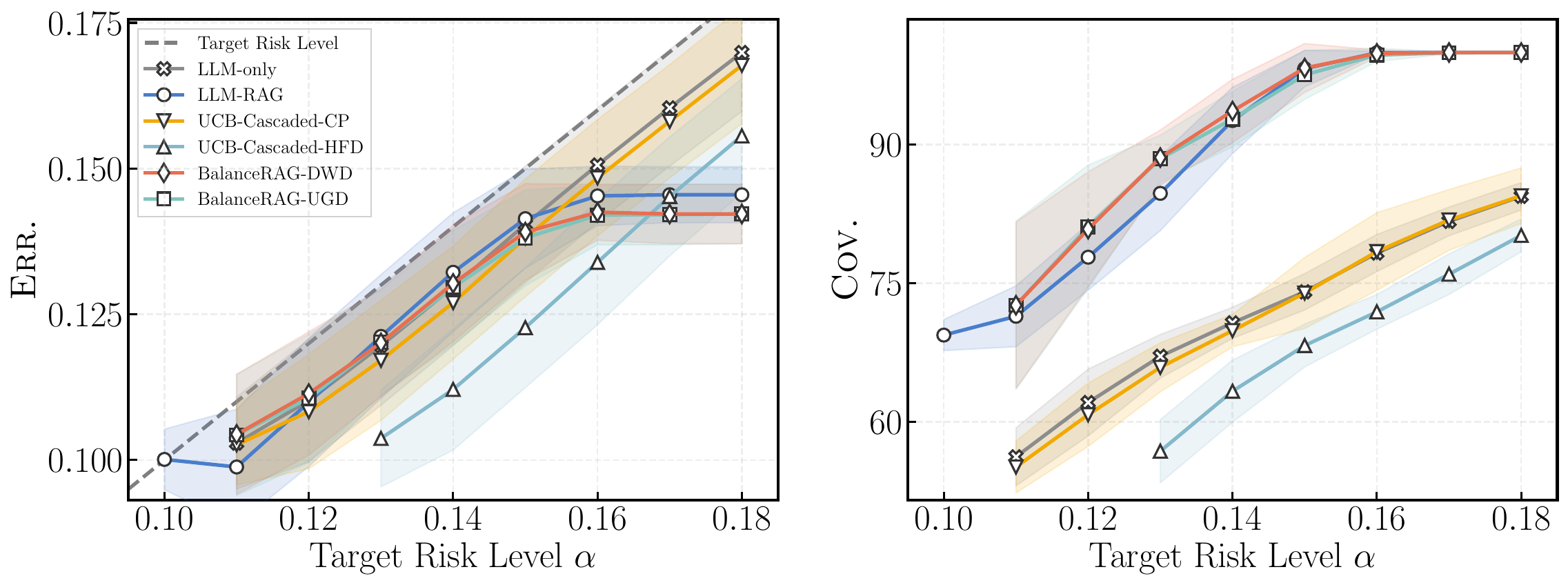}
    \caption{Qwen2.5-7B}
  \end{subfigure}

  \par\vspace{-0.1em}

  \begin{subfigure}[t]{0.5\linewidth}
    \centering
    \includegraphics[width=\linewidth]{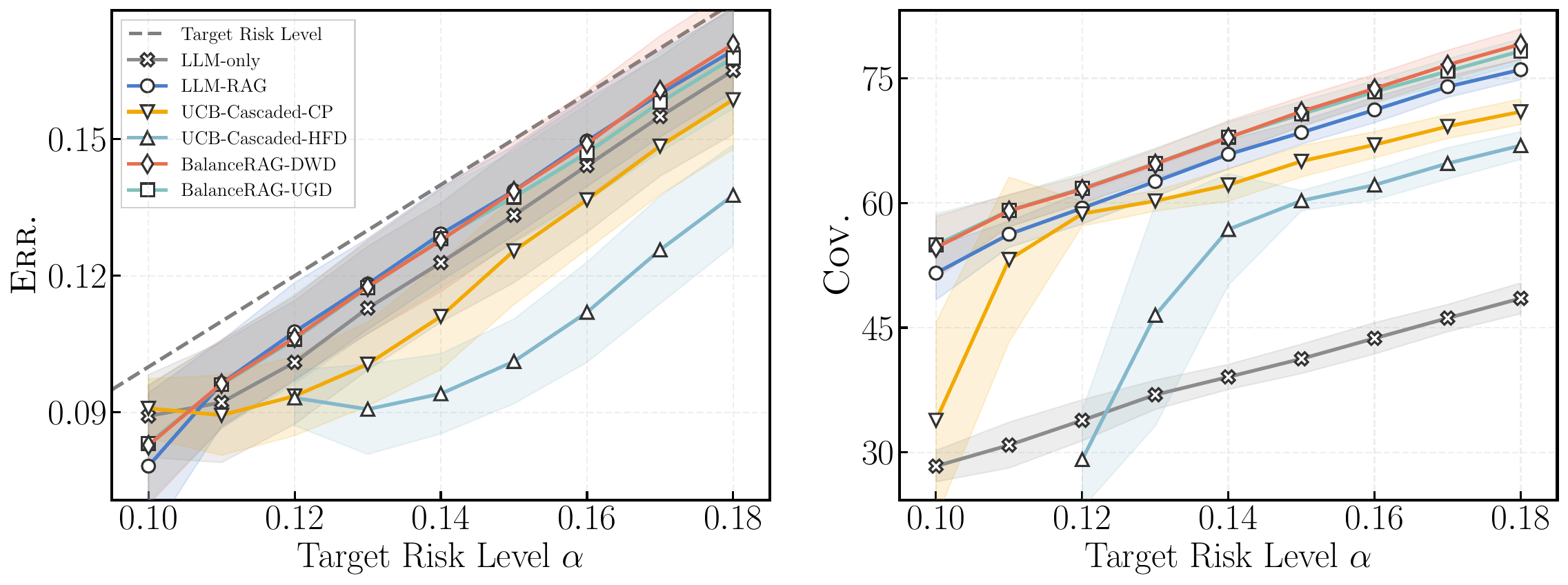}
    \caption{Llama3.2-3B}
  \end{subfigure}%
  \begin{subfigure}[t]{0.5\linewidth}
    \centering
    \includegraphics[width=\linewidth]{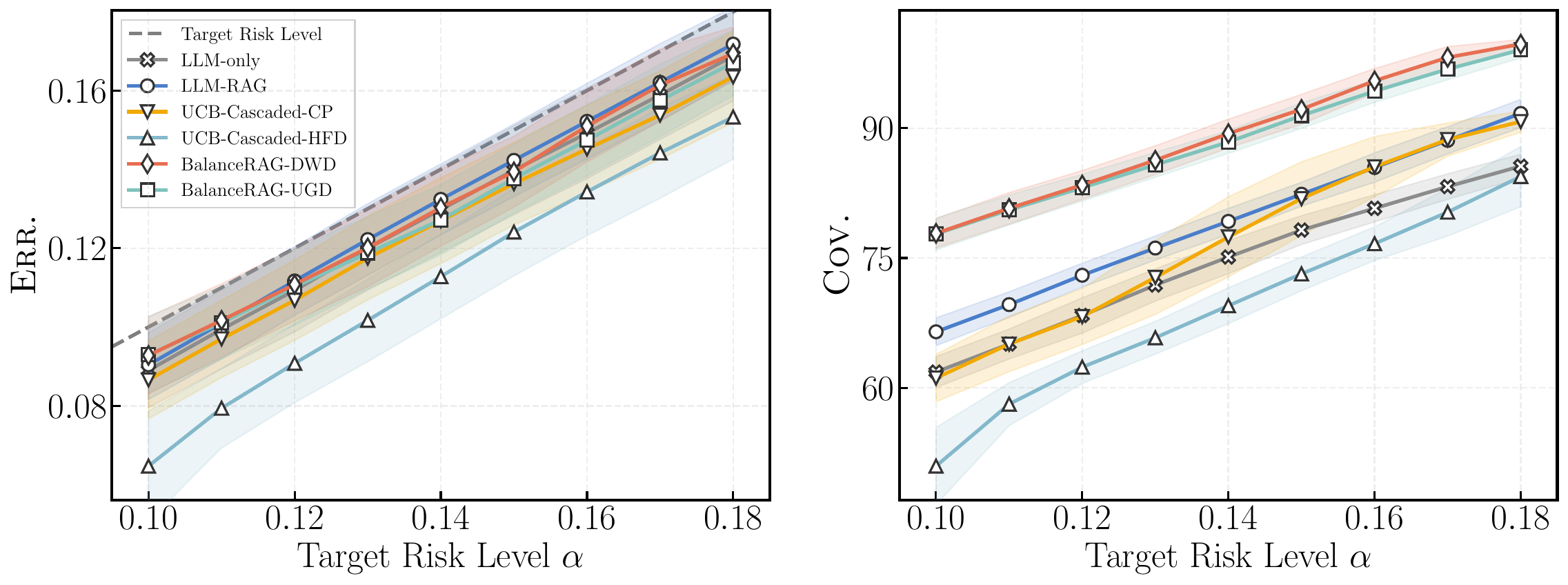}
    \caption{Llama3.1-8B}
  \end{subfigure}

  \caption{Err. and Cov. on TriviaQA with entailment for correctness evaluation.}
  \label{fig:entailment_triviaqa}
  \vspace{-5mm}
\end{figure}

\paragraph{Multi-Risk Calibration Provides a Cost Control Knob}
Table~\ref{tab:balancerag_multirisk_allocation} evaluates the multi-risk variant, where the second risk constraint limits the fallback invocation rate of RAG. Compared with single-risk routing, the multi-risk setting can explicitly control the frequency of RAG calls by adjusting $\alpha_2$. When this constraint becomes stricter, the system reduces retrieval usage and may also lower overall coverage. This result should be understood as a controllable cost-coverage trade-off, rather than a cost-free performance gain. In practical deployment, this setting allows users to adjust system behavior according to the retrieval budget while maintaining calibrated answer reliability.

\begin{table}[htbp]
\centering
\small
\setlength{\tabcolsep}{3.5pt}
\renewcommand{\arraystretch}{1.10}
\caption{
Reliability--coverage comparison on Llama3.1-8B with $\delta=0.1$ over 500 data splits.
We report mean accepted-answer risk, coverage, and guarantee success rate
under target risk levels $\alpha \in \{0.10, 0.11, 0.12\}$.
Success rate denotes the fraction of calibration/test splits where the
empirical test risk is no larger than $\alpha$.
Adaptive-RAG and Self-Route are uncalibrated adaptive routing
baselines that always commit to an answer.
}
\label{tab:llama31_8b_risk_control_comparison}
\begin{tabular}{lccccccccc}
\toprule
\multirow{2}{*}{\raisebox{-0.6ex}{Method}}
& \multicolumn{3}{c}{$\alpha = 0.10$}
& \multicolumn{3}{c}{$\alpha = 0.11$}
& \multicolumn{3}{c}{$\alpha = 0.12$} \\
\cmidrule(lr){2-4}
\cmidrule(lr){5-7}
\cmidrule(lr){8-10}
& Err. $\downarrow$ & Cov. $\uparrow$ & Succ. $\uparrow$
& Err. $\downarrow$ & Cov. $\uparrow$ & Succ. $\uparrow$
& Err. $\downarrow$ & Cov. $\uparrow$ & Succ. $\uparrow$ \\
\midrule

Adaptive-RAG (uncal.)
& 0.1516 & 100.0 & 0.0
& 0.1516 & 100.0 & 0.0
& 0.1516 & 100.0 & 0.0 \\

Self-Route (uncal.)
& 0.1389 & 100.0 & 0.0
& 0.1389 & 100.0 & 0.0
& 0.1389 & 100.0 & 0.0 \\

Empirical Risk $< \alpha$
& 0.1017 & 81.2 & 40.2
& 0.1118 & 87.9 & 41.6
& 0.1227 & 93.5 & 34.8 \\

\midrule

Bonferroni
& 0.0783 & 51.9 & 100.0
& 0.0780 & 57.6 & 100.0
& 0.0884 & 72.0 & 100.0 \\

\rowcolor{GainBlue}
BalanceRAG
& 0.0894 & 71.8 & 94.2
& 0.0997 & 79.7 & 95.4
& 0.1093 & 86.0 & 95.6 \\

\bottomrule
\end{tabular}
\vspace{-2mm}
\end{table}
\paragraph{Additional Baselines}
Table~\ref{tab:llama31_8b_risk_control_comparison} compares BalanceRAG with Adaptive RAG, the Empirical Risk $<\alpha$ baseline, and Bonferroni correction. Adaptive RAG and the Empirical Risk $<\alpha$ baseline have mean risks above the target $\alpha$ and lower guarantee success rates, showing that simple adaptive retrieval or empirical-risk selection does not provide the desired risk control. Bonferroni correction is more reliable, but its coverage is much lower because the correction is more conservative. In contrast, BalanceRAG achieves mean risk below the target level, obtains a high guarantee success rate, and preserves higher coverage than Bonferroni correction. These results show that BalanceRAG strikes a better balance between high-probability reliability and coverage.

\begin{figure}[t]
\vspace{-5mm}
    \centering
    \setlength{\abovecaptionskip}{2pt}
    \setlength{\belowcaptionskip}{0pt}

    \includegraphics[width=0.98\linewidth]{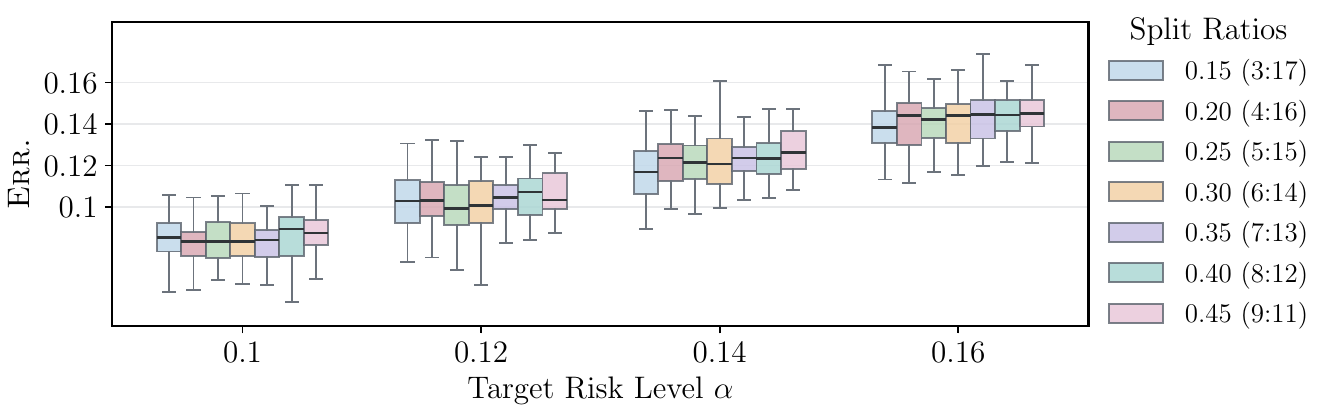}

    \vspace{-0.25em}
    \includegraphics[width=0.98\linewidth]{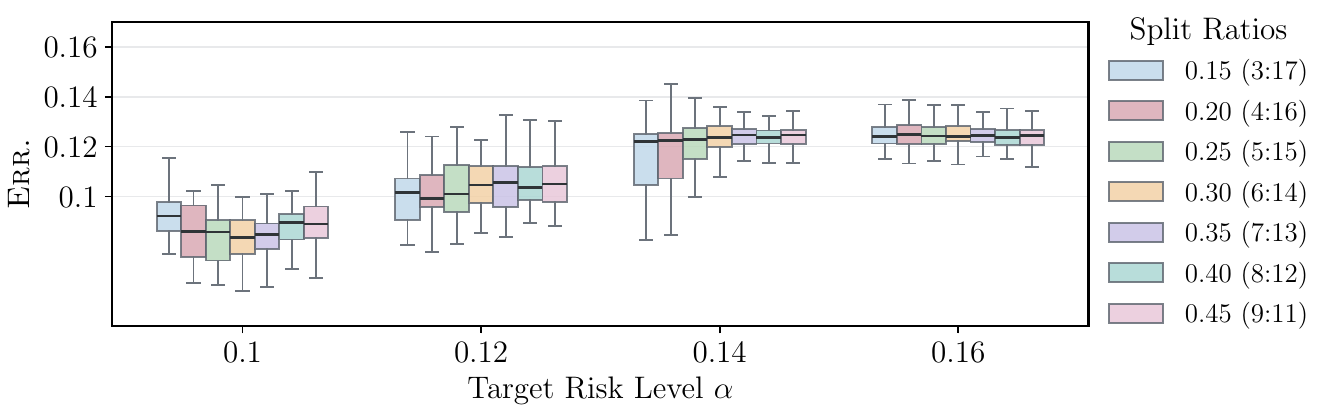}

    \vspace{-0.25em}

    \includegraphics[width=0.98\linewidth]{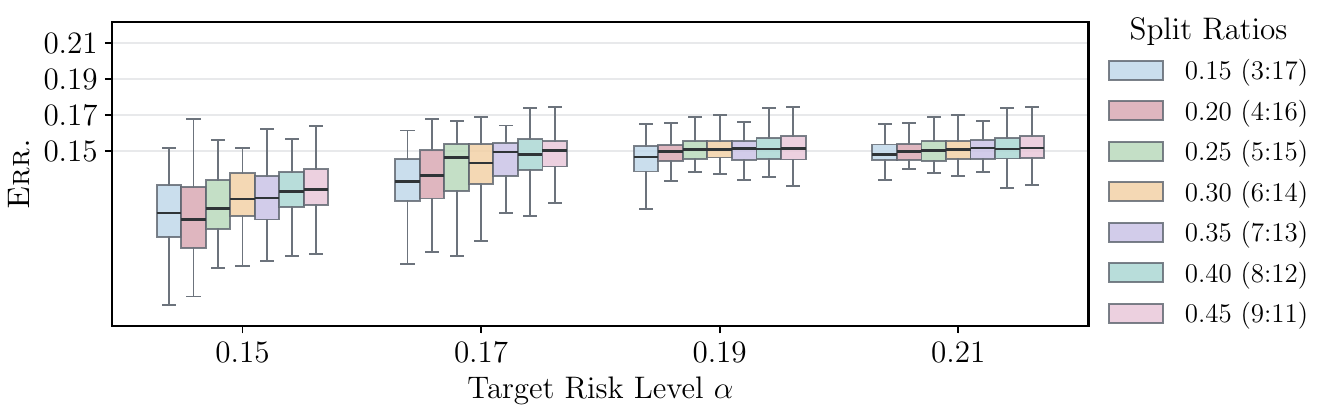}

    \caption{Risk control across various calibration-test split ratios. 
    From top to bottom, the results are reported on TriviaQA with Vicuna-v1.5-7B, 
    SQuAD~v2 with Qwen2.5-7B, and NQ with LLaMA-3.1-8B.}
    \label{fig:split_ratio_all}
    \vspace{-5mm}
\end{figure}

\paragraph{Qualitative Cases Explain Adaptive Routing}
Qualitative examples illustrate the routing behavior behind the aggregate results. In some cases, the LLM-only branch already produces the correct answer, while the retrieved content introduces misleading information. In such cases, BalanceRAG preserves the cheaper direct answer. In other cases, the LLM-only branch makes an error, while LLM-RAG corrects the answer by using external evidence, making routing to retrieval more appropriate. There are also cases where both branches are correct, suggesting that always invoking retrieval is not necessary. These cases do not serve as standalone statistical evidence, but they provide an intuitive explanation of the routing policy. The goal of BalanceRAG is not to favor a fixed branch, but to decide which output can be safely accepted under calibrated risk.

\clearpage
  \begin{table}[htbp]
  \centering
  \small
  \setlength{\tabcolsep}{5.2pt}
  \renewcommand{\arraystretch}{1.08}
  \caption{\textbf{Correctly accepted samples comparison on TriviaQA.}
  We report the number of correctly accepted samples (Corr.) under
  different target risk levels $\alpha$. Bold numbers indicate the best
  Corr. under the same LLM and risk level.}
  \label{tab:corr_comparison_triviaqa}
  \begin{tabular}{llccccccccc}
  \toprule
  \textbf{LLMs} & \textbf{Methods / $\alpha$}
  & \textbf{0.1} & \textbf{0.11} & \textbf{0.12} & \textbf{0.13}
  & \textbf{0.14} & \textbf{0.15} & \textbf{0.16} & \textbf{0.17} &
  \textbf{0.18} \\
  \midrule

  \multirow{5}{*}{Qwen2.5-3B}
  & LLM-only & -- & 798 & 885 & 971 & 1049 & 1137 & 1227 & 1316 & 1397 \\
  & LLM-RAG & \underline{1473} & \underline{1680} & \underline{1907} &
  \underline{2105} & \textbf{2280} & \underline{2400} & \underline{2475} &
  \underline{2500} & \underline{2501} \\
  & UCB-Cascaded-CP & -- & 1291 & 1633 & 1834 & 1952 & 2049 & 2143 & 2237 & 2304
  \\
  & UCB-Cascaded-HFD & -- & -- & -- & -- & 1555 & 1799 & 1955 & 2051 & 2152 \\
  \rowcolor{gray!12}
  & BalanceRAG & \textbf{1569} & \textbf{1733} & \textbf{1907} &
  \textbf{2110} & \underline{2279} & \textbf{2421} & \textbf{2498} &
  \textbf{2512} & \textbf{2514} \\

  \midrule

  \multirow{5}{*}{Qwen2.5-7B}
  & LLM-only & 1856 & 1988 & 2148 & 2255 & 2347 & 2422 & 2473 & 2522 & 2564 \\
  & LLM-RAG & \underline{2118} & \underline{2320} & \underline{2490} &
  \underline{2633} & \underline{2691} & \underline{2694} & \underline{2694} &
  \underline{2694} & \underline{2694} \\
  & UCB-Cascaded-CP & 1820 & 1992 & 2284 & 2377 & 2442 & 2499 & 2531 & 2534 &
  2557 \\
  & UCB-Cascaded-HFD & -- & 1716 & 1880 & 2035 & 2205 & 2301 & 2392 & 2455 &
  2502 \\
  \rowcolor{gray!12}
  & BalanceRAG & \textbf{2366} & \textbf{2573} & \textbf{2702} &
  \textbf{2737} & \textbf{2738} & \textbf{2738} & \textbf{2738} & \textbf{2738}
  & \textbf{2738} \\

  \midrule

  \multirow{5}{*}{Llama3.2-3B}
  & LLM-only & 870 & 988 & 1116 & 1214 & 1283 & 1371 & 1443 & 1513 & 1589 \\
  & LLM-RAG & \textbf{1367} & \underline{1390} & \underline{1532} & 1646 &
  1760 & 1876 & 1949 & 2036 & 2117 \\
  & UCB-Cascaded-CP & 832 & 1028 & 1488 & \underline{1749} & \underline{1850} &
  \underline{1917} & \underline{1991} & \underline{2065} & \underline{2142} \\
  & UCB-Cascaded-HFD & -- & -- & 820 & 964 & 1328 & 1661 & 1799 & 1899 & 1990 \\
  \rowcolor{gray!12}
  & BalanceRAG & \underline{1347} & \textbf{1643} & \textbf{1774} &
  \textbf{1873} & \textbf{1973} & \textbf{2073} & \textbf{2159} & \textbf{2229}
  & \textbf{2290} \\

  \midrule

  \multirow{5}{*}{Llama3.1-8B}
  & LLM-only & 1587 & 1780 & 1928 & 2070 & 2165 & 2218 & 2277 & 2345 & 2402 \\
  & LLM-RAG & \underline{1856} & \underline{2036} & \underline{2198} &
  \underline{2382} & \underline{2529} & \underline{2652} & \underline{2711} &
  \underline{2713} & \underline{2713} \\
  & UCB-Cascaded-CP & 1553 & 1849 & 2084 & 2264 & 2360 & 2434 & 2486 & 2505 &
  2482 \\
  & UCB-Cascaded-HFD & -- & 1395 & 1607 & 1810 & 1992 & 2148 & 2248 & 2351 &
  2400 \\
  \rowcolor{gray!12}
  & BalanceRAG & \textbf{2073} & \textbf{2288} & \textbf{2430} &
  \textbf{2584} & \textbf{2691} & \textbf{2727} & \textbf{2734} & \textbf{2736}
  & \textbf{2737} \\

  \midrule

  \multirow{5}{*}{Vicuna-7B-v1.5}
  & LLM-only & 792 & 868 & 935 & 992 & 1049 & 1115 & 1189 & 1250 & 1294 \\
  & LLM-RAG & \underline{1256} & \underline{1330} & \underline{1401} &
  \underline{1455} & \underline{1512} & \underline{1564} & \underline{1611} &
  \underline{1648} & \underline{1675} \\
  & UCB-Cascaded-CP & 981 & 1102 & 1236 & 1306 & 1354 & 1395 & 1426 & 1469 &
  1504 \\
  & UCB-Cascaded-HFD & 542 & 603 & 776 & 876 & 1011 & 1147 & 1266 & 1333 & 1355
  \\
  \rowcolor{gray!12}
  & BalanceRAG & \textbf{1297} & \textbf{1367} & \textbf{1430} &
  \textbf{1487} & \textbf{1533} & \textbf{1580} & \textbf{1629} & \textbf{1667}
  & \textbf{1694} \\

  \midrule

  \multirow{5}{*}{Vicuna-13B-v1.5}
  & LLM-only & 775 & 861 & 919 & 961 & 1002 & 1043 & 1081 & 1110 & 1138 \\
  & LLM-RAG & \underline{875} & \underline{939} & \underline{987} &
  \underline{1041} & \underline{1101} & \underline{1163} & \underline{1207} &
  \underline{1250} & \underline{1270} \\
  & UCB-Cascaded-CP & 750 & 868 & 925 & 962 & 998 & 1031 & 1070 & 1100 & 1127 \\
  & UCB-Cascaded-HFD & -- & 669 & 694 & 806 & 888 & 946 & 989 & 1035 & 1078 \\
  \rowcolor{gray!12}
  & BalanceRAG & \textbf{983} & \textbf{1041} & \textbf{1091} &
  \textbf{1124} & \textbf{1168} & \textbf{1203} & \textbf{1237} & \textbf{1262}
  & \textbf{1274} \\

  \midrule

  \multirow{5}{*}{Qwen3-4B}
  & LLM-only & -- & 1764 & 1871 & 1995 & 2093 & 2159 & 2233 & 2301 & 2357 \\
  & LLM-RAG & \textbf{2505} & \textbf{2627} & \textbf{2759} &
  \textbf{2835} & \textbf{2849} & \textbf{2849} & \textbf{2849} & \textbf{2849}
  & \textbf{2849} \\
  & UCB-Cascaded-CP & -- & 1746 & 1834 & 2016 & 2126 & 2240 & 2463 & 2533 & 2587
  \\
  & UCB-Cascaded-HFD & -- & -- & 1726 & 1781 & 1902 & 2048 & 2133 & 2224 & 2338
  \\
  \rowcolor{gray!12}
  & BalanceRAG & -- & \underline{2549} & \underline{2718} &
  \underline{2807} & \underline{2839} & \underline{2843} & \underline{2843} &
  \underline{2843} & \underline{2843} \\

  \midrule

  \multirow{5}{*}{Qwen3-14B}
  & LLM-only & -- & -- & 1836 & 1964 & 2073 & 2176 & 2259 & 2314 & 2363 \\
  & LLM-RAG & \textbf{2521} & \textbf{2660} & \textbf{2706} &
  \textbf{2707} & \textbf{2707} & \textbf{2707} & \textbf{2707} & \textbf{2707}
  & \textbf{2707} \\
  & UCB-Cascaded-CP & -- & -- & 2310 & 2362 & 2467 & 2538 & 2533 & 2525 & 2497 \\
  & UCB-Cascaded-HFD & -- & -- & -- & -- & 2058 & 2215 & 2359 & 2410 & 2420 \\
  \rowcolor{gray!12}
  & BalanceRAG & \underline{2499} & \underline{2634} & \underline{2629}
  & \underline{2629} & \underline{2660} & \underline{2661} & \underline{2661} &
  \underline{2661} & \underline{2661} \\

  \bottomrule
  \end{tabular}
  \end{table}
  \begin{table}[htbp]
  \centering
  \small
  \setlength{\tabcolsep}{5.2pt}
  \renewcommand{\arraystretch}{1.08}
  \caption{\textbf{Correctly accepted samples comparison on SQuAD.}
  We report the number of correctly accepted samples (Corr.) under
  different target risk levels $\alpha$. Bold numbers indicate the best
  Corr. under the same LLM and risk level; underlined numbers indicate
  the second best.}
  \label{tab:corr_comparison_squad}
  \begin{tabular}{llccccccccc}
  \toprule
  \textbf{LLMs} & \textbf{Methods / $\alpha$}
  & \textbf{0.1} & \textbf{0.11} & \textbf{0.12} & \textbf{0.13}
  & \textbf{0.14} & \textbf{0.15} & \textbf{0.16} & \textbf{0.17} &
  \textbf{0.18} \\
  \midrule

  \multirow{5}{*}{Qwen2.5-3B}
  & LLM-only & -- & -- & -- & -- & -- & 333 & 362 & 402 & 452 \\
  & LLM-RAG & \textbf{1450} & \textbf{1653} & \textbf{1785} &
  \textbf{1873} & \textbf{1929} & \textbf{1942} & \textbf{1942} & \textbf{1942}
  & \textbf{1942} \\
  & UCB-Cascaded-CP & -- & -- & -- & -- & -- & 1916 & 1927 & 1928 & 1925 \\
  & UCB-Cascaded-HFD & -- & -- & -- & -- & -- & -- & -- & -- & -- \\
  \rowcolor{gray!12}
  & BalanceRAG & \underline{1357} & \underline{1464} & \underline{1649}
  & \underline{1798} & \underline{1887} & \underline{1922} & \underline{1929} &
  \underline{1930} & \underline{1930} \\

  \midrule

  \multirow{5}{*}{Qwen2.5-7B}
  & LLM-only & -- & -- & -- & -- & 443 & 468 & 551 & 682 & 878 \\
  & LLM-RAG & \textbf{1723} & \textbf{1836} & \textbf{1896} &
  \textbf{1924} & \textbf{1929} & \textbf{1929} & \textbf{1929} & \textbf{1929}
  & \textbf{1929} \\
  & UCB-Cascaded-CP & -- & -- & -- & -- & 1913 & 1915 & 1913 & 1906 & 1890 \\
  & UCB-Cascaded-HFD & -- & -- & -- & -- & -- & -- & -- & 1914 & 1914 \\
  \rowcolor{gray!12}
  & BalanceRAG & \underline{1611} & \underline{1698} & \underline{1821}
  & \underline{1895} & \underline{1915} & \underline{1916} & \underline{1916} &
  \underline{1916} & \underline{1916} \\

  \midrule

  \multirow{5}{*}{Llama3.2-3B}
  & LLM-only & -- & -- & -- & -- & -- & 137 & 144 & 164 & 211 \\
  & LLM-RAG & \textbf{1438} & \textbf{1546} & \textbf{1617} &
  \textbf{1666} & \textbf{1714} & \textbf{1755} & \textbf{1784} & \textbf{1809}
  & \textbf{1829} \\
  & UCB-Cascaded-CP & -- & -- & -- & -- & -- & 1744 & 1775 & 1798 & 1818 \\
  & UCB-Cascaded-HFD & -- & -- & -- & -- & -- & -- & -- & -- & -- \\
  \rowcolor{gray!12}
  & BalanceRAG & \underline{1377} & \underline{1484} & \underline{1591}
  & \underline{1654} & \underline{1699} & \underline{1745} & \underline{1781} &
  \underline{1805} & \underline{1828} \\

  \midrule

  \multirow{5}{*}{Llama3.1-8B}
  & LLM-only & -- & -- & -- & -- & 194 & 211 & 227 & 257 & 288 \\
  & LLM-RAG & \textbf{1763} & \textbf{1803} & \textbf{1831} &
  \textbf{1850} & \textbf{1853} & \textbf{1853} & \textbf{1853} & \textbf{1853}
  & \textbf{1853} \\
  & UCB-Cascaded-CP & -- & -- & -- & -- & 1835 & 1837 & 1835 & 1834 & 1830 \\
  & UCB-Cascaded-HFD & -- & -- & -- & -- & -- & -- & -- & -- & -- \\
  \rowcolor{gray!12}
  & BalanceRAG & \underline{1723} & \underline{1784} & \underline{1819}
  & \underline{1836} & \underline{1842} & \underline{1843} & \underline{1843} &
  \underline{1843} & \underline{1843} \\

  \midrule

  \multirow{5}{*}{Vicuna-7B-v1.5}
  & LLM-only & -- & -- & -- & -- & -- & -- & 130 & 138 & 150 \\
  & LLM-RAG & \textbf{1330} & \textbf{1376} & \textbf{1411} &
  \textbf{1443} & \textbf{1464} & \textbf{1479} & \textbf{1484} & \textbf{1484}
  & \textbf{1484} \\
  & UCB-Cascaded-CP & -- & -- & -- & -- & -- & -- & 1467 & 1473 & 1473 \\
  & UCB-Cascaded-HFD & -- & -- & -- & -- & -- & -- & -- & -- & -- \\
  \rowcolor{gray!12}
  & BalanceRAG & \underline{1258} & \underline{1327} & \underline{1374}
  & \underline{1415} & \underline{1447} & \underline{1468} & \underline{1479} &
  \underline{1482} & \underline{1482} \\

  \midrule

  \multirow{5}{*}{Vicuna-13B-v1.5}
  & LLM-only & -- & -- & -- & -- & -- & -- & 128 & 136 & 152 \\
  & LLM-RAG & \textbf{1027} & \textbf{1094} & \textbf{1143} &
  \textbf{1183} & \textbf{1224} & \textbf{1256} & \textbf{1280} & \textbf{1291}
  & \textbf{1292} \\
  & UCB-Cascaded-CP & -- & -- & -- & -- & -- & -- & 1245 & 1266 & 1279 \\
  & UCB-Cascaded-HFD & -- & -- & -- & -- & -- & -- & -- & -- & -- \\
  \rowcolor{gray!12}
  & BalanceRAG & \underline{950} & \underline{1018} & \underline{1097}
  & \underline{1149} & \underline{1189} & \underline{1224} & \underline{1256} &
  \underline{1276} & \underline{1281} \\

  \midrule

  \multirow{5}{*}{Qwen3-4B}
  & LLM-only & -- & -- & -- & -- & -- & 781 & 838 & 948 & 1120 \\
  & LLM-RAG & \textbf{2069} & \textbf{2075} & \textbf{2075} &
  \textbf{2075} & \textbf{2075} & \textbf{2075} & \textbf{2075} & \textbf{2075}
  & \textbf{2075} \\
  & UCB-Cascaded-CP & -- & -- & -- & -- & -- & 2019 & 2018 & 2008 & 1983 \\
  & UCB-Cascaded-HFD & -- & -- & -- & -- & -- & -- & -- & 2019 & 2015 \\
  \rowcolor{gray!12}
  & BalanceRAG & \underline{2053} & \underline{2050} & \underline{2031}
  & \underline{2032} & \underline{2032} & \underline{2032} & \underline{2032} &
  \underline{2032} & \underline{2032} \\

  \midrule

  \multirow{5}{*}{Qwen3-14B}
  & LLM-only & -- & -- & -- & -- & -- & 974 & 1094 & 1251 & 1386 \\
  & LLM-RAG & \textbf{1953} & \textbf{1972} & \textbf{1973} &
  \textbf{1973} & \textbf{1973} & \textbf{1973} & \textbf{1973} & \textbf{1973}
  & \textbf{1973} \\
  & UCB-Cascaded-CP & -- & -- & -- & -- & -- & 1912 & 1903 & 1882 & 1853 \\
  & UCB-Cascaded-HFD & -- & -- & -- & -- & -- & -- & -- & 1910 & 1895 \\
  \rowcolor{gray!12}
  & BalanceRAG & \underline{1942} & \underline{1948} & \underline{1905}
  & \underline{1914} & \underline{1923} & \underline{1924} & \underline{1924} &
  \underline{1924} & \underline{1924} \\

  \bottomrule
  \end{tabular}
  \end{table}
  \begin{table}[htbp]
  \centering
  \small
  \setlength{\tabcolsep}{5.2pt}
  \renewcommand{\arraystretch}{1.08}
  \caption{\textbf{Correctly accepted samples comparison on NQ.}
  We report the number of correctly accepted samples (Corr.) under
  different target risk levels $\alpha$. Bold numbers indicate the best
  Corr. under the same LLM and risk level; underlined numbers indicate
  the second best.}
  \label{tab:corr_comparison_nq}
  \begin{tabular}{llccccccccc}
  \toprule
  \textbf{LLMs} & \textbf{Methods / $\alpha$}
  & \textbf{0.15} & \textbf{0.16} & \textbf{0.17} & \textbf{0.18}
  & \textbf{0.19} & \textbf{0.20} & \textbf{0.21} & \textbf{0.22} &
  \textbf{0.23} \\
  \midrule

  \multirow{5}{*}{Qwen2.5-3B}
  & LLM-only & -- & -- & -- & -- & -- & 168 & 180 & 193 & 209 \\
  & LLM-RAG & \textbf{461} & \textbf{508} & \textbf{565} & \textbf{622} &
  \textbf{667} & \textbf{696} & \textbf{707} & \textbf{708} & \textbf{708} \\
  & UCB-Cascaded-CP & -- & -- & -- & -- & -- & 607 & 645 & 669 & 685 \\
  & UCB-Cascaded-HFD & -- & -- & -- & -- & -- & -- & -- & -- & -- \\
  \rowcolor{gray!12}
  & BalanceRAG & \underline{435} & \underline{470} & \underline{510} &
  \underline{554} & \underline{600} & \underline{646} & \underline{683} &
  \underline{697} & \underline{701} \\

  \midrule

  \multirow{5}{*}{Qwen2.5-7B}
  & LLM-only & -- & -- & -- & -- & -- & 264 & 287 & 309 & 341 \\
  & LLM-RAG & \underline{535} & \textbf{577} & \textbf{623} & \textbf{674}
  & \textbf{707} & \textbf{720} & \textbf{724} & \textbf{725} & \textbf{725} \\
  & UCB-Cascaded-CP & -- & -- & -- & -- & -- & 682 & 706 & 712 & 712 \\
  & UCB-Cascaded-HFD & -- & -- & -- & -- & -- & -- & -- & -- & 703 \\
  \rowcolor{gray!12}
  & BalanceRAG & \textbf{544} & \underline{562} & \underline{595} &
  \underline{628} & \underline{672} & \underline{697} & \underline{715} &
  \underline{716} & \underline{718} \\

  \midrule

  \multirow{5}{*}{Llama3.2-3B}
  & LLM-only & -- & -- & 119 & 125 & 134 & 146 & 159 & 174 & 189 \\
  & LLM-RAG & \textbf{489} & \textbf{532} & \textbf{562} & \textbf{590} &
  \textbf{615} & \textbf{640} & \textbf{661} & \textbf{680} & \textbf{694} \\
  & UCB-Cascaded-CP & -- & -- & -- & 563 & 590 & 615 & 635 & 653 & 669 \\
  & UCB-Cascaded-HFD & -- & -- & -- & -- & -- & -- & -- & -- & 633 \\
  \rowcolor{gray!12}
  & BalanceRAG & \underline{447} & \underline{492} & \underline{531} &
  \underline{575} & \underline{615} & \underline{634} & \underline{657} &
  \underline{674} & \underline{694} \\

  \midrule

  \multirow{5}{*}{Llama3.1-8B}
  & LLM-only & 158 & 191 & 240 & 283 & 320 & 345 & 365 & 382 & 401 \\
  & LLM-RAG & \textbf{734} & \textbf{752} & \textbf{760} & \textbf{763} &
  \textbf{763} & \textbf{763} & \textbf{763} & \textbf{763} & \textbf{763} \\
  & UCB-Cascaded-CP & 698 & 718 & 736 & 743 & 747 & 744 & 740 & 735 & 730 \\
  & UCB-Cascaded-HFD & -- & -- & -- & 717 & 735 & 749 & 753 & 750 & 744 \\
  \rowcolor{gray!12}
  & BalanceRAG & \underline{709} & \underline{741} & \underline{753} &
  \underline{757} & \underline{759} & \underline{759} & \underline{759} &
  \underline{759} & \underline{759} \\

  \midrule

  \multirow{5}{*}{Vicuna-7B-v1.5}
  & LLM-only & -- & -- & -- & -- & -- & -- & -- & 88 & 90 \\
  & LLM-RAG & \textbf{541} & \textbf{564} & \textbf{584} & \textbf{600} &
  \textbf{615} & \textbf{629} & \textbf{639} & \textbf{643} & \textbf{644} \\
  & UCB-Cascaded-CP & -- & -- & -- & -- & -- & -- & -- & -- & 629 \\
  & UCB-Cascaded-HFD & -- & -- & -- & -- & -- & -- & -- & -- & -- \\
  \rowcolor{gray!12}
  & BalanceRAG & \underline{478} & \underline{508} & \underline{536} &
  \underline{556} & \underline{582} & \underline{601} & \underline{617} &
  \underline{628} & \underline{635} \\

  \midrule

  \multirow{5}{*}{Vicuna-13B-v1.5}
  & LLM-only & -- & -- & -- & -- & -- & -- & -- & -- & -- \\
  & LLM-RAG & \textbf{387} & \textbf{410} & \textbf{430} & \textbf{448} &
  \textbf{462} & \textbf{471} & \textbf{478} & \textbf{482} & \textbf{483} \\
  & UCB-Cascaded-CP & -- & -- & -- & -- & -- & -- & -- & -- & -- \\
  & UCB-Cascaded-HFD & -- & -- & -- & -- & -- & -- & -- & -- & -- \\
  \rowcolor{gray!12}
  & BalanceRAG & \underline{373} & \underline{400} & \underline{413} &
  \underline{418} & \underline{416} & \underline{427} & \underline{439} &
  \underline{449} & \underline{460} \\

  \midrule

  \multirow{5}{*}{Qwen3-4B}
  & LLM-only & -- & -- & -- & -- & -- & -- & -- & -- & -- \\
  & LLM-RAG & \textbf{799} & \textbf{804} & \textbf{806} & \textbf{806} &
  \textbf{806} & \textbf{806} & \textbf{806} & \textbf{806} & \textbf{806} \\
  & UCB-Cascaded-CP & -- & -- & -- & -- & -- & -- & -- & -- & -- \\
  & UCB-Cascaded-HFD & -- & -- & -- & -- & -- & -- & -- & -- & -- \\
  \rowcolor{gray!12}
  & BalanceRAG & \underline{792} & \underline{799} & \underline{802} &
  \underline{797} & \underline{785} & \underline{780} & \underline{779} &
  \underline{778} & \underline{777} \\

  \midrule

  \multirow{5}{*}{Qwen3-14B}
  & LLM-only & -- & -- & -- & -- & -- & -- & 383 & 397 & 421 \\
  & LLM-RAG & \textbf{747} & \textbf{772} & \textbf{777} & \textbf{778} &
  \textbf{778} & \textbf{778} & \textbf{778} & \textbf{778} & \textbf{778} \\
  & UCB-Cascaded-CP & -- & -- & -- & -- & -- & -- & 740 & 741 & 738 \\
  & UCB-Cascaded-HFD & -- & -- & -- & -- & -- & -- & -- & -- & -- \\
  \rowcolor{gray!12}
  & BalanceRAG & \underline{727} & \underline{761} & \underline{769} &
  \underline{763} & \underline{757} & \underline{752} & \underline{752} &
  \underline{753} & \underline{752} \\

  \bottomrule
  \end{tabular}
  \end{table}
\begin{table}[htbp]
\centering
\scriptsize
\setlength{\tabcolsep}{4.2pt}
\renewcommand{\arraystretch}{1.20}
\arrayrulecolor{black!45}

\caption{
\textbf{Route allocation (\%) of BalanceRAG.}
We report the allocation ratio of accepted samples routed to the
LLM-only branch (Only) and the RAG branch.
Cov. denotes the total accepted ratio, i.e., Only+RAG    .
}
\label{tab:balancerag_allocation}

\resizebox{\textwidth}{!}{
\begin{tabular}{llcccccccccccc}
\toprule

\rowcolor{blue!18}
\textbf{Dataset}
& \textbf{LLMs}
& \multicolumn{3}{c}{$\alpha = 0.12$}
& \multicolumn{3}{c}{$\alpha = 0.14$}
& \multicolumn{3}{c}{$\alpha = 0.16$}
& \multicolumn{3}{c}{$\alpha = 0.18$} \\

\rowcolor{blue!10}
& & \textbf{Only} & \textbf{RAG} & \textbf{Cov.}
& \textbf{Only} & \textbf{RAG} & \textbf{Cov.}
& \textbf{Only} & \textbf{RAG} & \textbf{Cov.}
& \textbf{Only} & \textbf{RAG} & \textbf{Cov.} \\

\midrule

\rowcolor{blue!3}
& Qwen2.5-3B      & 27.7 & 44.5 & 72.2 & 29.9 & 58.3 & 88.2 & 30.7 & 68.4 & 99.1 & 31.1 & 68.9 & 100.0 \\
\rowcolor{blue!3}
& Qwen2.5-7B      & 62.2 & 35.8 & 98.0 & 64.1 & 35.8 & 99.9 & 64.1 & 35.8 & 99.9 & 64.1 & 35.8 & 99.9 \\
\rowcolor{blue!3}
& LLaMA-3.2-3B    & 37.1 & 26.7 & 63.8 & 41.1 & 31.4 & 72.5 & 41.6 & 39.7 & 81.3 & 41.5 & 46.7 & 88.2 \\
\rowcolor{blue!3}
\textbf{TriviaQA} & Vicuna-7B-v1.5  & 33.5 & 43.2 & 76.7 & 33.2 & 50.6 & 83.8 & 35.0 & 56.2 & 91.2 & 35.0 & 62.2 & 97.2 \\
\rowcolor{blue!3}
& Vicuna-13B-v1.5 & 51.9 & 28.2 & 80.1 & 54.5 & 33.2 & 87.7 & 54.7 & 40.1 & 94.8 & 54.7 & 45.0 & 99.7 \\
\rowcolor{blue!3}
& LLaMA-3.1-8B    & 51.2 & 34.5 & 85.7 & 53.3 & 44.1 & 97.4 & 62.1 & 37.8 & 99.9 & 63.1 & 36.8 & 99.9 \\
\rowcolor{blue!3}
& Qwen3-4B        & 32.4 & 61.7 & 94.1 & 58.7 & 41.2 & 99.9 & 58.8 & 41.2 & 100.0 & 58.8 & 41.2 & 100.0 \\
\rowcolor{blue!3}
& Qwen3-14B       & 29.8 & 67.7 & 97.5 & 64.0 & 35.9 & 99.9 & 64.1 & 35.8 & 99.9 & 64.1 & 35.8 & 99.9 \\

\addlinespace[3pt]
\midrule

\rowcolor{green!18}
\textbf{Dataset}
& \textbf{LLMs}
& \multicolumn{3}{c}{$\alpha = 0.12$}
& \multicolumn{3}{c}{$\alpha = 0.14$}
& \multicolumn{3}{c}{$\alpha = 0.16$}
& \multicolumn{3}{c}{$\alpha = 0.18$} \\

\rowcolor{green!10}
& & \textbf{Only} & \textbf{RAG} & \textbf{Cov.}
& \textbf{Only} & \textbf{RAG} & \textbf{Cov.}
& \textbf{Only} & \textbf{RAG} & \textbf{Cov.}
& \textbf{Only} & \textbf{RAG} & \textbf{Cov.} \\

\midrule

\rowcolor{green!3}
& Qwen2.5-3B      & 15.8 & 66.9 & 82.7 & 17.3 & 79.5 & 96.8 & 18.3 & 81.6 & 99.9 & 18.8 & 81.1 & 99.9 \\
\rowcolor{green!3}
& Qwen2.5-7B      & 21.7 & 71.3 & 93.0 & 24.0 & 75.9 & 99.9 & 25.1 & 74.8 & 99.9 & 25.1 & 74.8 & 99.9 \\
\rowcolor{green!3}
& LLaMA-3.2-3B    & 7.4 & 72.1 & 79.5 & 8.4 & 78.5 & 86.9 & 7.8 & 85.3 & 93.1 & 8.0 & 90.1 & 98.1 \\
\rowcolor{green!3}
\textbf{SQuAD} & Vicuna-7B-v1.5  & 6.8 & 81.9 & 88.7 & 6.4 & 89.0 & 95.4 & 6.4 & 93.2 & 99.6 & 6.5 & 93.4 & 99.9 \\
\rowcolor{green!3}
& Vicuna-13B-v1.5 & 9.7 & 71.0 & 80.7 & 9.7 & 79.9 & 89.6 & 10.2 & 86.6 & 96.8 & 10.0 & 89.9 & 99.9 \\
\rowcolor{green!3}
& LLaMA-3.1-8B    & 7.7 & 89.6 & 97.3 & 8.0 & 91.9 & 99.9 & 8.4 & 91.6 & 100.0 & 8.4 & 91.6 & 100.0 \\
\rowcolor{green!3}
& Qwen3-4B        & 29.2 & 70.2 & 99.4 & 38.6 & 61.4 & 100.0 & 38.6 & 61.4 & 100.0 & 38.6 & 61.4 & 100.0 \\
\rowcolor{green!3}
& Qwen3-14B       & 28.3 & 69.2 & 97.5 & 45.6 & 54.3 & 99.9 & 45.7 & 54.3 & 100.0 & 45.7 & 54.3 & 100.0 \\

\addlinespace[3pt]
\midrule

\rowcolor{orange!20}
\textbf{Dataset}
& \textbf{LLMs}
& \multicolumn{3}{c}{$\alpha = 0.17$}
& \multicolumn{3}{c}{$\alpha = 0.19$}
& \multicolumn{3}{c}{$\alpha = 0.21$}
& \multicolumn{3}{c}{$\alpha = 0.23$} \\

\rowcolor{orange!10}
& & \textbf{Only} & \textbf{RAG} & \textbf{Cov.}
& \textbf{Only} & \textbf{RAG} & \textbf{Cov.}
& \textbf{Only} & \textbf{RAG} & \textbf{Cov.}
& \textbf{Only} & \textbf{RAG} & \textbf{Cov.} \\

\midrule

\rowcolor{orange!3}
& Qwen2.5-3B      & 13.8 & 55.4 & 69.2 & 20.0 & 63.5 & 83.5 & 20.9 & 76.0 & 96.9 & 20.7 & 79.1 & 99.8 \\
\rowcolor{orange!3}
& Qwen2.5-7B      & 17.8 & 62.2 & 80.0 & 25.0 & 67.2 & 92.2 & 34.9 & 64.4 & 99.3 & 36.6 & 63.2 & 99.8 \\
\rowcolor{orange!3}
& LLaMA-3.2-3B    & 15.1 & 53.4 & 68.5 & 14.3 & 67.4 & 81.7 & 17.5 & 72.0 & 89.5 & 18.0 & 78.8 & 96.8 \\
\rowcolor{orange!3}
\textbf{NQ} & Vicuna-7B-v1.5  & 12.4 & 66.0 & 78.4 & 12.2 & 75.2 & 87.4 & 12.4 & 82.6 & 95.0 & 11.9 & 87.5 & 99.4 \\
\rowcolor{orange!3}
& Vicuna-13B-v1.5 & 1.9 & 78.8 & 80.7 & 13.2 & 70.3 & 83.5 & 19.6 & 70.3 & 89.9 & 23.5 & 73.3 & 96.8 \\
\rowcolor{orange!3}
& LLaMA-3.1-8B    & 22.4 & 76.6 & 99.0 & 24.5 & 75.3 & 99.8 & 25.1 & 74.7 & 99.8 & 25.1 & 74.7 & 99.8 \\
\rowcolor{orange!3}
& Qwen3-4B        & 1.9 & 97.7 & 99.6 & 26.4 & 73.3 & 99.7 & 39.8 & 60.0 & 99.8 & 43.2 & 56.7 & 99.9 \\
\rowcolor{orange!3}
& Qwen3-14B       & 5.1 & 93.9 & 99.0 & 25.0 & 74.1 & 99.1 & 46.1 & 53.7 & 99.8 & 48.7 & 51.1 & 99.8 \\

\bottomrule
\end{tabular}
}

\arrayrulecolor{black}
\end{table}
\begin{table}[htbp]
\centering
\scriptsize
\setlength{\tabcolsep}{3.4pt}
\renewcommand{\arraystretch}{1.18}
\arrayrulecolor{black!45}

\caption{
\textbf{Route allocation (\%) of multi-risk BalanceRAG.}
We report the allocation ratio of accepted samples routed to the
LLM-only branch (Only) and the RAG branch.
$\alpha_2$ denotes the fallback invocation rate cap, chosen as the nearest
lower multiple of 10\% below the corresponding risk-only RAG allocation.
Cov. denotes the total accepted ratio.
}
\label{tab:balancerag_multirisk_allocation}

\resizebox{\textwidth}{!}{
\begin{tabular}{llcccccccccccccccc}
\toprule

\rowcolor{blue!18}
\textbf{Dataset}
& \textbf{LLMs}
& \multicolumn{4}{c}{$\alpha = 0.12$}
& \multicolumn{4}{c}{$\alpha = 0.14$}
& \multicolumn{4}{c}{$\alpha = 0.16$}
& \multicolumn{4}{c}{$\alpha = 0.18$} \\

\rowcolor{blue!10}
& & $\boldsymbol{\alpha_2}$ & \textbf{Only} & \textbf{RAG} & \textbf{Cov.}
& $\boldsymbol{\alpha_2}$ & \textbf{Only} & \textbf{RAG} & \textbf{Cov.}
& $\boldsymbol{\alpha_2}$ & \textbf{Only} & \textbf{RAG} & \textbf{Cov.}
& $\boldsymbol{\alpha_2}$ & \textbf{Only} & \textbf{RAG} & \textbf{Cov.} \\

\midrule

\rowcolor{blue!3}
& Qwen2.5-3B & 40 & 34.2 & \textbf{34.0} & 68.2 & 50 & 35.3 & \textbf{46.8} & 82.2 & 60 & 37.1 & \textbf{57.7} & 94.8 & 60 & 42.7 & \textbf{57.1} & 99.7 \\
\rowcolor{blue!3}
& Qwen2.5-7B & 30 & 70.1 & \textbf{25.0} & 95.0 & 30 & 74.8 & \textbf{25.1} & 99.9 & 30 & 75.7 & \textbf{24.2} & 100.0 & 30 & 75.8 & \textbf{24.1} & 100.0 \\
\rowcolor{blue!3}
& LLaMA-3.2-3B & 20 & 38.0 & \textbf{0.4} & 38.4 & 30 & 45.3 & \textbf{25.5} & 70.8 & 30 & 51.9 & \textbf{25.7} & 77.6 & 40 & 51.1 & \textbf{35.1} & 86.2 \\
\rowcolor{blue!3}
\textbf{TriviaQA} & Vicuna-7B-v1.5 & 40 & 40.6 & \textbf{34.2} & 74.8 & 50 & 37.9 & \textbf{45.2} & 83.1 & 50 & 42.9 & \textbf{45.9} & 88.8 & 60 & 40.5 & \textbf{55.8} & 96.3 \\
\rowcolor{blue!3}
& Vicuna-13B-v1.5 & 20 & 62.9 & \textbf{13.1} & 76.0 & 30 & 63.8 & \textbf{21.3} & 85.0 & 40 & 63.3 & \textbf{30.5} & 93.8 & 40 & 65.4 & \textbf{33.8} & 99.2 \\
\rowcolor{blue!3}
& LLaMA-3.1-8B & 30 & 59.4 & \textbf{24.1} & 83.5 & 40 & 61.7 & \textbf{33.4} & 95.1 & 30 & 74.0 & \textbf{25.8} & 99.8 & 30 & 74.6 & \textbf{25.4} & 100.0 \\
\rowcolor{blue!3}
& Qwen3-4B & 60 & 59.1 & \textbf{31.2} & 90.3 & 40 & 62.4 & \textbf{36.7} & 99.2 & 40 & 63.7 & \textbf{36.3} & 100.0 & 40 & 63.8 & \textbf{36.2} & 100.0 \\
\rowcolor{blue!3}
& Qwen3-14B & 60 & 63.9 & \textbf{30.0} & 93.9 & 30 & 71.8 & \textbf{27.2} & 99.0 & 30 & 73.0 & \textbf{27.0} & 100.0 & 30 & 73.2 & \textbf{26.8} & 100.0 \\

\addlinespace[3pt]
\midrule

\rowcolor{green!18}
\textbf{Dataset}
& \textbf{LLMs}
& \multicolumn{4}{c}{$\alpha = 0.12$}
& \multicolumn{4}{c}{$\alpha = 0.14$}
& \multicolumn{4}{c}{$\alpha = 0.16$}
& \multicolumn{4}{c}{$\alpha = 0.18$} \\

\rowcolor{green!10}
& & $\boldsymbol{\alpha_2}$ & \textbf{Only} & \textbf{RAG} & \textbf{Cov.}
& $\boldsymbol{\alpha_2}$ & \textbf{Only} & \textbf{RAG} & \textbf{Cov.}
& $\boldsymbol{\alpha_2}$ & \textbf{Only} & \textbf{RAG} & \textbf{Cov.}
& $\boldsymbol{\alpha_2}$ & \textbf{Only} & \textbf{RAG} & \textbf{Cov.} \\

\midrule

\rowcolor{green!3}
& Qwen2.5-3B & 60 & 19.4 & \textbf{55.5} & 74.9 & 70 & 25.1 & \textbf{66.4} & 91.5 & 80 & 24.9 & \textbf{74.9} & 99.8 & 80 & 25.4 & \textbf{74.5} & 99.9 \\
\rowcolor{green!3}
& Qwen2.5-7B & 70 & 25.6 & \textbf{65.3} & 90.9 & 70 & 33.1 & \textbf{65.9} & 99.0 & 70 & 35.1 & \textbf{64.8} & 99.9 & 70 & 35.6 & \textbf{64.4} & 100.0 \\
\rowcolor{green!3}
& LLaMA-3.2-3B & 70 & 10.4 & \textbf{65.9} & 76.3 & 70 & 18.8 & \textbf{64.6} & 83.5 & 80 & 17.4 & \textbf{72.6} & 90.0 & 90 & 10.3 & \textbf{86.1} & 96.4 \\
\rowcolor{green!3}
\textbf{SQuAD} & Vicuna-7B-v1.5 & 80 & 9.3 & \textbf{77.1} & 86.4 & 80 & 12.9 & \textbf{77.2} & 90.1 & 90 & 9.9 & \textbf{88.2} & 98.1 & 90 & 12.7 & \textbf{87.2} & 99.9 \\
\rowcolor{green!3}
& Vicuna-13B-v1.5 & 70 & 11.1 & \textbf{64.9} & 76.0 & 70 & 18.0 & \textbf{65.4} & 83.4 & 80 & 17.2 & \textbf{75.9} & 93.0 & 80 & 21.6 & \textbf{76.0} & 97.5 \\
\rowcolor{green!3}
& LLaMA-3.1-8B & 80 & 16.4 & \textbf{76.7} & 93.0 & 90 & 12.3 & \textbf{87.4} & 99.7 & 90 & 13.9 & \textbf{86.0} & 99.9 & 90 & 14.1 & \textbf{85.8} & 99.9 \\
\rowcolor{green!3}
& Qwen3-4B & 70 & 38.3 & \textbf{60.2} & 98.5 & 60 & 44.1 & \textbf{55.8} & 99.9 & 60 & 44.5 & \textbf{55.4} & 100.0 & 60 & 44.7 & \textbf{55.2} & 100.0 \\
\rowcolor{green!3}
& Qwen3-14B & 60 & 44.4 & \textbf{50.5} & 94.9 & 50 & 52.8 & \textbf{46.6} & 99.4 & 50 & 54.5 & \textbf{45.4} & 99.9 & 50 & 54.7 & \textbf{45.2} & 100.0 \\

\addlinespace[3pt]
\midrule

\rowcolor{orange!20}
\textbf{Dataset}
& \textbf{LLMs}
& \multicolumn{4}{c}{$\alpha = 0.17$}
& \multicolumn{4}{c}{$\alpha = 0.19$}
& \multicolumn{4}{c}{$\alpha = 0.21$}
& \multicolumn{4}{c}{$\alpha = 0.23$} \\

\rowcolor{orange!10}
& & $\boldsymbol{\alpha_2}$ & \textbf{Only} & \textbf{RAG} & \textbf{Cov.}
& $\boldsymbol{\alpha_2}$ & \textbf{Only} & \textbf{RAG} & \textbf{Cov.}
& $\boldsymbol{\alpha_2}$ & \textbf{Only} & \textbf{RAG} & \textbf{Cov.}
& $\boldsymbol{\alpha_2}$ & \textbf{Only} & \textbf{RAG} & \textbf{Cov.} \\

\midrule

\rowcolor{orange!3}
& Qwen2.5-3B & 50 & 21.3 & \textbf{39.7} & 61.0 & 60 & 24.2 & \textbf{50.8} & 75.0 & 70 & 26.0 & \textbf{63.8} & 89.8 & 70 & 32.3 & \textbf{64.2} & 96.5 \\
\rowcolor{orange!3}
& Qwen2.5-7B & 60 & 35.0 & \textbf{42.9} & 77.9 & 60 & 37.3 & \textbf{49.6} & 87.0 & 60 & 43.1 & \textbf{52.4} & 95.5 & 60 & 48.7 & \textbf{50.5} & 99.2 \\
\rowcolor{orange!3}
& LLaMA-3.2-3B & 50 & 20.1 & \textbf{41.6} & 61.6 & 60 & 20.5 & \textbf{53.7} & 74.2 & 70 & 21.9 & \textbf{63.0} & 84.9 & 70 & 27.1 & \textbf{63.3} & 90.4 \\
\rowcolor{orange!3}
\textbf{NQ} & Vicuna-7B-v1.5 & 60 & 15.7 & \textbf{53.9} & 69.5 & 70 & 16.4 & \textbf{64.4} & 80.8 & 80 & 15.3 & \textbf{75.7} & 91.0 & 80 & 18.5 & \textbf{76.3} & 94.8 \\
\rowcolor{orange!3}
& Vicuna-13B-v1.5 & 70 & 3.3 & \textbf{63.2} & 66.6 & 70 & 15.9 & \textbf{57.7} & 73.6 & 70 & 23.4 & \textbf{60.9} & 84.4 & 70 & 29.1 & \textbf{62.3} & 91.4 \\
\rowcolor{orange!3}
& LLaMA-3.1-8B & 70 & 33.0 & \textbf{61.6} & 94.5 & 70 & 37.3 & \textbf{61.9} & 99.2 & 70 & 38.8 & \textbf{61.0} & 99.8 & 70 & 38.9 & \textbf{60.9} & 99.8 \\
\rowcolor{orange!3}
& Qwen3-4B & 90 & 9.2 & \textbf{80.1} & 89.3 & 70 & 43.3 & \textbf{55.4} & 98.7 & 50 & 52.5 & \textbf{44.5} & 97.0 & 50 & 55.2 & \textbf{44.1} & 99.3 \\
\rowcolor{orange!3}
& Qwen3-14B & 90 & 11.2 & \textbf{78.0} & 89.2 & 70 & 50.0 & \textbf{47.4} & 97.5 & 50 & 54.9 & \textbf{44.0} & 98.9 & 50 & 56.1 & \textbf{43.6} & 99.7 \\

\bottomrule
\end{tabular}
}

\arrayrulecolor{black}
\end{table}
\newpage


\begin{figure}[H]
\centering
\begin{minipage}{0.94\linewidth}

\begin{tcolorbox}[onlyhelpbox]
\exampletitle{Case 1: LLM-only is correct while RAG is misled}

\noindent
\textbf{ID:} sfq\_8482\_5657
\hfill
$\alpha = 0.10$
\hfill
$t_1 = 0.286$
\hfill
$t_2 = 0.748$

\vspace{2pt}
\noindent
$u_{Only} = 0.286$
\hfill
$u_{RAG} = 2.466$
\hfill
\textbf{Route:} Only \cmark

\examplesep

\noindent\textbf{Question} \\
Charleston is the capital of which US state?

\examplesep

\noindent\textbf{Gold Answer} \\
West Virginia

\examplesep

\noindent\textbf{LLM-only Answer \cmark} \\
Charleston is the capital of West Virginia.

\examplesep

\noindent\textbf{RAG Answer} \\
Charleston is not the capital of any US state.
\end{tcolorbox}

\begin{tcolorbox}[onlyhelpbox]
\exampletitle{Case 2: LLM-only gives the exact answer while RAG rejects it}

\noindent
\textbf{ID:} qz\_453\_1207
\hfill
$\alpha = 0.10$
\hfill
$t_1 = 0.748$
\hfill
$t_2 = 0.469$

\vspace{2pt}
\noindent
$u_{Only} = 0.469$
\hfill
$u_{RAG} = 2.020$
\hfill
\textbf{Route:}Only \cmark

\examplesep

\noindent\textbf{Question} \\
For what novel did J. K. Rowling win the 1999 Whitbread Children's book of the year award?

\examplesep

\noindent\textbf{Gold Answer} \\
Harry Potter and the Prisoner of Azkaban

\examplesep

\noindent\textbf{LLM-only Answer \cmark} \\
Harry Potter and the Prisoner of Azkaban.

\examplesep

\noindent\textbf{RAG Answer} \\
She did not win the 1999 Whitbread Children's book of the year award.
\end{tcolorbox}

\end{minipage}

\caption{
Qualitative examples where BalanceRAG preserves the LLM-only branch.
In both examples, the LLM-only answer is correct and passes the first-stage uncertainty threshold,
while retrieval introduces misleading evidence and produces an incorrect RAG answer.
}
\label{fig:qualitative_only_correct_rag_wrong}
\end{figure}


\begin{figure}[htbp]
\centering
\begin{minipage}{0.94\linewidth}

\begin{tcolorbox}[raghelpbox]
\exampletitle{Case 3: RAG corrects a wrong entity generated by LLM-only}

\noindent
\textbf{ID:} bt\_4365\_238
\hfill
$\alpha = 0.10$
\hfill
$t_1 = 0.286$
\hfill
$t_2 = 0.748$

\vspace{2pt}
\noindent
$u_{Only} = 4.222$
\hfill
$u_{RAG} = 0.286$
\hfill
\textbf{Route:} RAG \cmark

\examplesep

\noindent\textbf{Question} \\
Who in verse sailed to the land where the bong tree grows?

\examplesep

\noindent\textbf{Gold Answer} \\
The owl and the pussycat

\examplesep

\noindent\textbf{LLM-only Answer} \\
Alfred Noyes in his poem ``The Highwayman'' sailed to the land where the bong tree grows.

\examplesep

\noindent\textbf{RAG Answer \cmark} \\
The Owl and the Pussy Cat sailed to the land where the bong tree grows.
\end{tcolorbox}

\begin{tcolorbox}[raghelpbox]
\exampletitle{Case 4: RAG fixes an incorrect surname answer}

\noindent
\textbf{ID:} dpql\_4593\_2651
\hfill
$\alpha = 0.10$
\hfill
$t_1 = 0.286$
\hfill
$t_2 = 0.748$

\vspace{2pt}
\noindent
$u_{Only} = 3.304$
\hfill
$u_{RAG} = 0.286$
\hfill
\textbf{Route:} RAG \cmark

\examplesep

\noindent\textbf{Question} \\
What is the surname of the title character of Anne of Green Gables?

\examplesep

\noindent\textbf{Gold Answer} \\
Shirley

\examplesep

\noindent\textbf{LLM-only Answer} \\
Cuthbert is not the surname of the title character of Anne of Green Gables.

\examplesep

\noindent\textbf{RAG Answer \cmark} \\
The surname of the title character of Anne of Green Gables is Shirley.
\end{tcolorbox}

\end{minipage}

\caption{
Qualitative examples where BalanceRAG routes to the RAG branch.
The LLM-only branch has high uncertainty and gives an incorrect answer, while the RAG branch
has lower uncertainty and produces the correct answer.
}
\label{fig:qualitative_only_wrong_rag_correct}
\end{figure}


\begin{figure}[htbp]
\centering
\begin{minipage}{0.94\linewidth}

\begin{tcolorbox}[bothcorrectbox]
\exampletitle{Case 5: Both branches are correct, but LLM-only is selected earlier}

\noindent
\textbf{ID:} dpql\_6316\_2767
\hfill
$\alpha = 0.10$
\hfill
$t_1 = 0.286$
\hfill
$t_2 = 0.748$

\vspace{2pt}
\noindent
$u_{Only} = 0.286$
\hfill
$u_{RAG} = 2.139$
\hfill
\textbf{Route:} Only \cmark

\examplesep

\noindent\textbf{Question} \\
Who co-starred with George Clooney in the 2013 movie Gravity?

\examplesep

\noindent\textbf{Gold Answer} \\
Sandra Bullock

\examplesep

\noindent\textbf{LLM-only Answer \cmark} \\
Sandra Bullock co-starred with George Clooney in the 2013 movie Gravity.

\examplesep

\noindent\textbf{RAG Answer} \\
Sandra Bullock co-starred with George Clooney in the 2013 movie Gravity.
\end{tcolorbox}

\begin{tcolorbox}[bothcorrectbox]
\exampletitle{Case 6: Both branches answer correctly, so the cheaper LLM-only route is sufficient}

\noindent
\textbf{ID:} qf\_3204\_522
\hfill
$\alpha = 0.10$
\hfill
$t_1 = 0.286$
\hfill
$t_2 = 0.748$

\vspace{2pt}
\noindent
$u_{Only} = 0.286$
\hfill
$u_{RAG} = 2.020$
\hfill
\textbf{Route:} Only \cmark

\examplesep

\noindent\textbf{Question} \\
If you were eating a Knickerbocker Glory, what would you be eating?

\examplesep

\noindent\textbf{Gold Answer} \\
An ice-cream sundae

\examplesep

\noindent\textbf{LLM-only Answer \cmark} \\
A Knickerbocker Glory is a type of ice cream sundae.

\examplesep

\noindent\textbf{RAG Answer} \\
A Knickerbocker Glory is a type of ice cream sundae.
\end{tcolorbox}

\end{minipage}

\caption{
Qualitative examples where both branches produce correct answers, but BalanceRAG
selects the cheaper LLM-only route. These cases illustrate that always invoking RAG is
unnecessary when the LLM-only branch is already sufficiently confident.
}
\label{fig:qualitative_both_correct_choose_only}
\end{figure}

\end{document}